\documentclass{article}

\usepackage{microtype}
\usepackage{amsmath}
\usepackage{amssymb}
\usepackage{amsthm}
\usepackage{xcolor}
\usepackage{enumitem}
\usepackage{graphicx}
\usepackage{caption}
\usepackage[labelformat=simple]{subcaption}
\graphicspath{{Figures/}}
\DeclareGraphicsExtensions{.pdf}

\usepackage[T1]{polski}
\usepackage[polish,english]{babel}
\selectlanguage{english}
\usepackage{hyperref}
\usepackage[accepted]{myStyle}

\newcommand{\myTitle}{Unique Properties of Flat Minima in Deep Networks}
\newcommand{\R}{\mathbb{R}}

\newcommand{\vv}{ \boldsymbol{v} }
\newcommand{\uu}{ \boldsymbol{u} }

\newcommand{\rr}{ \boldsymbol{r} }
\newcommand{\zz}{ \boldsymbol{z} }
\newcommand{\qq}{ \boldsymbol{q} }
\newcommand{\XX}{ \boldsymbol{X} }
\newcommand{\optmlOper}{ \boldsymbol{T} }
\newcommand{\optmlOperScalar}{ \tau }
\newcommand{\UU}{ \boldsymbol{U} }
\newcommand{\bb}{ \boldsymbol{b} }
\newcommand{\BB}{ \boldsymbol{B} }
\newcommand{\RR}{ \boldsymbol{R} }
\newcommand{\VV}{ \boldsymbol{V} }
\newcommand{\MM}{ \boldsymbol{M} }

\newcommand{\bAA}{ \boldsymbol{A} }
\newcommand{\bAlpha}{\boldsymbol{\alpha}}
\newcommand{\Identity}{ \boldsymbol{I} }
\newcommand{\bPhi}{ \boldsymbol{\Phi} }
\newcommand{\SSS}{ \boldsymbol{S} }

\newcommand{\Hess}{ \boldsymbol{H} }
\newcommand{\HessTag}{ \hat{\boldsymbol{H}} }
\newcommand{\weights}{\boldsymbol{W}}
\newcommand{\wideminset}{\Omega_0}
\newcommand{\minset}{\Omega}
\newcommand{\weightsPointVec}{ {\boldsymbol{w}} }
\newcommand{\zeroVec}{ \boldsymbol{0} }
\newcommand{\loss}{\ell}
\newcommand{\argmin}{\text{argmin}}
\newcommand{\vectorize}[1]{ \text{vec}\left( #1 \right) }
\newcommand{\E}[1]{ \hat{ \mathbb{E} } \left[ #1 \right] }
\newcommand{\Ei}[1]{\mathrm{E} [ #1 ] }
\newcommand{\EmpCovMat}[1]{ \hat{ \boldsymbol{\Sigma}}_{#1} }

\newcommand{\norm}[1]{\left\lVert #1 \right\rVert}
\newcommand{\normi}[1]{\| #1 \|}
\newcommand{\ie}{\emph{i.e.} }
\newcommand{\eg}{\emph{e.g.} }
\newcommand{\bPsi}{ \boldsymbol{\Psi}}
\newcommand{\scndPrtDev}[3]{\frac{\partial^2}{ \partial {#1}_{#2} \partial {#1}_{#3}}}

\makeatletter
\newcommand*{\addFileDependency}[1]{
	\typeout{(#1)}
	\@addtofilelist{#1}
	\IfFileExists{#1}{\typeout{File #1 O.K.}}{\typeout{No file #1.}}
}
\makeatother


\icmltitlerunning{\myTitle}

\begin{document}

\newtheorem{theorem}{Theorem}
\newtheorem{lemma}{Lemma}
\newtheorem{prop}{Proposition}
\newtheorem{claim}{Claim}
\newtheorem{observation}{Observation}
\newtheorem{definition}{Definition}

\twocolumn[
\icmltitle{\myTitle}

\begin{icmlauthorlist}
	\icmlauthor{Rotem Mulayoff}{technion}
	\icmlauthor{Tomer Michaeli}{technion}
\end{icmlauthorlist}

\icmlaffiliation{technion}{Department of Electrical Engineering, Technion -- Israel Institute of Technology, Haifa, Israel}

\icmlcorrespondingauthor{Rotem Mulayoff}{rotem.mulayof@gmail.com}

\icmlkeywords{Machine Learning, ICML, Flat Minima, Hessian}

\vskip 0.3in
]


\printAffiliationsAndNotice{}

\begin{abstract}
	It is well known that (stochastic) gradient descent has an implicit bias towards flat minima. In deep neural network training, this mechanism serves to screen out minima. However, the precise effect that this has on the trained network is not yet fully understood. In this paper, we characterize the flat minima in linear neural networks trained with a quadratic loss. First, we show that linear ResNets with zero initialization necessarily converge to the flattest of all minima. We then prove that these minima correspond to nearly balanced networks whereby the gain from the input to any intermediate representation does not change drastically from one layer to the next. Finally, we show that consecutive layers in flat minima solutions are coupled. That is, one of the left singular vectors of each weight matrix, equals one of the right singular vectors of the next matrix. This forms a distinct path from input to output, that, as we show, is dedicated to the signal that experiences the largest gain end-to-end. Experiments indicate that these properties are  characteristic of both linear and nonlinear models trained in practice.
\end{abstract}

\section{Introduction}\label{sec:Intro}
Optimization methods can have implicit biases towards certain solutions \cite{strand1974theory,morgan1990generalization,neyshabur2014search}. In the context of deep network training, such biases have been shown to play key roles in shaping the properties of the learned model. For example, in binary classification of linearly separable data, among all linear separators that achieve the global minimum of the training loss, gradient descent (GD) converges to the maximum margin separator. This is true for shallow networks \cite{soudry2018implicit}, as well as for deep linear fully-connected models \cite{gunasekar2018implicit} and deep nonlinear networks with homogeneous activation functions  \cite{lyu2020gradient}. Implicit biases have been studied in many different context, including for linear convolutional networks \cite{gunasekar2018implicit}, matrix factorization \cite{gunasekar2017implicit}, weight normalization \cite{wu2019implicit}, and with different loss functions \cite{gunasekar2018characterizing}.

Perhaps the simplest mechanism through which GD and stochastic GD (SGD) can screen out solutions, is their inability to stably converge to sharp minima \cite{jastrzkebski2017three,wu2018sgd,simsekli2019tail}. In fact, in some cases, GD can only converge to the flattest of all minima (see Section \ref{sec:Preliminaries}). However, interestingly, the effect that this has on the resulting trained model, is not yet fully understood. \citet{keskar2016large} suggested that flat minima tend to generalize better. This was somewhat supported by follow up works, showing that in SGD, larger step sizes and smaller batch sizes impose convergence to flatter minima, which indeed generalize better empirically  \cite{jastrzkebski2017three,hoffer2017train,masters2018revisiting,smith2017bayesian}. However, \citet{dinh2017sharp} showed that for networks with ReLU activations, a re-parametrization of the weights can make any minimum arbitrarily sharper (without affecting generalization). This suggests that minimum sharpness is not directly related to generalization, thus begging the question: What \emph{does} the sharpness of the minimum affect?

Our goal in this paper is to unveil the properties of flat minima in deep neural networks. We specifically focus on linear models trained with a quadratic loss and define the sharpness of a minimum to be its maximal Hessian eigenvalue, which is the factor affecting stable convergence of GD and SGD. We start by showing that all minima become sharper as the network gets deeper. We discuss and illustrate the implications this has on the training process. 
We then move on to study the flattest minimum solutions. We prove that these networks possess a special structure, whereby the gain from the input to any intermediate layer is well behaved. Furthermore, consecutive layers in those solutions are coupled, forming a distinct path from input to output, which is dedicated to the signal that experiences the largest gain end-to-end. Interestingly, similar properties were recently shown to arise in deep linear networks for binary classification \cite{ji2019gradient}. However, in our case of vector-valued regression, the behaviors turn out to be more complex. We empirically illustrate that the properties we predict are also characteristic of nonlinear networks trained in practice.

\section{Problem Setting and Motivation}\label{sec:Preliminaries}
Consider an \(m\)-layer linear network whose $j$th layer performs multiplication by \( \weights_j \in \R^{d_{j} \times d_{j-1}} \). The end-to-end function \( f_\weightsPointVec: \R^{d_x} \mapsto \R^{d_y} \) implemented by this network is
\begin{equation}\label{eq:fDef}
	{f}_\weightsPointVec(x) = \weights_m \weights_{m-1} \cdots \weights_1 x,
\end{equation}
where we denoted $ \weightsPointVec = \vectorize{[\weights_1,\weights_2,\ldots,\weights_m]} \in \R^{N} $. Here, \( N = \sum_{j=1}^{m} d_{j} \times d_{j-1}  \) and we use the convention that $d_0=d_x$ and $d_{m}=d_y$. To ensure that the network can implement any linear function from \( \R^{d_x} \) to \( \R^{d_y} \), we assume that the dimensions of the internal representations are not smaller than those of the input or output, namely \( \min_j \{d_j\} \geq \min \{d_x,d_y\} \).

We focus on the quadratic training loss
\begin{equation}\label{eq:VectorLossFuncDef}
	\loss(\weightsPointVec) = \E{\big\| y-f_\weightsPointVec(x) \big\|^2},
\end{equation}
where \( \hat{ \mathbb{E} } \) denotes empirical mean over paired examples \( \{ (x_i,y_i) \}_{i = 1}^n \).  Note that if the input $ x $ lies in a low dimensional subspace, (\eg if the number of training examples $ n $ is smaller than the ambient dimension $ d_x $), then there exist directions $\tilde{\weightsPointVec}$ in parameter space such that $\loss(\weightsPointVec)=\loss(\weightsPointVec+\alpha \tilde{\weightsPointVec})$ for every $\weightsPointVec$ and every $\alpha\in\R$. Minima that differ along these directions may correspond to different end-to-end functions, yet they have the exact same loss landscape around them. This implies that the sharpness of a minimum is indifferent to the end-to-end function in our setting, and in particular it is not associated with generalization. In our scenario, the sharpness criterion is only sensitive to \emph{different implementations} of the same end-to-end function.

In light of this understanding, we assume that the empirical second-order moment matrix of \( x \), denoted by \(\EmpCovMat{x}\), is full rank. In this case, the end-to-end function minimizing the loss is unique and can be written as ${f}_{\weightsPointVec^*}(x)=\optmlOper x$, where
\begin{equation}\label{eq:OptimalLinearTransform}
	\optmlOper = \EmpCovMat{yx} \EmpCovMat{x}^{-1}
\end{equation}
with \( \EmpCovMat{yx} \) denoting the empirical cross second-order moment between \( y \) and \( x \). Thus, the set of global minima of \( \loss(\weightsPointVec) \) is
\begin{equation}\label{eq:MinimaSet}
	\minset = \left\{ \weightsPointVec \in \R^{N} \ : \ \weights_m \weights_{m-1} \cdots \weights_1 =  \optmlOper \right\}.
\end{equation}

Among all minima in $\minset$, GD and SGD can only stably converge to the flat ones (see App.~\ref{appendix:Stability}). Specifically, denote by \( \Hess_{\weightsPointVec} \) the Hessian matrix of \( \loss(\weightsPointVec) \) at $\weightsPointVec$ and define the sharpness of a minimum point \( \weightsPointVec^* \) to be \(\lambda_{\max} \big( \Hess_{\weightsPointVec^*} \big) \). Then $\weightsPointVec^*$ is not stable for GD and SGD if its sharpness is larger than \( 2/\eta \), where $\eta$ is the step-size \cite{wu2018sgd}. In other words, the larger the step size, the smaller the set of minima that are accessible by the optimizer. Particularly, when using the largest step size allowing convergence, we can only reach elements in the set of \emph{flattest} global minima,
\begin{equation}\label{eq:WidestMinimaSet}
	\wideminset = \underset{\weightsPointVec  \in  \minset}{\argmin} \   \lambda_{\max} \big( \Hess_{\weightsPointVec} \big).
\end{equation}

Our goal in this paper is to unveil the properties of solutions in $\wideminset$. Our motivation for doing so goes beyond large step-size training. Indeed, in many cases convergence to a point in $\wideminset$ is guaranteed also with a small step-size. For example, we have the following result for gradient flow (GD with an infinitesimal step size) and for GD with a small step size (see proof in App.~\ref{appendix:WidestMinimaConvergenceProof}).
\begin{lemma}\label{lemma:WidestMinimaConvergence}
	Assume that $ \EmpCovMat{x} = \Identity $, $d_y=d_x$, and that the weight matrices are all square and initialized to $\Identity $. Then:
	\begin{enumerate}[label=\roman*.]
		\item Gradient flow can only converge to a flattest minimum. \label{lem:convrgGF}
		
		\item If $ \optmlOper $ is positive definite and its top singular value is $\sigma_{\max}(\optmlOper)$, then GD with step size $\eta\leq\frac{1}{ 2m}\min\{1,(\sigma_{\max}(\optmlOper))^{-2(1-\frac{1}{m})}\}$ necessarily converges to a flattest minimum at a linear rate. \label{lem:convrgGD}
	\end{enumerate}
\end{lemma}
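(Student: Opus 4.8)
The plan is to reduce the whole statement to one structural fact about $\minset$ and then to track where the two dynamics land. Since $\EmpCovMat{x}=\Identity$ we may write $\loss(\weightsPointVec)=\big\|\weights_m\cdots\weights_1-\optmlOper\big\|_F^2+\mathrm{const}$, so at any $\weightsPointVec\in\minset$ the Hessian quadratic form along a perturbation $(\VV_1,\dots,\VV_m)$ equals $2\big\|\sum_{j=1}^m\BB_j\VV_j\CC_j\big\|_F^2$, where $\BB_j=\weights_m\cdots\weights_{j+1}$ and $\CC_j=\weights_{j-1}\cdots\weights_1$ (with $\BB_m=\CC_1=\Identity$). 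The fact I would establish first — and this is the main obstacle — is the identity $\min_{\weightsPointVec\in\minset}\lambda_{\max}(\Hess_{\weightsPointVec})=2m\,\sigma_{\max}(\optmlOper)^{2(1-1/m)}$, with the minimum attained at every \emph{balanced} solution, i.e.\ one with $\weights_{j+1}^{\top}\weights_{j+1}=\weights_j\weights_j^{\top}$ for all $j$. Everything after that is bookkeeping that identifies the limits of gradient flow and of gradient descent with balanced global minima.

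For the upper bound I would use the triangle and operator-norm inequalities, which give $\lambda_{\max}(\Hess_{\weightsPointVec})\le 2\sum_{j=1}^m\|\BB_j\|_2^2\,\|\CC_j\|_2^2$ for every $\weightsPointVec\in\minset$, together with the standard fact that balancedness aligns the layers' singular vectors into a chain, so that at a balanced global minimum each $\weights_j$ has singular values $\sigma_i(\optmlOper)^{1/m}$, hence $\|\BB_j\|_2=\sigma_{\max}(\optmlOper)^{(m-j)/m}$ and $\|\CC_j\|_2=\sigma_{\max}(\optmlOper)^{(j-1)/m}$, making the bound tight at the value $2m\,\sigma_{\max}(\optmlOper)^{2(1-1/m)}$. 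For the matching lower bound over \emph{all} $\weightsPointVec\in\minset$ I would test the Hessian form against the rank-one perturbation obtained by propagating the leading singular pair $\hat\uu,\hat\vv$ of $\optmlOper$ through the network: setting $\boldsymbol\pi_j=\BB_j^{\top}\hat\uu$ and $\boldsymbol\kappa_j=\CC_j\hat\vv$, the inequality $\big\|\sum_j\BB_j\VV_j\CC_j\big\|_F\ge\langle\hat\uu\hat\vv^{\top},\sum_j\BB_j\VV_j\CC_j\rangle$, maximized over unit perturbations, yields $\lambda_{\max}(\Hess_{\weightsPointVec})\ge 2\sum_{j=1}^m\|\boldsymbol\pi_j\|^2\|\boldsymbol\kappa_j\|^2$. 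The crux is the invariant $\boldsymbol\pi_{j-1}^{\top}\boldsymbol\kappa_j=\boldsymbol\pi_j^{\top}\weights_j\boldsymbol\kappa_j=\boldsymbol\pi_j^{\top}\boldsymbol\kappa_{j+1}$, which is independent of $j$ and equals $\hat\uu^{\top}\optmlOper\hat\vv=\sigma_{\max}(\optmlOper)$; Cauchy--Schwarz then gives $\|\boldsymbol\pi_j\|\,\|\boldsymbol\kappa_{j+1}\|\ge\sigma_{\max}(\optmlOper)$ for $j=0,\dots,m-1$, and multiplying these and telescoping (using $\boldsymbol\pi_0=\optmlOper^{\top}\hat\uu$ with $\|\boldsymbol\pi_0\|=\sigma_{\max}(\optmlOper)$ and $\|\boldsymbol\pi_m\|=1$) gives $\prod_{j=1}^m\|\boldsymbol\pi_j\|\,\|\boldsymbol\kappa_j\|\ge\sigma_{\max}(\optmlOper)^{m-1}$, so by AM--GM $\sum_j\|\boldsymbol\pi_j\|^2\|\boldsymbol\kappa_j\|^2\ge m\,\sigma_{\max}(\optmlOper)^{2(1-1/m)}$, which closes the characterization.

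Part \ref{lem:convrgGF} is then immediate. Along gradient flow each quantity $\weights_{j+1}^{\top}\weights_{j+1}-\weights_j\weights_j^{\top}$ is conserved, so from the initialization $\weights_j=\Identity$ the trajectory is balanced for all time; if it converges, the limit is a critical point of $\loss$, and invoking the known fact that gradient flow from a full-rank balanced initialization converges to a global minimum, the limit is a balanced global minimum, hence an element of $\wideminset$ by the characterization above. (The convergence-to-a-global-minimum statement is the single ingredient here that I would import from the deep-linear-network literature rather than reprove.)

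For part \ref{lem:convrgGD}, positive definiteness makes $\optmlOper$ symmetric, and I would show by induction that with $\weights_j^{(0)}=\Identity$ the gradient-descent iterates obey $\weights_1^{(t)}=\dots=\weights_m^{(t)}=:\VV^{(t)}$ with $\VV^{(t)}$ symmetric and commuting with $\optmlOper$: granting this at step $t$, the gradient $\nabla_{\weights_j}\loss=2(\VV^{(t)})^{m-1}\big((\VV^{(t)})^m-\optmlOper\big)$ is the same for all $j$ and is again a polynomial in $\VV^{(t)}$ and $\optmlOper$. Diagonalizing $\optmlOper$ then decouples the update into the scalar recursions $x\leftarrow x-2\eta\,x^{m-1}(x^m-\lambda_i)$ started at $x^{(0)}=1$, i.e.\ one-dimensional gradient descent on $(x^m-\lambda_i)^2$ with effective step $\eta/m$; the stated bound on $\eta$ — in which $\tfrac1m\sigma_{\max}(\optmlOper)^{-2(1-1/m)}$ is exactly $2/\lambda_{\max}(\Hess)$ at the limit and the factor $\min\{1,\cdot\}$ covers the regime $\sigma_{\max}(\optmlOper)<1$ — is precisely what keeps each recursion inside the region on which the update map contracts toward the positive root $\lambda_i^{1/m}$, giving monotone, linearly convergent iterates. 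Verifying this \emph{global} (not merely local) contraction estimate, across the possible magnitudes of $\lambda_i$, is the one genuinely computational step. It follows that $\VV^{(t)}$ converges linearly to the positive-definite $m$-th root $\VV^{*}$ of $\optmlOper$, so $\weightsPointVec^{(t)}$ converges linearly to the balanced global minimum $(\VV^{*},\dots,\VV^{*})$, which lies in $\wideminset$ by the first two paragraphs.
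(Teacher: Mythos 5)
Your overall strategy is sound but organized quite differently from the paper's, which is essentially a two-line citation argument: part (ii) invokes \citet{nar2018step} for the fact that the iterates converge linearly to $\weights_i=\optmlOper^{1/m}$, part (i) invokes \citet{arora2018optimization} for conservation of $\weights_{i+1}^T\weights_{i+1}-\weights_i\weights_i^T$ along gradient flow (so the limit is balanced with all layers sharing singular values $\sigma_i(\optmlOper)^{1/m}$), and both parts then conclude via Theorem~\ref{theorem:SufficientCondition}, whose proof lives elsewhere. You instead re-derive inline the characterization $\min_{\weightsPointVec\in\minset}\lambda_{\max}(\Hess_{\weightsPointVec})=2m\,\sigma_{\max}(\optmlOper)^{2(1-1/m)}$ with balanced solutions attaining it. Your lower-bound argument (testing the Hessian form against the rank-one perturbations built from $\hat\uu,\hat\vv$, using the telescoping invariant $\boldsymbol\pi_j^{\top}\boldsymbol\kappa_{j+1}=\sigma_{\max}(\optmlOper)$, Cauchy--Schwarz, and AM--GM) is correct and is in fact close in spirit to the paper's proofs of Lemma \ref{lemma:myInequality} and of Lemmas S2--S3 in Appendix VI, just deployed directly inside this lemma. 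Your upper bound via triangle inequality and sub-multiplicativity at a balanced point also checks out. What your route buys is self-containedness and an explicit explanation of \emph{why} balanced solutions are widest; what it costs is that you have taken on the burden of reproving the imported results.

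That cost is where the one genuine gap sits. In part (ii), after correctly reducing the dynamics to the scalar recursions $x\leftarrow x-2\eta x^{m-1}(x^m-\lambda_i)$ started at $x=1$ (the symmetry/commutation induction is fine), you assert that the stated step-size bound yields monotone, linearly convergent iterates toward $\lambda_i^{1/m}$, and you explicitly defer "verifying this global (not merely local) contraction estimate." That verification is not routine: the one-dimensional objective $(x^m-\lambda_i)^2$ is non-convex, the contraction factor $1-2\eta x^{m-1}\sum_{k=0}^{m-1}x^k\lambda_i^{(m-1-k)/m}$ only satisfies the crude bounds $-1\le c<1$ at the edge of the stated step-size range, and one must handle overshoot and the regimes $\lambda_i\gtrless 1$ separately to get a uniform rate. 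This is precisely the content the paper outsources to \citet{nar2018step}; without either doing that computation or citing it, part (ii) of your argument is incomplete. A smaller remark on part (i): you import "gradient flow from a full-rank balanced initialization converges to a global minimum," which is both stronger than needed (the lemma is conditional on convergence, and the paper likewise just assumes the limit is a global minimum) and not unconditionally true without further hypotheses; you can simply drop it.
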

Note the relevance of this lemma to the practice of zero initialization for residual networks (ResNets) \cite{zhang2018fixup}. Indeed, linear networks with identity initialization can be viewed as linear ResNets with zero initialization.

\section{Warm-Up: Scalar Networks}\label{sec:WarmUp}
Before we present our main results, it is insightful to examine the simple case where the input, output and all intermediate representations, are scalars. In this case, the end-to-end function \( f_\weightsPointVec(x) \) is given by
\begin{equation}
	f_\weightsPointVec(x) = \prod_{j=1}^{m} w_j x,
\end{equation}
where $ \weightsPointVec = [w_1,w_2,\ldots,w_m]^T \in \R^{m} $, and the quadratic loss is minimized when ${f}_{\weightsPointVec^*}(x)=\optmlOperScalar x$, with \( \optmlOperScalar = {\hat{\sigma}_{xy}}/{\hat{\sigma}_x^2} \). Thus, the set of global minima is given by 
\begin{equation}
\minset = \Big\{ \weightsPointVec \in \R^{m} \ : \ \prod_{j=1}^{m} w_j = \optmlOperScalar \Big\}.
\end{equation}
Observe that these global minima lie within connected valleys. For example, in the case of two layers, $\minset$ corresponds to the hyperbola $w_2=\tau/w_1$, shown in Fig.~\ref{fig:2D_example}. Parts of these valleys are sharper than others, and as the theory predicts, GD indeed does not converge to a narrow part of the valley, even when initialized nearby such a global minimum.

\begin{figure}[t!]
	\begin{subfigure}[t]{1.5in}
		\includegraphics[width=1.5in,trim={0 0 0 0},clip]{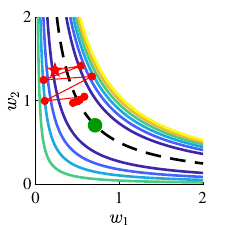}
		\caption{\centering\label{fig:2D_example_non_moment}GD without momentum}
	\end{subfigure}\begin{subfigure}[t]{1.5in}
		\includegraphics[width=1.5in,trim={0 0 0 0},clip]{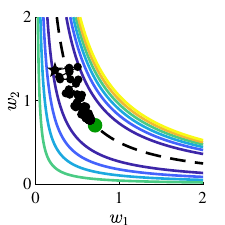}
		\caption{\centering\label{fig:2D_example_moment}GD with momentum}
	\end{subfigure}\begin{subfigure}[t]{0.5in}
		\includegraphics[width=0.7in,trim={1.3in 0in 0 0.15in},clip]{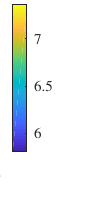}
	\end{subfigure}
	\caption{Level sets of the loss for a two-layer scalar network.	The dashed line corresponds to the set of global minima \( \minset \), and the green dot to the set of flattest minima \( \wideminset \). When GD is initialized nearby a sharp minimum (star), it does not converge to that minimum, and rather traverses the valley of minima until reaching a flat enough point. This occurs both with and without momentum.}
	\label{fig:2D_example}
\end{figure}

Direct computation (see App.~\ref{appendix:ScalarCaseWidestMinimum}) shows that for $\weightsPointVec\in\minset$,
\begin{equation}
\frac{\partial^2 \loss(\weightsPointVec)}{\partial w_q \partial w_k}
=\frac{ 2 \hat{\sigma}_x^2  \optmlOperScalar^2 }{w_k w_q}.
\end{equation}
Therefore, letting \( \zz = [w_1^{-1},w_2^{-1},\ldots,w_m^{-1}]^T \), we can express the Hessian matrix at a global minimum as
\begin{equation}
\Hess_\weightsPointVec = 2 \hat{\sigma}_x^2 \optmlOperScalar^2 \zz \zz^T.
\end{equation}
Evidently, the Hessian for scalar networks is a rank-one matrix whose (single) nonzero eigenvalue is 
\begin{equation}\label{eq:LambdaMaxScalar}
\lambda_{\max}\big( \Hess_\weightsPointVec \big) = 2 \hat{\sigma}_x^2 \optmlOperScalar^2\|\zz\|^2 = 2 \hat{\sigma}_x^2 \optmlOperScalar^2 \sum_{j = 1}^{m} \frac{1}{w_j^2} .
\end{equation}
To determine the flattest minima, we need to seek for the weights that minimize $\lambda_{\max}\big( \Hess_\weightsPointVec \big)$. This boils down to solving the constrained optimization problem
\begin{equation}
\min_{\weightsPointVec \in \R^m } \   \sum_{j = 1}^{m} \frac{1}{w_j^2} \qquad \text{s.t.} \qquad \prod_{j=1}^{m} w_j = \optmlOperScalar.
\end{equation}

As we show in App.~\ref{appendix:ScalarCaseWidestMinimum}, the minimum of this problem is attained when \( | w_1 | = | w_2 | = \cdots = | w_m | \), so that the set of flattest minima is given by
\begin{equation}\label{eq:WidestMinimaScalar}
\wideminset = \Big\{ \weightsPointVec \ :  \ |w_j| = | \optmlOperScalar |^{\frac{1}{m}}, \,\prod_{j = 1}^{m} \text{sgn}(w_j) =  \text{sgn}( \optmlOperScalar ) \Big\}.
\end{equation}
Substituting $|w_j| = | \optmlOperScalar |^{1/m}$ into \eqref{eq:LambdaMaxScalar}, we obtain that the sharpness of the flattest minima is given by
\begin{equation}\label{eq:LargestWidthScalar}
\min_{\weightsPointVec\in\RR^m} \lambda_{\max}( \Hess_\weightsPointVec) = 2  m \hat{\sigma}_x^2 \optmlOperScalar^{2(1-\frac{1}{m})}.
\end{equation}
Note that although there exist infinitely many global minima, there are far fewer flattest minima. Specifically, we see that for scalar networks, \( \wideminset \) is a discrete set of cardinality \( 2^{m-1} \). Geometrically speaking, within each connected valley of global minima, we have only one flattest minimum point. This property carries over to the vector case, in the sense that $ \wideminset$ is always a set of measure zero within $\minset$.

This simple exercise of analyzing scalar networks already reveals several interesting properties of flat minima.
\begin{enumerate}
	\item \textbf{Balancedness.} Note from \eqref{eq:WidestMinimaScalar} that the flattest minima correspond to networks, which are balanced in the sense that all their layers have the same weight magnitude. This property turns out to break in higher dimensions. However, as we will see, the flattest solutions are always at least \emph{nearly} balanced, and they exhibit interesting coupling properties.
	
	\item \textbf{Step-size and depth.} Observe from~\eqref{eq:LargestWidthScalar} that the sharpness of the flattest minima scales roughly linearly with the network's depth, $m$. Thus, the deeper the network, the smaller the maximal step-size that allows convergence. As we will see, this property persists in higher dimensions. Interestingly, although this behavior is known \cite{nar2018step}, it has not been previously derived from minima sharpness considerations.
	
	\item \textbf{Valley dimensions.} We saw that the Hessian at a global minimum is always rank-1. This implies that at every minimum point, $m-1$ orthogonal directions point into the valley, whereas only one direction points to an ascent slope. We will see that a similar phenomenon occurs also in higher dimensions.
\end{enumerate}

\begin{figure}[t!]
	\begin{subfigure}[t!]{1.4in}
		\includegraphics[width=1.5in,trim={0 0 0 0},clip]{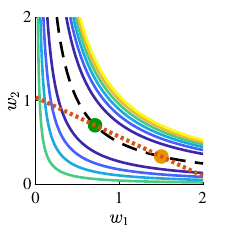}
		\caption{\centering\label{fig:DeceivingExample1}Minima interpolation}
	\end{subfigure}\begin{subfigure}[t!]{2in}
		\includegraphics[width=2in,trim={0 0 0 0},clip]{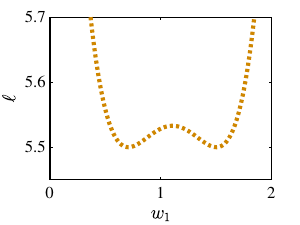}
		\caption{\centering\label{fig:DeceivingExample2}Loss along the dashed line}
	\end{subfigure}
	\caption{A two-layer scalar network example for the misleading nature of minima interpolation.  \protect\subref{fig:DeceivingExample1} We compute the loss along the (dashed) line connecting two global minima, one flattest (green) and one sharp (orange). \protect\subref{fig:DeceivingExample2} Despite having different sharpness in $\R^2$, their sharpness along this 1D cross section are the same. In this setting, this occurs for \emph{any} choice of the non-flattest solution (orange point).}
	\label{fig:DeceivingExample}
\end{figure}

Besides providing a glimpse into the nature of flat minima, the analysis of scalar networks also allows to assess the effectiveness of visualization methods. Particularly, it is common practice to visually compare the sharpness of two minima, $\weightsPointVec^{(1)}$ and $\weightsPointVec^{(2)}$, by plotting the loss along the line connecting them \cite{keskar2016large,jastrzkebski2017three}. One expects that a flat minimum would appear flatter also along this 1D cross-section. However, our scalar network analysis reveals that this is typically incorrect. Let us first take a two-layer example. Figure~\ref{fig:DeceivingExample} shows the loss along the line connecting a flattest minimum point $\weightsPointVec^{(1)}$ and a sharper one, $\weightsPointVec^{(2)}$. As can be seen, along this cross-section, both minima have the same sharpness. This is not a result of some particular choice of $\weightsPointVec^{(2)}$. It turns out that for two-layer scalar networks, the minimas' sharpness along this cross section are always the same, regardless of how sharp $\weightsPointVec^{(2)}$ is in practice. For deeper scalar networks, this is not always the case. However, this visualization is still frequently deceiving (see App.~\ref{appendix:ScalarCaseMinimaInterpolation}).
\begin{lemma}\label{lemma:MinimaInterpolation}
	Consider a scalar linear network. Let $\weightsPointVec^{(1)}$ be a flattest minimum and $\weightsPointVec^{(2)}$ be some other minimum that has the same sign pattern as $\weightsPointVec^{(1)}$. If the interpolation visualization shows that $\weightsPointVec^{(2)}$ is sharper than $\weightsPointVec^{(1)}$, then there exists another minimum, $\weightsPointVec^{(3)}$, which the visualization would show is rather flatter than $\weightsPointVec^{(1)}$. 
\end{lemma}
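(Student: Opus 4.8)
The plan is to first make precise the notion of ``width along the 1D cross-section'' and then exploit the rank-one structure of the Hessian established above for scalar networks. In a linear interpolation plot between two minima $\weightsPointVec^{(a)},\weightsPointVec^{(b)}\in\minset$, the natural measure of the width of a minimum is the curvature (second derivative) of the restricted loss $g(t)=\loss\big((1-t)\weightsPointVec^{(a)}+t\weightsPointVec^{(b)}\big)$ at that endpoint. Writing $\vv=\weightsPointVec^{(b)}-\weightsPointVec^{(a)}$ and letting $\zz^{(a)},\zz^{(b)}$ denote $\zz$ evaluated at $\weightsPointVec^{(a)},\weightsPointVec^{(b)}$, the facts that each endpoint is a critical point and that $\Hess_\weightsPointVec=2\hat{\sigma}_x^2\optmlOperScalar^2\,\zz\zz^T$ on $\minset$ give $g'(0)=g'(1)=0$, $g''(0)=2\hat{\sigma}_x^2\optmlOperScalar^2(\zz^{(a)}\cdot\vv)^2$, and $g''(1)=2\hat{\sigma}_x^2\optmlOperScalar^2(\zz^{(b)}\cdot\vv)^2$. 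Hence ``the visualization shows $\weightsPointVec^{(b)}$ narrower than $\weightsPointVec^{(a)}$'' means precisely $(\zz^{(b)}\cdot\vv)^2>(\zz^{(a)}\cdot\vv)^2$. (If $\optmlOperScalar=0$ the Hessian vanishes at every minimum and the statement is vacuous, so assume $\optmlOperScalar\neq0$.)

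Next I would rewrite these curvatures in terms of the coordinate ratios $a_j:=w_j^{(2)}/w_j^{(1)}$, which are well defined (all $w_j^{(1)}\neq0$ at a minimum), positive (since $\weightsPointVec^{(1)}$ and $\weightsPointVec^{(2)}$ share a sign pattern), and satisfy $\prod_j a_j=\prod_j w_j^{(2)}\big/\prod_j w_j^{(1)}=1$ because both lie in $\minset$. A one-line computation gives $\zz^{(1)}\cdot\vv=\sum_j(a_j-1)$ and $\zz^{(2)}\cdot\vv=\sum_j(1-a_j^{-1})$, so the hypothesis of the lemma reads
\begin{equation}
\Big(\sum_{j=1}^{m}\big(1-a_j^{-1}\big)\Big)^{2}\;>\;\Big(\sum_{j=1}^{m}(a_j-1)\Big)^{2}.\tag{$\star$}
\end{equation}
Incidentally, this explains why the interpolation visualization is never deceiving for $m=2$: there $a_2=a_1^{-1}$, so the two sides of $(\star)$ coincide.

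The key step is to take $\weightsPointVec^{(3)}$ to be the ``reciprocal'' of $\weightsPointVec^{(2)}$ about $\weightsPointVec^{(1)}$, namely $w_j^{(3)}:=w_j^{(1)}/a_j=(w_j^{(1)})^2/w_j^{(2)}$. Then $\prod_j w_j^{(3)}=\prod_j w_j^{(1)}\big/\prod_j a_j=\optmlOperScalar$, so $\weightsPointVec^{(3)}\in\minset$, and $\weightsPointVec^{(3)}\neq\weightsPointVec^{(1)}$ since not every $a_j$ equals $1$. Repeating the curvature computation for the segment from $\weightsPointVec^{(1)}$ to $\weightsPointVec^{(3)}$ with ratios $b_j=w_j^{(3)}/w_j^{(1)}=a_j^{-1}$, the curvature at $\weightsPointVec^{(1)}$ is proportional to $\big(\sum_j(b_j-1)\big)^2=\big(\sum_j(1-a_j^{-1})\big)^2$ (the left side of $(\star)$), while the curvature at $\weightsPointVec^{(3)}$ is proportional to $\big(\sum_j(1-b_j^{-1})\big)^2=\big(\sum_j(a_j-1)\big)^2$ (the right side of $(\star)$). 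Thus the map $\weightsPointVec^{(2)}\mapsto\weightsPointVec^{(3)}$ exactly swaps the two endpoint curvatures, and $(\star)$ now says the curvature at $\weightsPointVec^{(3)}$ is strictly smaller than at $\weightsPointVec^{(1)}$: the interpolation between $\weightsPointVec^{(1)}$ and $\weightsPointVec^{(3)}$ displays $\weightsPointVec^{(3)}$ as the wider minimum, as required.

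There is no serious obstacle once the cross-sectional curvature is written in the symmetric form $\big(\sum_j(a_j-1)\big)^2$ versus $\big(\sum_j(1-a_j^{-1})\big)^2$; the one thing to spot is the involution $a_j\mapsto a_j^{-1}$ — equivalently $w_j^{(2)}\mapsto (w_j^{(1)})^2/w_j^{(2)}$ — which interchanges these two quantities while preserving the product $\prod_j w_j$. The rest is routine book-keeping: checking $\weightsPointVec^{(3)}\in\minset$ and $\weightsPointVec^{(3)}\neq\weightsPointVec^{(1)}$ (both immediate; in fact $\weightsPointVec^{(3)}$ inherits the sign pattern of $\weightsPointVec^{(1)}$, though this is not needed), and observing that the argument never actually uses that $\weightsPointVec^{(1)}$ is a \emph{widest} minimum — only that $\weightsPointVec^{(1)}\in\minset$.
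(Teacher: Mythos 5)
Your proof is correct and follows essentially the same route as the paper's: the same second-order (rank-one Hessian) formalization of endpoint curvature along the connecting line, and the same key construction $w_j^{(3)}=(w_j^{(1)})^2/w_j^{(2)}$, whose reciprocal involution swaps the two endpoint curvatures. The only cosmetic difference is that you keep the comparison in squared form throughout, which lets you skip the AM--GM step the paper uses to drop the absolute values; your closing observation that widestness of $\weightsPointVec^{(1)}$ is never used is also consistent with the paper's argument.
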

As we empirically show in Sec.~\ref{sec:Experiments}, this phenomenon is common also in non-scalar networks with ReLU activations.

\section{Main Results}\label{sec:MainResults}

\begin{figure*}[t!]
	\begin{subfigure}[t]{0.5\linewidth}
		\includegraphics[width=3.7in,trim={0.3in 0 0 0},clip]{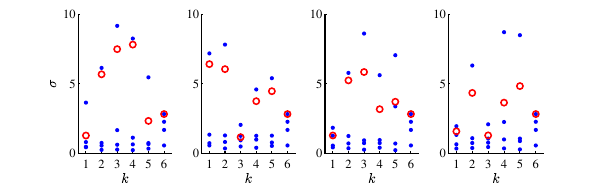}
		\caption{\centering\label{fig:intermediate_gain_linear_sharp}
			Arbitrary global minima}
	\end{subfigure}\begin{subfigure}[t]{0.5\linewidth}
		\includegraphics[width=3.7in,trim={0.3in 0 0 0},clip]{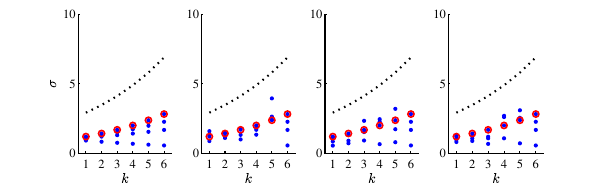}
		\caption{\centering\label{fig:intermediate_gain_linear_wide}
			Flattest global minima}
	\end{subfigure}
	\caption{Intermediate gains versus layer number in deep linear networks. Here we visualize eight randomly chosen implementations of the same end-to-end function $\optmlOper$, where the dimension \( d \) is \( 4 \), and the number of layers \( m \) is always \( 6 \). For each depth $k$, the blue dots depict the singular values of the product of weight matrices from $1$ to \( k \), and the red circle corresponds to the gain of the top singular vector of $\optmlOper$. The black doted line corresponds the bound of Theorem \ref{theorem:PartialProducts}(ii). \protect\subref{fig:intermediate_gain_linear_sharp}~For arbitrary global minima, the intermediate gains can be high. \protect\subref{fig:intermediate_gain_linear_wide}~For flattest solutions, the maximal intermediate gain is well behaved, and $\vv$ is a singular vector of all partial matrix products.}
	\label{fig:intermediate_gain_linear}
\end{figure*}

We now move on to the general case of non-scalar deep linear networks. To simplify notations, we denote
\begin{equation}
\prod_{j=q}^{k} \weights_j \triangleq \weights_k \weights_{k-1} \cdots \weights_q,
\end{equation}
where a product over an empty set (\( q>k \)) is defined to be the identity matrix \( \Identity \). We make the following assumptions.
\begin{enumerate}
	\item[\textbf{A1}] The network has the capacity to implement any linear function from $\R^{d_x}$ to $\R^{d_y}$, namely \( \min\{d_i\} \geq \min\{ d_x, d_y \} \).
	
	\item[\textbf{A2}]  The data is white, namely \( \EmpCovMat{x} = \Identity \).
\end{enumerate}

We begin by identifying the structure of the Hessian matrix at a global minimum (see App.~\ref{appendix:HessianDerivation}).
\begin{lemma}[Hessian structure] \label{lemma:HessianStructure}
	Assume A1. If \( \weightsPointVec \in \minset \), then
	\begin{equation} \label{eq:HessianStructure}
		\Hess_{\weightsPointVec} = 2 \bPhi \bPhi^T,
	\end{equation}
	where \( \bPhi = [\bPhi_1^T , \bPhi_2^T, \ldots, \bPhi_m^T ]^T \), with
	\begin{equation}\label{eq:PhiDef}
		\bPhi_k = \Bigg(\prod_{j=1}^{k-1} \weights_j\EmpCovMat{x}^{\frac{1}{2}}  \Bigg) \otimes \Bigg( \prod_{i=k+1}^{m} \weights_i \Bigg)^T.
	\end{equation}
	Here \( \otimes \) denotes the Kronecker product.
\end{lemma}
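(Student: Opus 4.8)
The plan is to compute the Hessian of $\loss(\weightsPointVec)$ directly by differentiating twice, using the vectorization identity $\vectorize{ABC} = (C^T \otimes A)\vectorize{B}$ to linearize the dependence of $f_\weightsPointVec$ on each individual weight matrix. First I would write the loss as $\loss(\weightsPointVec) = \E{\norm{y - \weights_m \cdots \weights_1 x}^2}$ and note that, holding all weight matrices except $\weights_k$ fixed, the end-to-end map is affine in $\weights_k$: indeed $\weights_m\cdots\weights_1 x = \big(\prod_{i=k+1}^m \weights_i\big)\weights_k\big(\prod_{j=1}^{k-1}\weights_j\big)x$, so by the Kronecker identity the residual is affine in $\vectorize{\weights_k}$. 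The key observation is that at a global minimum the residual $y - f_\weightsPointVec(x)$ vanishes in a specific sense: since $\weightsPointVec\in\minset$ means $\weights_m\cdots\weights_1 = \optmlOper = \EmpCovMat{yx}\EmpCovMat{x}^{-1}$, the first-order (gradient) contribution of the residual's curvature drops out, and only the term quadratic in the first derivatives of $f_\weightsPointVec$ survives. This is the standard Gauss–Newton phenomenon: for a squared loss $\E{\norm{r(\weightsPointVec)}^2}$, the Hessian is $2\,\E{J_r^T J_r} + 2\,\E{\sum_i r_i \nabla^2 r_i}$, and the second term is annihilated at an interpolating minimum — here because $\E{r(\weightsPointVec)\,(\text{stuff linear in }x)} = 0$ exactly when $\weights_m\cdots\weights_1 = \optmlOper$, by the normal equations defining $\optmlOper$.

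Concretely, I would (i) compute the partial Jacobian of $f_\weightsPointVec(x)$ with respect to $\vectorize{\weights_k}$: using $\vectorize{ABC} = (C^T\otimes A)\vectorize{B}$ with $A = \prod_{i=k+1}^m \weights_i$, $B = \weights_k$, $C = \big(\prod_{j=1}^{k-1}\weights_j\big)x$, this Jacobian is $\big(\prod_{i=k+1}^m \weights_i\big)\otimes\big(\big(\prod_{j=1}^{k-1}\weights_j\big)x\big)^T$; (ii) stack these blocks over $k = 1,\dots,m$ to form the full Jacobian $J_r(\weightsPointVec)$ of the residual $r = y - f_\weightsPointVec(x)$ w.r.t.\ $\weightsPointVec$ (up to sign, which squares away); (iii) form the Gauss–Newton term $2\,\E{J_r^T J_r}$ and push the empirical expectation over $x$ inside, where the only $x$-dependence sits in the factor $\big(\prod_{j=1}^{k-1}\weights_j\big)x\,x^T\big(\prod_{j=1}^{k'-1}\weights_{j}\big)^T$; averaging gives $\big(\prod_{j=1}^{k-1}\weights_j\big)\EmpCovMat{x}\big(\prod_{j=1}^{k'-1}\weights_{j}\big)^T$, which factors as $\bPhi_k \bPhi_{k'}^T$ with $\bPhi_k$ as in~\eqref{eq:PhiDef} after pulling a symmetric square root $\EmpCovMat{x}^{1/2}$ into each block (using $(P\otimes Q)(P'\otimes Q')^T = PP'^T \otimes QQ'^T$); (iv) argue the residual-curvature term vanishes: its $(k,k')$ block involves $\E{r(\weightsPointVec)\,\text{vec}(\cdot)^T}$ terms that are linear in $x$ through $C$, and $\E{(y - \optmlOper x)x^T} = \EmpCovMat{yx} - \optmlOper\EmpCovMat{x} = 0$ by definition of $\optmlOper$; the blocks where the second derivative of $f$ is taken w.r.t.\ two \emph{different} matrices also reduce to such residual-weighted terms and vanish, while the second derivative w.r.t.\ a \emph{single} $\weights_k$ is zero since $f$ is affine in each $\weights_k$ separately. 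Assembling, $\Hess_\weightsPointVec = 2\bPhi\bPhi^T$ with $\bPhi = [\bPhi_1^T,\dots,\bPhi_m^T]^T$.

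The main obstacle I anticipate is bookkeeping rather than conceptual: carefully tracking the Kronecker-product transposes and the ordering conventions in $\prod_{j=q}^{k}\weights_j \triangleq \weights_k\cdots\weights_q$ so that the block $\bPhi_k$ comes out exactly as $\big(\prod_{j=1}^{k-1}\weights_j\EmpCovMat{x}^{1/2}\big)\otimes\big(\prod_{i=k+1}^m\weights_i\big)^T$ and not some transposed or reordered variant, and making sure the cross-blocks ($\scndPrtDev{w}{k}{k'}$ with $k\neq k'$) of the residual-weighted Hessian term genuinely collapse to $\E{(y-\optmlOper x)\otimes(\cdots x)} = 0$ rather than leaving a stray nonzero piece. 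A secondary subtlety is justifying that $\EmpCovMat{x}^{1/2}$ can be split symmetrically between the two $\bPhi_k$ factors inside the Kronecker structure; this follows from $(A\EmpCovMat{x}^{1/2})\otimes B \,\big((A'\EmpCovMat{x}^{1/2})\otimes B'\big)^T = A\EmpCovMat{x}A'^T \otimes BB'^T$, but it is worth stating explicitly. Everything else is a routine differentiation-under-the-empirical-mean computation.
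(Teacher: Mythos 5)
Your proposal is correct and follows essentially the same route as the paper's proof in App.~\ref{appendix:HessianDerivation}: linearize the loss in each $\vectorize{\weights_k}$ via $\vectorize{\MM_1\MM_2\MM_3}=(\MM_3^T\otimes\MM_1)\vectorize{\MM_2}$, observe that the residual-weighted curvature term (the second term in the product rule, i.e.\ your Gauss--Newton remainder) is annihilated at a global minimum because $(\EmpCovMat{x}\otimes\Identity)\,\vectorize{\prod_i\weights_i}=\vectorize{\EmpCovMat{yx}}$ there, and split $\EmpCovMat{x}=\EmpCovMat{x}^{1/2}\EmpCovMat{x}^{1/2}$ symmetrically between the two Kronecker factors to get $2\bPhi_q\bPhi_k^T$ for each block. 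The Kronecker-factor ordering you flag is indeed the only bookkeeping point (the paper's denominator-layout convention yields $\bPhi_k=\UU_k^T(\EmpCovMat{x}^{1/2}\otimes\Identity)^T$), and it resolves exactly as you anticipate.
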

Note that $\bPhi_k$ is a $d_{k}d_{k-1}\times d_x d_y$ matrix. Therefore, $\bPhi$ has only $d_x d_y$ columns, while its number of rows is the total number of parameters in the net, $N=\sum_{k=1}^m d_kd_{k-1}$. This shows that for networks with more than one layer, the Hessian at a global minimum is always rank-deficient. For example, if $d_x=d_y\triangleq d$, then we have from Assumption A1 that $N\geq m d^2$, so that at any global minimum point, only \(d^2 \) orthogonal directions point to a slope (the Hessian's rank), while the rest point into the valley of minima. In other words, the dimension of the valley is at least $(1-1/m)$ of the ambient dimension~$N$.

In analogy with the scalar setting, we would now like to exploit Lemma~\ref{lemma:HessianStructure} for analyzing the set of flattest minima,~\( \wideminset \). Unfortunately, here it is intractable to derive a closed form expression for \( \lambda_{\max}(\Hess_{\weightsPointVec}) \) at an arbitrary minimum point. Yet, our key observation is that it is still possible to determine the minimal value of \( \lambda_{\max}(\Hess_{\weightsPointVec}) \) over the set of global minima $\minset$, as well as its associated eigenvector. That is, we can deduce the sharpness of the flattest minima, without having an explicit expression for the sharpness of arbitrary minima. We elaborate on the proof technique in Sec.~\ref{sec:Derivation}. Specifically, let $\sigma_{\max}(\optmlOper)$  denote the top singular value of \( \optmlOper \), and let \(  \uu \) and \( \vv \) be its corresponding left and right singular vectors. Then we have the following.

\begin{theorem}[Sharpness of flattest minima] \label{theorem:LamdaMaxMinimalValue}
	Assume A1 and A2. If \( \weightsPointVec \in \wideminset \) then
	\begin{equation} \label{eq:LamdaMaxMinimalValue}
		\lambda_{\max}(\Hess_{\weightsPointVec}) = 2m \times \left(\sigma_{\max} \left( \optmlOper \right)\right)^{2(1-\frac{1}{m})},
	\end{equation}
	and the corresponding eigenvector is \( \bb = \bPhi \big( \vv \otimes \uu \big) \).
\end{theorem}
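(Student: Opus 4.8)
The plan is to work throughout with the factored Hessian from Lemma~\ref{lemma:HessianStructure}. Under A2 we have $\EmpCovMat{x}=\Identity$, so $\bPhi_k=\big(\prod_{j=1}^{k-1}\weights_j\big)\otimes\big(\prod_{i=k+1}^{m}\weights_i\big)^{T}$, and since $\Hess_{\weightsPointVec}=2\bPhi\bPhi^{T}$ we have $\lambda_{\max}(\Hess_{\weightsPointVec})=2\,\lambda_{\max}(\bPhi^{T}\bPhi)$, where by the Kronecker mixed--product rule $\bPhi^{T}\bPhi=\sum_{k=1}^{m}\big[(\prod_{j=1}^{k-1}\weights_j)^{T}(\prod_{j=1}^{k-1}\weights_j)\big]\otimes\big[(\prod_{i=k+1}^{m}\weights_i)(\prod_{i=k+1}^{m}\weights_i)^{T}\big]$ is only a $d_xd_y\times d_xd_y$ matrix. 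So the theorem reduces to three things: a lower bound on $\lambda_{\max}(\bPhi^{T}\bPhi)$ valid over all of $\minset$, a single minimum attaining it, and the identification of the maximizing direction.

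For the lower bound I would probe $\bPhi^{T}\bPhi$ with the unit vector $\vv\otimes\uu$. Using $(M\otimes N)(\vv\otimes\uu)=(M\vv)\otimes(N\uu)$ gives $\bPhi_k(\vv\otimes\uu)=p_{k-1}\otimes q_k$, where $p_k\triangleq(\prod_{j=1}^{k}\weights_j)\vv$ is $\vv$ propagated forward through the first $k$ layers and $q_k\triangleq(\prod_{i=k+1}^{m}\weights_i)^{T}\uu$ is $\uu$ propagated backward through the last $m-k$ layers; hence the Rayleigh quotient of $\bPhi^{T}\bPhi$ at $\vv\otimes\uu$ equals $\sum_{k=1}^{m}\norm{p_{k-1}}^{2}\norm{q_k}^{2}$. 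The key observation is that $q_k^{T}p_k$ is independent of $k$: $q_{k-1}^{T}p_{k-1}=(\weights_k^{T}q_k)^{T}p_{k-1}=q_k^{T}\weights_kp_{k-1}=q_k^{T}p_k$, and at $k=m$ this common value is $\uu^{T}\optmlOper\vv=\sigma_{\max}(\optmlOper)$ since $\weightsPointVec\in\minset$. Cauchy--Schwarz then gives $\norm{p_k}\norm{q_k}\ge\sigma_{\max}(\optmlOper)$ for every $k=0,\dots,m$; telescoping these $m+1$ bounds and dividing by $\norm{p_m}\norm{q_0}=\sigma_{\max}(\optmlOper)^{2}$ (using $\norm{\optmlOper\vv}=\norm{\optmlOper^{T}\uu}=\sigma_{\max}(\optmlOper)$) yields $\prod_{k=1}^{m}\norm{p_{k-1}}\norm{q_k}\ge\sigma_{\max}(\optmlOper)^{m-1}$; and AM--GM finishes: $\sum_{k=1}^{m}\norm{p_{k-1}}^{2}\norm{q_k}^{2}\ge m\big(\prod_{k=1}^{m}\norm{p_{k-1}}\norm{q_k}\big)^{2/m}\ge m\,\sigma_{\max}(\optmlOper)^{2(1-1/m)}$. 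Thus $\lambda_{\max}(\Hess_{\weightsPointVec})\ge 2m\,\sigma_{\max}(\optmlOper)^{2(1-1/m)}$ for \emph{every} $\weightsPointVec\in\minset$.

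To see the bound is attained I would exhibit a ``balanced'' factorization: writing the SVD $\optmlOper=\sum_i\sigma_i\uu_i\vv_i^{T}$ with $\sigma_1=\sigma_{\max}(\optmlOper)$, $\uu_1=\uu$, $\vv_1=\vv$, let each layer scale the $i$th singular direction by $\sigma_i^{1/m}$ (for square full--rank $\optmlOper$: $\weights_1=\Sigma^{1/m}V^{T}$, $\weights_m=U\Sigma^{1/m}$, $\weights_k=\Sigma^{1/m}$ otherwise; A1 makes the general rectangular/low--rank version analogous). For this $\weightsPointVec$ one computes $\norm{p_{k-1}}=\sigma_{\max}(\optmlOper)^{(k-1)/m}$ and $\norm{q_k}=\sigma_{\max}(\optmlOper)^{(m-k)/m}$, so each summand $\norm{p_{k-1}}^{2}\norm{q_k}^{2}$ equals $\sigma_{\max}(\optmlOper)^{2(1-1/m)}$ and both inequalities above are tight. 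Moreover, here $\bPhi^{T}\bPhi=(V\otimes U)\,\mathrm{diag}\big(\sum_{k=1}^{m}\sigma_i^{2(k-1)/m}\sigma_j^{2(m-k)/m}\big)_{i,j}\,(V\otimes U)^{T}$, i.e.\ it is diagonalized by $\{\vv_i\otimes\uu_j\}$; since $\sigma_i,\sigma_j\le\sigma_{\max}(\optmlOper)$, every eigenvalue is at most $m\,\sigma_{\max}(\optmlOper)^{2(1-1/m)}$, with equality exactly at $(i,j)=(1,1)$. Hence $\lambda_{\max}(\Hess_{\weightsPointVec})=2m\,\sigma_{\max}(\optmlOper)^{2(1-1/m)}$ at this particular minimum, so this is the minimum of $\lambda_{\max}(\Hess)$ over $\minset$, which establishes \eqref{eq:LamdaMaxMinimalValue} for every $\weightsPointVec\in\wideminset$.

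Finally, for the eigenvector: fix any $\weightsPointVec\in\wideminset$. By the previous step its sharpness is $2m\,\sigma_{\max}(\optmlOper)^{2(1-1/m)}$, so $\lambda_{\max}(\bPhi^{T}\bPhi)=m\,\sigma_{\max}(\optmlOper)^{2(1-1/m)}$; but the Rayleigh quotient of $\bPhi^{T}\bPhi$ at the unit vector $\vv\otimes\uu$ is $\sum_{k=1}^{m}\norm{p_{k-1}}^{2}\norm{q_k}^{2}$, which by the lower--bound argument is $\ge m\,\sigma_{\max}(\optmlOper)^{2(1-1/m)}$, hence equals $\lambda_{\max}(\bPhi^{T}\bPhi)$. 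So $\vv\otimes\uu$ is a top eigenvector of $\bPhi^{T}\bPhi$, and then $\bb=\bPhi(\vv\otimes\uu)$ is nonzero (indeed $\norm{\bb}^{2}=(\vv\otimes\uu)^{T}\bPhi^{T}\bPhi(\vv\otimes\uu)=\lambda_{\max}(\bPhi^{T}\bPhi)>0$ when $\optmlOper\neq 0$) and satisfies $\Hess_{\weightsPointVec}\bb=2\bPhi\bPhi^{T}\big(\bPhi(\vv\otimes\uu)\big)=2\lambda_{\max}(\bPhi^{T}\bPhi)\,\bb=\lambda_{\max}(\Hess_{\weightsPointVec})\,\bb$, as claimed. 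I expect the real obstacle to be the lower bound: spotting that $\vv\otimes\uu$ is the right probe, recognizing the layer--invariant inner product $q_k^{T}p_k\equiv\sigma_{\max}(\optmlOper)$, and combining the telescoped Cauchy--Schwarz bounds with AM--GM; a close second is verifying that the balanced construction genuinely \emph{attains} the minimum rather than merely lower--bounding it, which requires the explicit simultaneous diagonalization of $\bPhi^{T}\bPhi$.
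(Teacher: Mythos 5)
Your proof is correct and follows essentially the same route as the paper: probe the factored Hessian \(2\bPhi^T\bPhi\) with the unit vector \(\vv\otimes\uu\), lower-bound the resulting Rayleigh quotient by \(m\,(\sigma_{\max}(\optmlOper))^{2(1-\frac{1}{m})}\) uniformly over the set of global minima, show that the balanced factorization attains this value, and extract the top eigenvector from tightness of the bound at a widest minimum. The only real difference is local and cosmetic: where the paper states the lower bound for an arbitrary probe \(\BB\) via a general matrix inequality (its Lemma~\ref{lemma:myInequality}: Frobenius \(\geq\) operator norm, AM--GM, submultiplicativity) and then maximizes over \(\BB\), you specialize to the rank-one probe \(\uu\vv^T\) from the outset and re-derive the same chain via Cauchy--Schwarz applied to the layer-invariant inner product \(q_k^Tp_k=\sigma_{\max}(\optmlOper)\), which is exactly the rank-one instance of that lemma.
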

This result asserts that the flattest minima become sharper as the number of layers increases (their sharpness grows approximately linearly with \( m \) for \(m \gg 1\)). Since a minimum point \( \weightsPointVec^* \) is stable for GD if the step-size satisfies \( \eta \leq 2/ \lambda_{\max} ( \Hess_{\weightsPointVec^*} ) \), we conclude that the maximal step-size allowing convergence satisfies $ \eta_{\max} \leq \frac{1}{m}(\sigma_{\max}( \optmlOper ))^{-2(1-{1}/{m})}$. In other words, the step-size should be taken to be smaller when training deeper models. As mentioned above, this result was also deduced by \citet{nar2018step}, albeit from different considerations (without explicitly analyzing minima sharpness).

Next, we turn to analyze the flattest minima in terms of the gain that signals experience as they propagate through these networks. For general minimum points, the largest end-to-end gain is \( \sigma_{\max}( \optmlOper ) \) (corresponding to the input~$\vv$), but the intermediate gain up to layer $k<m$ is unconstrained, as we can always multiply one weight matrix by $\alpha$ and another by $1/\alpha$ without affecting the end-to-end mapping. Flattest minima, however, have special structures. Two questions are thus in place regarding those solutions: (i)~What gain does~$\vv$ experience up to layer~$k$? (ii)~What is the largest gain that \emph{any signal} can experience up to layer~$k$?
\begin{theorem}[Intermediate gains] \label{theorem:PartialProducts}
Assume A1 and A2. If \( \weightsPointVec \in \wideminset \) then for all $ k $:
	\begin{enumerate}[label=\roman*.]		
		\item \( \vv \) is a right singular vector of  $\  \prod_{j=1}^{k} \weights_j $ with corresponding singular value $ (\sigma_{\max}( \optmlOper))^{\frac{k}{m}} $.
		\item \( \sigma_{\max}( \prod_{j=1}^{k} \weights_j )  \leq \sqrt{m} \times (\sigma_{\max}( \optmlOper )) ^{\frac{k}{m}}  \).
	\end{enumerate}
	Similarly,
	\begin{enumerate}[resume, label=\roman*.]
		\item \( \uu \) is a left singular vector of $\ \prod_{j=k+1}^{m} \weights_j $ with corresponding singular value $ (\sigma_{\max}(\optmlOper))^{1-\frac{k}{m}} $.
		\item \( \sigma_{\max}( \prod_{j=k+1}^{m} \weights_j )  \leq \sqrt{m} \times (\sigma_{\max}( \optmlOper ))^{1-\frac{k}{m}}\).
	\end{enumerate}
\end{theorem}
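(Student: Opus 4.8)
The plan is to collapse the entire statement into a single Rayleigh-quotient computation for the Gram matrix $\bPhi^{T}\bPhi$. Write $P_{k}\triangleq\prod_{j=1}^{k}\weights_{j}$ and $S_{k}\triangleq\prod_{j=k+1}^{m}\weights_{j}$, so that $P_{0}=S_{m}=\Identity$, $S_{k}P_{k}=\optmlOper$ for every $k$, and, using A2 in Lemma~\ref{lemma:HessianStructure}, $\bPhi_{k}=P_{k-1}\otimes S_{k}^{T}$. Hence
\[
  \tfrac12\Hess_{\weightsPointVec}=\bPhi\bPhi^{T},\qquad \MM\triangleq\bPhi^{T}\bPhi=\sum_{k=1}^{m}\bigl(P_{k-1}^{T}P_{k-1}\bigr)\otimes\bigl(S_{k}S_{k}^{T}\bigr).
\]
Because $\bPhi\bPhi^{T}$ and $\MM$ share the same nonzero spectrum and (assuming $\optmlOper\neq 0$, which is the only interesting case) Theorem~\ref{theorem:LamdaMaxMinimalValue} evaluates $\lambda_{\max}(\Hess_{\weightsPointVec})$ explicitly for $\weightsPointVec\in\wideminset$, we obtain $\lambda_{\max}(\MM)=m\,(\sigma_{\max}(\optmlOper))^{2(1-1/m)}\triangleq\lambda^{\star}$. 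Note that only the \emph{eigenvalue} from Theorem~\ref{theorem:LamdaMaxMinimalValue} is needed, not its eigenvector.

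Next I set $a_{k}\triangleq\|P_{k}\vv\|$ and $b_{k}\triangleq\|S_{k}^{T}\uu\|$, so that $a_{0}=b_{m}=1$ and $a_{m}=b_{0}=\sigma_{\max}(\optmlOper)$. The one nontrivial ingredient is the bound $a_{k}b_{k}\ge\sigma_{\max}(\optmlOper)$, valid for every $k$: indeed $\sigma_{\max}(\optmlOper)=\uu^{T}\optmlOper\vv=\uu^{T}S_{k}P_{k}\vv=(S_{k}^{T}\uu)^{T}(P_{k}\vv)\le a_{k}b_{k}$ by Cauchy--Schwarz. Now I evaluate the Rayleigh quotient of $\MM$ at the unit vector $\vv\otimes\uu$:
\[
  \lambda^{\star}\;\ge\;(\vv\otimes\uu)^{T}\MM(\vv\otimes\uu)=\sum_{k=1}^{m}a_{k-1}^{2}b_{k}^{2}\;\ge\;m\Bigl(\prod_{k=1}^{m}a_{k-1}^{2}b_{k}^{2}\Bigr)^{1/m},
\]
where the last step is AM--GM. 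A short index manipulation gives $\prod_{k=1}^{m}a_{k-1}^{2}b_{k}^{2}=\bigl(\prod_{k=0}^{m}(a_{k}b_{k})^{2}\bigr)\big/(a_{m}b_{0})^{2}\ge(\sigma_{\max}(\optmlOper))^{2(m+1)}/(\sigma_{\max}(\optmlOper))^{4}=(\sigma_{\max}(\optmlOper))^{2(m-1)}$, so the right-hand side is $\ge\lambda^{\star}$, and the whole chain is forced to be a chain of equalities. Equality in AM--GM makes all $a_{k-1}b_{k}$ equal (to $(\sigma_{\max}(\optmlOper))^{(m-1)/m}$); equality in the product estimate makes $a_{k}b_{k}=\sigma_{\max}(\optmlOper)$ for every $k$; dividing these telescopes to $a_{k}=(\sigma_{\max}(\optmlOper))^{k/m}$ and $b_{k}=(\sigma_{\max}(\optmlOper))^{1-k/m}$; and equality in the Cauchy--Schwarz step makes $P_{k}\vv$ and $S_{k}^{T}\uu$ positively collinear, say $P_{k}\vv=c_{k}\,S_{k}^{T}\uu$ with $c_{k}=a_{k}/b_{k}>0$.

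Parts (i) and (iii) then fall out at once: $P_{k}^{T}P_{k}\vv=c_{k}\,P_{k}^{T}S_{k}^{T}\uu=c_{k}(S_{k}P_{k})^{T}\uu=c_{k}\optmlOper^{T}\uu=c_{k}\sigma_{\max}(\optmlOper)\vv=a_{k}^{2}\vv$, so $\vv$ is a right singular vector of $\prod_{j=1}^{k}\weights_{j}$ with singular value $a_{k}=(\sigma_{\max}(\optmlOper))^{k/m}$, and symmetrically $S_{k}S_{k}^{T}\uu=c_{k}^{-1}S_{k}P_{k}\vv=b_{k}^{2}\uu$. For (ii) and (iv) I keep only one summand of $\MM$: from $\MM\succeq(P_{k}^{T}P_{k})\otimes(S_{k+1}S_{k+1}^{T})$ and $\lambda_{\max}\bigl((P_{k}^{T}P_{k})\otimes(S_{k+1}S_{k+1}^{T})\bigr)=(\sigma_{\max}(P_{k}))^{2}(\sigma_{\max}(S_{k+1}))^{2}$, I get $\lambda^{\star}\ge(\sigma_{\max}(P_{k}))^{2}(\sigma_{\max}(S_{k+1}))^{2}$; since (iii) already gives $\sigma_{\max}(S_{k+1})\ge b_{k+1}=(\sigma_{\max}(\optmlOper))^{1-(k+1)/m}$, solving for $\sigma_{\max}(P_{k})$ yields $\sigma_{\max}(P_{k})\le\sqrt{m}\,(\sigma_{\max}(\optmlOper))^{k/m}$; the bound on $\sigma_{\max}(S_{k})$ comes the same way from the $k$-th summand and part (i), and the endpoint indices $k\in\{0,m\}$ are trivial since $m\ge1$.

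I expect the main obstacle to be conceptual rather than computational: spotting that the width value handed to us by Theorem~\ref{theorem:LamdaMaxMinimalValue}, used only as an \emph{upper} bound on the Rayleigh quotient of $\bPhi^{T}\bPhi$ at $\vv\otimes\uu$, exactly meets the AM--GM \emph{lower} bound and therefore saturates an entire chain of inequalities simultaneously --- it is this simultaneous saturation, not any single estimate, that pins down both the singular vectors and the geometric progression of the intermediate gains. After that observation everything is routine; the only points needing care are the degenerate case $\sigma_{\max}(\optmlOper)=0$ and keeping the index bookkeeping in the telescoping and the ``drop-a-summand'' steps consistent with the conventions $P_{0}=S_{m}=\Identity$ and $\bPhi_{k}=P_{k-1}\otimes S_{k}^{T}$.
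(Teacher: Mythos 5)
Your proof is correct and follows essentially the same route as the paper's: Appendix~\ref{appendix:PartialProductsProof} packages your saturation argument into Lemma~\ref{lemma:Balance} (balance of $\|P_{k-1}\vv\|\,\|S_k^T\uu\|$, via AM--GM) and Lemma~\ref{lemma:DirectionMatching} (collinearity of $P_k\vv$ and $S_k^T\uu$, via Cauchy--Schwarz), and proves parts (ii)/(iv) by the same drop-all-but-one-summand device combined with part (iii). The only cosmetic difference is that you use the width value from Theorem~\ref{theorem:LamdaMaxMinimalValue} purely as an upper bound on the Rayleigh quotient at $\vv\otimes\uu$ (so only the eigenvalue is needed), whereas the paper also invokes the eigenvector statement and a sub-multiplicativity step in place of your telescoping product of $a_k b_k$; both are valid.
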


Figure~\ref{fig:intermediate_gain_linear} illustrates the theorem for six-layer linear networks designed to solve a linear regression problem involving synthetic data (see App.~\ref{appendix:Experiments} for details). Here  $ \weights_j \in \R^{4 \times 4} $ for all layers. The figure depicts eight randomly drawn global minima, four arbitrary and four flattest. The intermediate gain of $\vv$, to which Theorem~\ref{theorem:PartialProducts}(i) refers, is marked by red circles. The bound of Theorem~\ref{theorem:PartialProducts}(ii) is shown as a dotted black line, and the singular values of the partial matrix products are marked by blue dots. As can be seen in Fig.~\ref{fig:intermediate_gain_linear_sharp}, the intermediate gains in arbitrary global minimum solutions can be high. However, in the flattest solutions (Fig.~\ref{fig:intermediate_gain_linear_wide}), the gain that $\vv$ experiences varies gracefully along the net (as $ (\sigma_{\max}( \optmlOper))^{k/m} $), and the maximal gain of any other signal (highest blue point) is never much larger. Finally, we see that~$\vv$ is indeed one of the singular vectors of the partial product matrix up to any depth (as the red circle coincides with one of the blue points in each layer).

In addition to the intermediate gains, it is of interest to analyze  the individual weight matrices. It turns out that in the flattest solutions, the layers exhibit a sort of coupling associated with the signal $\vv$. Specifically, we have the following.
\begin{theorem}[Layer coupling]\label{theorem:SingularVectorsAndValues}
	Assume A1 and A2. Denote \( \rr_k = \prod_{j=1}^{k-1} \weights_j \vv \), \(\,\, \qq_k = (  \prod_{j=k+1}^{m} \weights_j)^T \uu \), and write $\bar{\rr}_k  = \rr_k/ \| \rr_k \|$, $\,\,\bar{\qq}_k = \qq_k / \| \qq_k \| $.
	If \( \weightsPointVec \in \wideminset \) then for all~$ k $:
	\begin{enumerate}[label=\roman*.]
		\item \( \bar{\qq}_k \)  and \( \bar{\rr}_k  \) are a pair of left and right singular vectors of \( \weights_k \) with corresponding singular value \( (\sigma_{\max} ( \optmlOper))^{\frac{1}{m}} \).
		\item \label{theorem:SingularVectorsAndValues2} These vectors are coupled in the sense that \( \bar{\rr}_{k+1} =  \bar{\qq}_k  \).
	\end{enumerate}
\end{theorem}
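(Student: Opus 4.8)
The plan is to derive the whole statement from Theorem~\ref{theorem:PartialProducts}; no further spectral analysis of the Hessian is needed. Throughout write $\sigma=\sigma_{\max}(\optmlOper)$, which we may assume to be positive (otherwise $\uu$ and $\vv$ are undefined). First I would record two elementary consequences of Theorem~\ref{theorem:PartialProducts}. Applying part~(i) to the partial product $\prod_{j=1}^{k-1}\weights_j$ shows $\vv$ is one of its right singular vectors with singular value $\sigma^{(k-1)/m}$, hence $\norm{\rr_k}=\sigma^{(k-1)/m}$; likewise part~(iii) gives $\norm{\qq_k}=\sigma^{1-k/m}$. Telescoping, $\rr_{k+1}=\prod_{j=1}^{k}\weights_j\vv=\weights_k\big(\prod_{j=1}^{k-1}\weights_j\vv\big)=\weights_k\rr_k$, and dividing by $\norm{\rr_k}$ and inserting the two norm identities yields the one-step recursion
\begin{equation}
  \weights_k\bar{\rr}_k=\sigma^{1/m}\,\bar{\rr}_{k+1}.
\end{equation}
The transposed manipulation, using $\qq_{k-1}=\big(\prod_{j=k}^{m}\weights_j\big)^{T}\uu=\weights_k^{T}\qq_k$ together with $\norm{\qq_{k-1}}/\norm{\qq_k}=\sigma^{1/m}$, gives $\weights_k^{T}\bar{\qq}_k=\sigma^{1/m}\,\bar{\qq}_{k-1}$. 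These hold for every $k\in\{1,\dots,m\}$.

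Next I would prove the coupling claim~(ii); it is convenient to establish it before~(i). Using that $\weightsPointVec\in\wideminset\subseteq\minset$ we have $\big(\prod_{j=k+1}^{m}\weights_j\big)\big(\prod_{j=1}^{k}\weights_j\big)=\optmlOper$, so with $\optmlOper\vv=\sigma\uu$,
\begin{equation}
  \langle \qq_k,\rr_{k+1}\rangle
  =\uu^{T}\Big(\prod_{j=k+1}^{m}\weights_j\Big)\Big(\prod_{j=1}^{k}\weights_j\Big)\vv
  =\uu^{T}\optmlOper\vv=\sigma .
\end{equation}
Dividing by $\norm{\qq_k}\,\norm{\rr_{k+1}}=\sigma^{1-k/m}\cdot\sigma^{k/m}=\sigma$ gives $\langle \bar{\qq}_k,\bar{\rr}_{k+1}\rangle=1$. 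Since $\bar{\qq}_k$ and $\bar{\rr}_{k+1}$ are unit vectors, $\norm{\bar{\qq}_k-\bar{\rr}_{k+1}}^{2}=2-2\langle\bar{\qq}_k,\bar{\rr}_{k+1}\rangle=0$, hence $\bar{\rr}_{k+1}=\bar{\qq}_k$. This argument also covers the endpoints ($\bar{\rr}_1=\vv=\bar{\qq}_0$ and $\bar{\rr}_{m+1}=\uu=\bar{\qq}_m$), so the coupling holds for all relevant $k$.

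Finally, part~(i) falls out by combining the two ingredients. From $\weights_k\bar{\rr}_k=\sigma^{1/m}\bar{\rr}_{k+1}$ and coupling at index $k$ we get $\weights_k\bar{\rr}_k=\sigma^{1/m}\bar{\qq}_k$; from $\weights_k^{T}\bar{\qq}_k=\sigma^{1/m}\bar{\qq}_{k-1}$ and coupling at index $k-1$ we get $\weights_k^{T}\bar{\qq}_k=\sigma^{1/m}\bar{\rr}_k$. Multiplying, $\weights_k^{T}\weights_k\bar{\rr}_k=\sigma^{2/m}\bar{\rr}_k$, so $\bar{\rr}_k$ is a right singular vector of $\weights_k$ with singular value $\sigma^{1/m}=(\sigma_{\max}(\optmlOper))^{1/m}$, and the relation $\weights_k\bar{\rr}_k=\sigma^{1/m}\bar{\qq}_k$ identifies $\bar{\qq}_k$ as the matching left singular vector.

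I do not expect a genuine obstacle here once Theorem~\ref{theorem:PartialProducts} is in hand; the substantive work is upstream in that theorem. The one place to avoid a tempting-but-invalid shortcut is the coupling step: one might try to deduce $\bar{\qq}_k=\bar{\rr}_{k+1}$ by matching the singular value decompositions of $\prod_{j=k+1}^{m}\weights_j$ and $\prod_{j=1}^{k}\weights_j$ directly, but the relevant singular values $\sigma^{1-k/m}$ and $\sigma^{k/m}$ need not be the \emph{largest} of those partial products (Theorem~\ref{theorem:PartialProducts}(ii) and~(iv) are only inequalities) and may be repeated, so the clean route is the rank-one inner-product identity above. Minor bookkeeping is also needed at the boundary indices $k=1$ and $k=m$, where $\rr_1=\vv$, $\qq_m=\uu$, and $\rr_{m+1}=\optmlOper\vv=\sigma\uu$, but these are consistent with the general formulas.
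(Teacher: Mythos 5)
Your proof is correct, and it reaches the result by a somewhat different route than the paper. The paper's proof of this theorem leans on an auxiliary direction-matching lemma (its Lemma~\ref{lemma:DirectionMatching} in App.~\ref{appendix:PartialProductsProof}), which \emph{is} the coupling statement $\bar{\qq}_k=\bar{\rr}_{k+1}$ and is established by multiplying Cauchy--Schwarz inequalities over all layers at once and showing the product saturates via a balance lemma; part~(i) is then obtained by hitting that identity with $\weights_k^T$ and substituting the norms from Theorem~\ref{theorem:PartialProducts}. You instead take Theorem~\ref{theorem:PartialProducts} as the sole input: the norm identities $\norm{\rr_k}=\sigma^{(k-1)/m}$, $\norm{\qq_k}=\sigma^{1-k/m}$ together with the single inner-product computation $\langle\qq_k,\rr_{k+1}\rangle=\uu^T\optmlOper\vv=\sigma$ give a \emph{per-index} Cauchy--Schwarz equality, which yields the coupling directly, and part~(i) then falls out of the clean one-step recursions $\weights_k\bar{\rr}_k=\sigma^{1/m}\bar{\rr}_{k+1}$ and $\weights_k^T\bar{\qq}_k=\sigma^{1/m}\bar{\qq}_{k-1}$. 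The two arguments rest on the same underlying facts, but yours is more economical once Theorem~\ref{theorem:PartialProducts} is available (no separate direction-matching lemma is needed), whereas the paper's version localizes the hard work in reusable lemmas that also serve the proof of Theorem~\ref{theorem:PartialProducts} itself. Your closing caution about not trying to match SVDs of the partial products directly is well taken, and your handling of the boundary indices is consistent with the paper's conventions.
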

{\bf{Remark:}} From Theorem~\ref{theorem:PartialProducts}, \( \| \rr_k \| = (\sigma_{\max} ( \optmlOper ))^{(k-1)/m}  \) and \( \| \qq_k \| = (\sigma_{\max}( \optmlOper))^{1-k/m} \).

Theorem~\ref{theorem:SingularVectorsAndValues} indicates that in flattest minimum networks, there forms a distinct path from input to output that is exclusively dedicated to the signal $\vv$. Specifically, when such a network operates on \( \vv \), the input to each layer is a singular vector of that layer, with singular value \(  (\sigma_{\max}( \optmlOper))^{1/m}  \). Note that this singular value is not necessarily the maximal one of each layer, but it must exist in all matrices. Now, since the input of each layer is a singular vector, so is its output. Therefore, we have that consecutive layers in the network have a singular vector in common, where a left singular vector of one matrix matches a right singular vector of the next.

We saw that if $\weightsPointVec\in\wideminset$, then one of the singular values of each weight matrix must equal \(  (\sigma_{\max}( \optmlOper))^{1/m}  \). One may wonder whether the other direction is also true. As we now show, if the singular value \(  (\sigma_{\max}( \optmlOper))^{1/m}  \) not only exists, but is also the \emph{largest} one of each matrix, then the network is necessarily a flattest minimum.
\begin{theorem}[Sufficient condition] \label{theorem:SufficientCondition}
	Assume A1 and A2. If a solution \( \weightsPointVec \in \minset \) satisfies \( \sigma_{\max}(\weights_k) = (\sigma_{\max}(\optmlOper))^{\frac{1}{m}} \) for all \( k \), then necessarily \( \weightsPointVec \in \wideminset \).
\end{theorem}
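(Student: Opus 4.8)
The plan is to show that, under the stated hypothesis, $\lambda_{\max}(\Hess_{\weightsPointVec}) \le 2m\,(\sigma_{\max}(\optmlOper))^{2(1-1/m)}$, and then to conclude using Theorem~\ref{theorem:LamdaMaxMinimalValue}. Recall that $\wideminset$ is defined in~\eqref{eq:WidestMinimaSet} as the set of minimizers of $\lambda_{\max}(\Hess_{\weightsPointVec})$ over $\minset$, and that Theorem~\ref{theorem:LamdaMaxMinimalValue} identifies the optimal value of this minimization as exactly $2m\,(\sigma_{\max}(\optmlOper))^{2(1-1/m)}$. Hence any $\weightsPointVec \in \minset$ whose sharpness does not exceed this value must in fact attain it, i.e.\ lie in $\wideminset$. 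So it suffices to prove the upper bound.

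For the upper bound I would start from the Hessian factorization of Lemma~\ref{lemma:HessianStructure}, $\Hess_{\weightsPointVec} = 2\bPhi\bPhi^T$, and use that $\lambda_{\max}(\bPhi\bPhi^T) = \sigma_{\max}^2(\bPhi)$ (the nonzero eigenvalues of $\bPhi\bPhi^T$ coincide with those of $\bPhi^T\bPhi$). Exploiting the block structure $\bPhi = [\bPhi_1^T, \ldots, \bPhi_m^T]^T$, for any unit vector $\zz$ we have $\norm{\bPhi\zz}^2 = \sum_{k=1}^m \norm{\bPhi_k\zz}^2 \le \sum_{k=1}^m \sigma_{\max}^2(\bPhi_k)$, and therefore $\sigma_{\max}^2(\bPhi) \le \sum_{k=1}^m \sigma_{\max}^2(\bPhi_k)$.

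It then remains to bound each $\sigma_{\max}(\bPhi_k)$. By A2, the expression in~\eqref{eq:PhiDef} simplifies to $\bPhi_k = \big(\prod_{j=1}^{k-1}\weights_j\big) \otimes \big(\prod_{i=k+1}^m\weights_i\big)^T$, and since the top singular value of a Kronecker product is the product of the top singular values, $\sigma_{\max}(\bPhi_k) = \sigma_{\max}\big(\prod_{j=1}^{k-1}\weights_j\big)\,\sigma_{\max}\big(\prod_{i=k+1}^m\weights_i\big)$. Submultiplicativity of the spectral norm together with the hypothesis $\sigma_{\max}(\weights_j) = (\sigma_{\max}(\optmlOper))^{1/m}$ gives $\sigma_{\max}\big(\prod_{j=1}^{k-1}\weights_j\big) \le (\sigma_{\max}(\optmlOper))^{(k-1)/m}$ and $\sigma_{\max}\big(\prod_{i=k+1}^m\weights_i\big) \le (\sigma_{\max}(\optmlOper))^{(m-k)/m}$. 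Multiplying, $\sigma_{\max}(\bPhi_k) \le (\sigma_{\max}(\optmlOper))^{(m-1)/m}$ for every $k$, so summing over the $m$ layers yields $\sigma_{\max}^2(\bPhi) \le m\,(\sigma_{\max}(\optmlOper))^{2(1-1/m)}$, i.e.\ $\lambda_{\max}(\Hess_{\weightsPointVec}) = 2\sigma_{\max}^2(\bPhi) \le 2m\,(\sigma_{\max}(\optmlOper))^{2(1-1/m)}$, which is what we needed.

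The argument is short and I do not expect a genuine technical obstacle. The one point that requires care is the logical move in the first paragraph: an upper bound on $\lambda_{\max}(\Hess_{\weightsPointVec})$ implies $\weightsPointVec \in \wideminset$ only because Theorem~\ref{theorem:LamdaMaxMinimalValue} has already pinned down the optimal value of the minimization defining $\wideminset$, forcing the bound to be tight. Everything else reduces to elementary facts — $\sigma_{\max}(A \otimes B) = \sigma_{\max}(A)\,\sigma_{\max}(B)$, submultiplicativity of the spectral norm, and the eigenvalue--singular-value correspondence for $\bPhi\bPhi^T$ — applied to the Hessian factorization of Lemma~\ref{lemma:HessianStructure}.
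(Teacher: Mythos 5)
Your proposal is correct and follows essentially the same route as the paper: both establish the upper bound $\lambda_{\max}(\Hess_{\weightsPointVec})\le 2m\,(\sigma_{\max}(\optmlOper))^{2(1-1/m)}$ by decoupling the $m$ blocks of $\bPhi$ (the paper relaxes the single matrix $\BB$ to separate $\BB_k$'s, which is exactly your bound $\sigma_{\max}^2(\bPhi)\le\sum_k\sigma_{\max}^2(\bPhi_k)$), then apply the Kronecker singular-value identity and submultiplicativity, and finally invoke Theorem~\ref{theorem:LamdaMaxMinimalValue} to force the bound to be tight. No gaps.
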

Observe that these cases are not rare, in the sense that they form a set of nonzero measure within~$\Omega_0$.

\section{Proof Outline for Theorem~\ref{theorem:LamdaMaxMinimalValue}} \label{sec:Derivation}

Our results in theorems~\ref{theorem:LamdaMaxMinimalValue}-\ref{theorem:SufficientCondition} hinge on the ability to characterize the flattest minima without having an explicit expression for the top eigenvalue of the Hessian at an arbitrary minimum point. In this section we present an outline of the proof of Theorem~\ref{theorem:LamdaMaxMinimalValue}, which illustrates how we go about this, and lays the basis for the proofs of the other theorems.

Note from \eqref{eq:fDef} that \( \weightsPointVec \) is a concatenation of the vectorizations of the weights matrices \( \{ \weights_j \} \). That is, denoting \( \weightsPointVec_j = \vectorize{\weights_j} \), we have that \( \weightsPointVec = [\weightsPointVec_1^T, \weightsPointVec_2^T, \ldots, \weightsPointVec_m^T]^T \). Therefore, the Hessian has the following block structure,
\begin{equation}\label{eq:HessDefinition}
\! \! \! \Hess_{\! \weightsPointVec} \! = \! \!	\begin{bmatrix}
		 \scndPrtDev{\weightsPointVec}{1}{1}  & \scndPrtDev{\weightsPointVec}{1}{2}  & \dots & \scndPrtDev{\weightsPointVec}{1}{m} \\
		\scndPrtDev{\weightsPointVec}{2}{1} & \scndPrtDev{\weightsPointVec}{2}{2} & \dots  & \scndPrtDev{\weightsPointVec}{2}{m} \\
		\vdots            					& \vdots            				  & \ddots & \vdots 							 \\
		\scndPrtDev{\weightsPointVec}{m}{1} & \scndPrtDev{\weightsPointVec}{m}{2} & \dots  & \scndPrtDev{\weightsPointVec}{m}{m}
	\end{bmatrix} \! \!
	\loss(\weightsPointVec), \! \!
\end{equation}
where we use denominator-layout notation. In App.~\ref{appendix:HessianDerivation} we show that if $ \weightsPointVec \in \minset $, then the \( (i,j )\)th block is given by
\begin{equation}
	\frac{\partial^2}{\partial \weightsPointVec_i \partial \weightsPointVec_j} \loss(\weightsPointVec) = 2 \bPhi_i \bPhi_j^T,
\end{equation}
where \( \bPhi_i \) is defined in \eqref{eq:PhiDef}. This implies that we can write \(  \Hess_{\weightsPointVec} = 2\bPhi \bPhi^T \), where \( \bPhi = [\bPhi_1^T , \bPhi_2^T, \ldots, \bPhi_m^T ]^T \).

To study the maximal eigenvalue of the Hessian, we will be rather looking at the matrix \( \HessTag_{\weightsPointVec} = 2 \bPhi^T \bPhi \), whose nonzero eigenvalues coincide with those of \( \Hess_{\weightsPointVec} \). Particularly,
\begin{equation} \label{eq:eigenvalueVectorForm}
	\lambda_{\max}\big( \Hess_{\weightsPointVec} \big) = \lambda_{\max}\big(\HessTag_{\weightsPointVec}\big) = \max_{ \| \bb \| = 1 } 2 \| \bPhi \bb \|^2 .
\end{equation}
Using the fact that $\| \bPhi \bb \|^2 = \sum_{k=1}^m \|\bPhi_k \bb\|^2$, together with properties of the Kronecker product (that appears in the definition of $\bPhi_k$), the right side of \eqref{eq:eigenvalueVectorForm} can be written as 
\begin{equation} \label{eq:eigenvalueMatrixForm}
\max_{ \| \BB \|_{\rm F} = 1 } 2 \sum_{k = 1}^{m} \Big\| {\Big( \prod_{i=k+1}^{m}\!\!\! \weights_i \Big)^{\! \!T} \! \BB  \EmpCovMat{x}^{\frac{1}{2}} \Big(\prod_{j=1}^{k-1} \weights_j  \Big)^T }  \Big\|_{\rm F}^2 ,
\end{equation}
where \( \bb = \vectorize{\BB} \) (see App.~\ref{appendix:MaxEigenvalCanonicalSol}). Obtaining a closed form solution to this optimization problem seems intractable. However, recall that our goal is merely to find the minimal value of $\lambda_{\max}( \Hess_{\weightsPointVec} )$ over \( \weightsPointVec \in \minset \). This corresponds to a minimax optimization problem over $ \weightsPointVec$ and $\BB$, where the minimum is taken over \( \weightsPointVec \in \minset \) and the maximum over $\BB\in\{\BB\in\R^{d_y\times d_x}: \| \BB \|_{\rm F} = 1\}$.

Our solution approach consists of two steps. First, we bound the objective from below using an expression that is independent of \( \weightsPointVec \). Then, we show that there exists a particular choice of \( \weightsPointVec \in \minset \) that achieves the lower bound. This proves that our bound is in fact the minimax value (\ie the minimal value of $\lambda_{\max}( \Hess_{\weightsPointVec} )$ over $\minset$). To this end, we make use of the following lemma (see proof in App.~\ref{appendix:LowerBoundLemmaProof}).
\begin{lemma} \label{lemma:myInequality}
	Let $ \{ \bPsi_k \}_{k=1}^m  $ be a set of matrices such that \( \bPsi_k \in \R^{d_{k} \times d_{k-1}} \), then
	\begin{equation}
		\sum_{k = 1}^{m} \| \bPsi_k \|_{\rm F}^2 \geq  m \,\Bigg( \Big\| \prod_{k = 1}^{m}\bPsi_k \Big\|_{2} \Bigg)^{\frac{2}{m}},
	\end{equation}
	where \( \| \cdot \|_{2} \) is the matrix norm induced by the \( \ell_2 \) vector norm (\ie the maximal singular value of the argument).
\end{lemma}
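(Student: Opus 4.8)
The plan is to prove this in two stages: first establish the case where all $\Psi_k$ are square (or just reduce to a scalar-type inequality via operator norms and submultiplicativity), then handle the dimension bookkeeping. Observe that $\|\Psi_k\|_{\rm F}^2 \geq \|\Psi_k\|_2^2$ for every $k$, so it suffices to prove $\sum_{k=1}^m \|\Psi_k\|_2^2 \geq m \big(\|\prod_{k=1}^m \Psi_k\|_2\big)^{2/m}$. Now set $a_k = \|\Psi_k\|_2 \geq 0$. By submultiplicativity of the operator norm, $\|\prod_{k=1}^m \Psi_k\|_2 \leq \prod_{k=1}^m \|\Psi_k\|_2 = \prod_{k=1}^m a_k$, hence $\big(\|\prod_k \Psi_k\|_2\big)^{2/m} \leq \big(\prod_k a_k^2\big)^{1/m}$. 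So the lemma reduces to the AM--GM inequality applied to the nonnegative numbers $a_1^2, \ldots, a_m^2$:
\[
\sum_{k=1}^m a_k^2 \geq m \left( \prod_{k=1}^m a_k^2 \right)^{1/m}.
\]

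So the core of the argument is just: operator-norm submultiplicativity followed by AM--GM. I would write it in exactly that order — (1) drop from Frobenius to operator norm termwise; (2) apply submultiplicativity to bound the operator norm of the product from above by the product of operator norms, which is the ``correct'' direction since it appears on the \emph{smaller} side of the target inequality; (3) invoke AM--GM. One should double-check the edge cases: if some $\Psi_k = 0$ then the right-hand side is $0$ and the inequality is trivial; and when $m=1$ it is an equality. The non-square shapes $\Psi_k \in \R^{d_k \times d_{k-1}}$ cause no trouble at all, since operator norm and its submultiplicativity, as well as the inequality $\|\cdot\|_{\rm F} \geq \|\cdot\|_2$, all hold for rectangular matrices provided the chain of products is conformable, which it is by assumption.

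I do not expect a genuine obstacle here; if anything, the only subtlety worth a sentence is making sure the inequality is stated in the direction that is actually useful later — namely a \emph{lower} bound on $\sum_k \|\Psi_k\|_{\rm F}^2$ in terms of the end-to-end product — and that the two bounds chained together ($\|\cdot\|_{\rm F}\ge\|\cdot\|_2$ and submultiplicativity) both push in the same, consistent direction so that nothing is lost in orientation. When this lemma is applied in the proof of Theorem \ref{theorem:LamdaMaxMinimalValue}, one takes $\Psi_k$ to be (a reshaping of) the $k$-th summand in \eqref{eq:eigenvalueMatrixForm} with $\BB$ fixed, so that $\prod_k \Psi_k$ telescopes to an expression involving $\optmlOper = \prod_j \weights_j$, giving the $m\,(\sigma_{\max}(\optmlOper))^{2(1-1/m)}$ lower bound independent of $\weightsPointVec$; but that application is beyond the present statement.
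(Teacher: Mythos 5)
Your proof is correct and is essentially identical to the paper's: both use the three ingredients $\|\cdot\|_{\rm F}\ge\|\cdot\|_2$, submultiplicativity of the operator norm, and AM--GM, merely chained in a slightly different order (the paper applies AM--GM to the $\|\Psi_k\|_2^2$ before invoking submultiplicativity, while you invoke submultiplicativity first). No gap; nothing further is needed.
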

This lemma implies that the objective in \eqref{eq:eigenvalueMatrixForm} can be lower-bounded by
\begin{equation}\label{eq:RawLemmaBoundResult}
	2 m \Bigg(  \Big\| \prod_{k = 1}^{m}  {\Big( \prod_{i=k+1}^{m} \!\!\!\weights_i \Big)^T \BB \EmpCovMat{x}^{\frac{1}{2}}\Big(\prod_{j=1}^{k-1} \weights_j  \Big)^T } \Big\|_2 \Bigg)^{\frac{2}{m}}.
\end{equation}
Let us write out explicitly two consecutive terms of the outer product,
\begin{equation}\label{eq:MultiplicationUnrolling}
\underbrace{\! \Big(\!\!\! \prod_{i=q+2}^{m} \!\!\!\weights_i \Big)^{\!T}\!\BB\EmpCovMat{x}^{\frac{1}{2}}\Big(\prod_{j=1}^{q} \!\weights_j  \Big)^{\!T} \!  }_{ k = q+1 } \
\underbrace{ \! \Big(\!\!\! \prod_{i=q+1}^{m} \!\!\!\weights_i \Big)^{\!T}\! \BB \EmpCovMat{x}^{\frac{1}{2}}\Big(\prod_{j=1}^{q-1}\! \weights_j  \Big)^{\!T} \! }_{ k = q }
\end{equation}
It is easy to see that the product of the two terms in the middle equals \( (\prod_{j=1}^{m} \weights_j )^T  \), which in turn equals $\optmlOper^T$ for global minima. Therefore, if we unwrap the entire outer product, we get $\optmlOper^T$ in between every two appearances of $\BB\EmpCovMat{x}^{\frac{1}{2}}$, so that \eqref{eq:RawLemmaBoundResult} reduces to
\begin{equation}
\nu(\BB) \triangleq 2m \,  \Big\| \Big(  \BB \EmpCovMat{x}^{\frac{1}{2}}  \optmlOper^T \Big)^{m-1} \BB \EmpCovMat{x}^{\frac{1}{2}} \Big\|_2^{\frac{2}{m}}.
\end{equation}
To recap, we have that if \(  \weightsPointVec \) is a global minimum, then \( \lambda_{\max}( \Hess_{\weightsPointVec} ) \geq \max \nu(\BB) \) s.t.~\( \|  \BB  \|_{\rm F} =1 \). For the special case $\EmpCovMat{x} = \Identity$ (Assumption A2), we show in App.~\ref{appendix:MaxValueLowerBound} that
\begin{equation}\label{eq:LowerBoundVal}
	\max_{ \|  \BB  \|_{\rm F} =1 } \nu(\BB) = 2m \left(\sigma_{\max} ( \optmlOper ) \right)^{2(1-\frac{1}{m})}.
\end{equation}
We have thus obtained a lower-bound on $\lambda_{\max}( \Hess_{\weightsPointVec})$, which is independent of $\weightsPointVec$.

\begin{figure}[t!]
	\begin{subfigure}[t]{\columnwidth}
		\centering
		\includegraphics[width=3.5in,trim={0cm 0 0cm 0},clip]{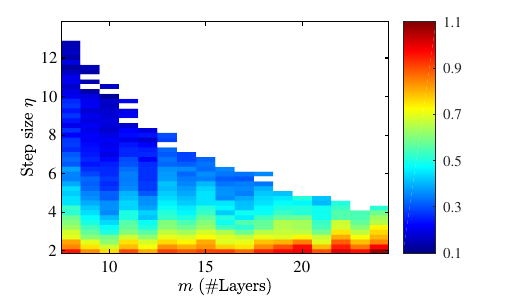}
		\caption{\label{fig:Theorem1_a}\centering{Minima sharpness vs.~step size and network depth}}
	\end{subfigure}
	\begin{subfigure}[t]{\columnwidth}
		\centering
		\includegraphics[width=3in,trim={0cm 0 0cm 0},clip]{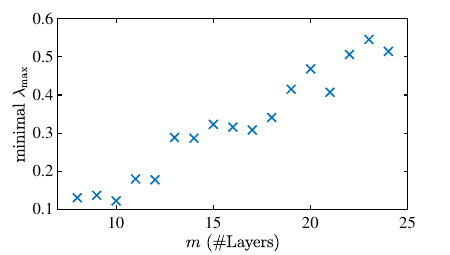}
		\caption{\label{fig:Theorem1_b}\centering{Sharpness of the flattest minima vs.~network depth}}
	\end{subfigure}
	\caption{Sharpness of minima obtained with identity initialization in fully connected ReLU networks trained to denoise MNIST digits. \protect\subref{fig:Theorem1_a}~The color of each tile corresponds to the sharpness of the minimum to which SGD converged for a particular step size and network depth~$m$. White tiles correspond to non-converged trainings. We can see that larger step sizes lead to flatter minima, and that the maximal step size allowing convergence behaves as $ 1/m $. \protect\subref{fig:Theorem1_b}~Here, we see that the sharpness of the flattest minimum (bluest tile) increases roughly linearly with $ m $, as Theorem~\protect\ref{theorem:LamdaMaxMinimalValue} predicts.}
	\label{fig:Theorem1}
\end{figure}

\begin{figure}[t!]
	\begin{subfigure}[t]{\columnwidth}
		\centering
		\includegraphics[width=3.5in,trim={0cm 0 0cm 0},clip]{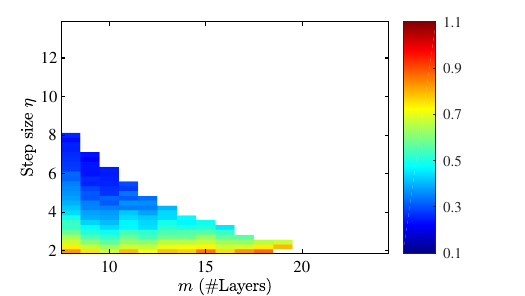}
		\caption{\label{fig:Lemma1_a}\centering{Minima sharpness vs.~step size and depth (random init.)}}
	\end{subfigure}
	\begin{subfigure}[t]{\columnwidth}
		\centering
		\includegraphics[width=3in,trim={0cm 0 0cm 0},clip]{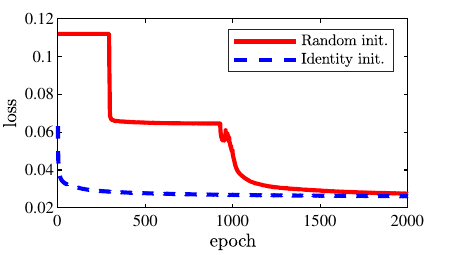}
		\caption{\label{fig:Lemma1_b}\centering{Training plot}}
	\end{subfigure}
	\caption{Sharpness of minima obtained with random initialization. \protect\subref{fig:Lemma1_a}~As opposed to identity initilization (Fig.~\ref{fig:Theorem1}), here the maximal step sizes allowing convergence are smaller, and the minima to which SGD converges are sharper. This aligns with the prediction of Lemma~\ref{lemma:WidestMinimaConvergence}. \protect\subref{fig:Lemma1_b}~We plot the progression of the training loss with random and identity initializations, for an $ 18 $ layer network with step size $\eta = 2.25$. The graph demonstrates that SGD converges faster when initialized at identity.}
	\label{fig:Lemma1}
\end{figure}

We now determine a particular solution achieving the bound. Denote the SVD of \( \optmlOper \) by \( \UU  \SSS \VV^T \) and let \(\weightsPointVec^* \in \minset \) be
\begin{equation}\label{eq:CanonicalSolution}
	\weights^*_m = \UU \SSS_m^{\frac{1}{m}}, \quad  \weights^*_j = \SSS_j^{\frac{1}{m}}, \quad   \weights^*_1 = \SSS_1^{\frac{1}{m}} \VV^T.
\end{equation}
Here we slightly abuse the notation \(\SSS_j^{1/m}\) to denote a $d_{j} \times d_{j-1}$ diagonal matrix whose $k$th diagonal entry is $(\sigma_k(\optmlOper))^{1/m}$, the $k$th largest singular value of~$\optmlOper$. Note that for this particular solution, all the weight matrices have the same set of nonzero singular values, which are precisely the $m$th roots of the singular values of $\optmlOper$. As we show in App.~\ref{appendix:MaxEigenvalCanonicalSol}, for this  solution it is rather easy to compute the Hessian's top eigenvalue, which turns out to equal
\begin{equation}
	\lambda_{\max} ( \Hess_{\weightsPointVec^*} ) = 2m\, (\sigma_{\max} ( \optmlOper ))^{2(1-\frac{1}{m})}.
\end{equation}
Since $\lambda_{\max} ( \Hess_{\weightsPointVec^*} )$ achieves the lower-bound \eqref{eq:LowerBoundVal}, this bound must be the minimal value of $\lambda_{\max}( \Hess_{\weightsPointVec}) $. We have thus established that
\begin{equation}
\min_{\weightsPointVec\in\minset}\lambda_{\max}( \Hess_{\weightsPointVec} ) = 2m\,(\sigma_{\max} ( \optmlOper ))^{2(1-\frac{1}{m})},
\end{equation}
which completes the proof for $ \lambda_{\max}(\Hess_{\weightsPointVec})$. The proof for the corresponding eigenvector can be found in App.~\ref{appendix:TopEigenvectorProof}.

Two comments are in place. First, note that as a byproduct, we obtained that the solution~\eqref{eq:CanonicalSolution} is a flattest global minimum. This is actually a special case of Theorem~\ref{theorem:SufficientCondition}, which applies also to non-diagonal weight matrices, and to matrices whose singular values do not all coincide with the $m$th roots of the singular values of $\optmlOper$. Namely, according to Theorem~\ref{theorem:SufficientCondition}, merely requiring that \( \sigma_{\max}(\weights_k) = (\sigma_{\max}(\optmlOper))^{1/m} \) for all~\( k \), already guarantees that a minimum is flattest. Second, although we focused on the case \( \EmpCovMat{x} = \Identity \), we conjecture that $\min_{ \weightsPointVec \in \minset } \lambda_{\max}( \Hess_{\weightsPointVec} ) = \max_{ \|  \BB  \|_{\rm F} =1 } \nu(\BB)$ also for arbitrary~$\EmpCovMat{x}$. However, in the general setting, there is no closed form solution for the maximization over $\BB$, so that its study seems to allow no further insight.

\begin{figure*}[t!]
	\begin{subfigure}[t]{0.5\linewidth}
		\includegraphics[width=3.7in,trim={0.3in 0 0 0},clip]{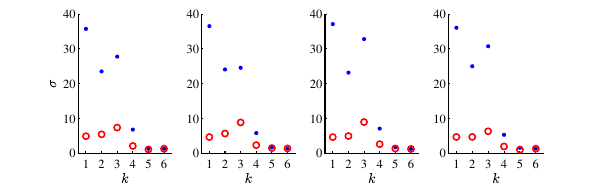}
		\caption{\label{fig:intermediate_gain_nonlinear_sharp}Sharp minima (Adam)}
	\end{subfigure}\begin{subfigure}[t]{0.5\linewidth}
		\includegraphics[width=3.7in,trim={0.3in 0 0 0},clip]{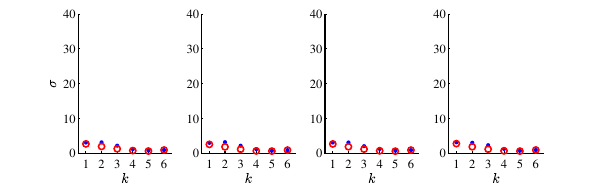}
		\caption{\label{fig:intermediate_gain_nonlinear_wide}Flat minima (SGD with large step-size)}
	\end{subfigure}
	\caption{Intermediate gains versus layer number for six-layer fully connected ReLU networks, trained to denoise MNIST digits. The maximal gain from input to each layer is marked by a blue dot (analogous to the highest blue dot in Fig.~\ref{fig:intermediate_gain_linear}). The red circles correspond to the intermediate gain of the signal that experiences the maximal end-to-end amplification. \protect\subref{fig:intermediate_gain_nonlinear_sharp}~The gains in the sharp minimum solutions reached by Adam, are large. \protect\subref{fig:intermediate_gain_nonlinear_wide}~The gains in the flat minimum solutions found by SGD, are significantly more balanced.}
	\label{fig:intermediate_gain_nonlinear}
\end{figure*}

\begin{figure}[t!]
	\begin{subfigure}[t]{\columnwidth}
		\centering
		\includegraphics[width=3.5in,trim={0cm 0 0cm 0},clip]{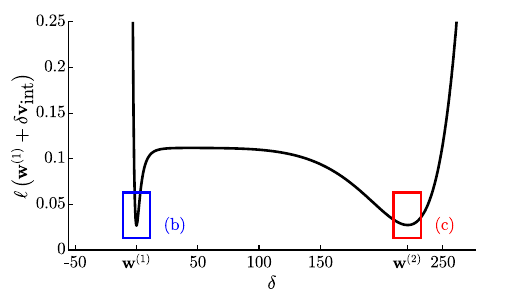}
		\caption{\label{fig:NonlinearDeceivingExampleInterpolation}\centering{The loss along the line connecting $ \weightsPointVec^{(1)} $ and $ \weightsPointVec^{(2)} $}}
	\end{subfigure}\\
	
	\vspace{0.3cm}
	
	\begin{subfigure}[t]{0.51\columnwidth}
		\includegraphics[width=1.75in,trim={0cm 0 0cm 0},clip]{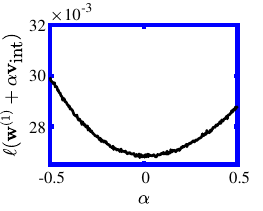}
		\caption{\label{fig:NonlinearDeceivingExampleZoom1}\centering{Zoom-in around $ \weightsPointVec^{(1)} $}}
	\end{subfigure}\begin{subfigure}[t]{0.49\columnwidth}
		\includegraphics[width=1.75in,trim={0cm 0 0cm 0},clip]{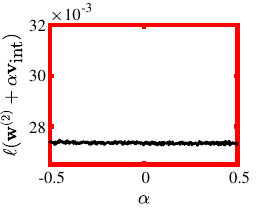}
		\caption{\label{fig:NonlinearDeceivingExampleZoom2}\centering{Zoom-in around $ \weightsPointVec^{(2)} $}}
	\end{subfigure}\\
	
	\vspace{0.3cm}
	
	\begin{subfigure}[t]{0.51\columnwidth}
		\includegraphics[width=1.75in,trim={0cm 0 0 0},clip]{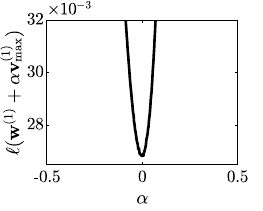}
		\caption{\label{fig:NonlinearWidth1}\centering{Sharpest direction of $ \weightsPointVec^{(1)} $}}
	\end{subfigure}\begin{subfigure}[t]{0.49\columnwidth}
		\includegraphics[width=1.75in,trim={0cm 0 0 0},clip]{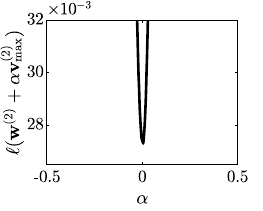}
		\caption{\label{fig:NonlinearWidth2}\centering{Sharpest direction of $ \weightsPointVec^{(2)} $}}
	\end{subfigure}
	\caption{One dimensional cross-sections of the loss landscape. \protect\subref{fig:NonlinearDeceivingExampleInterpolation}~The loss along the line connecting $ \weightsPointVec^{(1)} $ and $ \weightsPointVec^{(2)} $, the solutions obtained by SGD and Adam, respectively. Here, the direction vector is $ \vv_{\text{int}} = (\weightsPointVec^{(2)}-\weightsPointVec^{(1)} )/\|{\weightsPointVec^{(2)}-\weightsPointVec^{(1)}}\| $. Along this cross-section, $ \weightsPointVec^{(1)} $ appears to be sharper than $ \weightsPointVec^{(2)} $.  \protect\subref{fig:NonlinearDeceivingExampleZoom1}, \protect\subref{fig:NonlinearDeceivingExampleZoom2}~Close-ups on $\weightsPointVec^{(1)}$ and $\weightsPointVec^{(2)}$, respectively. \protect\subref{fig:NonlinearWidth1}, \protect\subref{fig:NonlinearWidth2}~Cross-sections corresponding to the sharpest direction of $ \weightsPointVec^{(1)} $ and $ \weightsPointVec^{(2)} $, respectively, which show that $\weightsPointVec^{(1)}$ is in fact flatter. Here, $ \vv_{\max}^{(i)} $ is the top eigenvector of the Hessian matrix at $ \weightsPointVec^{(i)}$.}
	\label{fig:NonlinearDeceivingExample}
\end{figure}

\section{Experiments with Nonlinear Networks}\label{sec:Experiments}
Our theoretical results apply to linear networks. Yet, as we now  empirically illustrate, they also nicely capture the behavior of nonlinear networks. To show this, we trained fully connected networks with ReLU activation functions to denoise images of handwritten digits. We used the MNIST dataset \cite{lecun1998mnist} and simulated zero-mean white Gaussian noise of standard deviation $1.25$, where the pixel range of the clean images was $[0,1]$. The input, output, and all intermediate representations had 784 dimensions, so that the total number of parameters was over $600,000\times m$ for an $m$-layer network. We minimized the quadratic loss using SGD without momentum.

We start by demonstrating Theorem~\ref{theorem:LamdaMaxMinimalValue}, which asserts that all minima become sharper as the depth of the network increases.  Figure~\ref{fig:Theorem1} visualizes the result of training networks of different depths using varying step sizes. For each configuration, we measured the top eigenvalue of the Hessian using the power method. Thus, each tile in Fig.~\ref{fig:Theorem1_a} corresponds to a different trained network, where $\lambda_{\max}( \Hess_{\weightsPointVec} )$ is color-coded and white tiles correspond to non-converged runs. We can see that larger step sizes indeed lead to flatter minima (\ie more bluish tiles). Also, notice that the maximal step size allowing convergence behaves as $ 1/m $, in accordance with Theorem~\ref{theorem:LamdaMaxMinimalValue}. Now, for each network depth, we took the lowest measured sharpness, and plotted it against the number of layers in Fig.~\ref{fig:Theorem1_b}. Here we can see that the flattest minima indeed get sharper as the network gets deeper. Particularly, the behavior is roughly linear, as \eqref{eq:LamdaMaxMinimalValue} predicts.

In the experiment above, we used identity initialization, as 
Lemma~\ref{lemma:WidestMinimaConvergence} suggests this should lead to a flat minimum. To verify that this is indeed the case, we repeated the experiment with the initialization of \citet{he2015delving}. As can be seen in Fig.~\ref{fig:Lemma1_a}, in this case SGD indeed converges to sharper minima, and cannot accommodate large step sizes. To further compare these initializations, we plot the loss function during training in Fig.~\ref{fig:Lemma1_b}. We can see that with identity initialization, the loss rapidly convergence already at an early stage, whereas with random initialization it decreases only at later iterations. This shows that identity initialization indeed leads to flatter minima, as Lemma~\ref{lemma:WidestMinimaConvergence} predicts, and that SGD converges faster to flat solutions.

Next, we demonstrate Theorem~\ref{theorem:PartialProducts}. For the purpose of comparing the properties of sharp and flat solutions, we trained a six layer network for the same denoising problem as above, using two different optimization methods: (i)~SGD with a large step size and moderate batch size, a configuration that is known to converge to flat minima \cite{keskar2016large}; (ii)~Adam \cite{kingma2014adam} with a small step size, which can converge to sharp minima \cite{wu2018sgd}. We ran each method with four different random initializations, and calculated the top eigenvalue of the Hessian at the minimum it converged to. We verified this eigenvalue was indeed significantly smaller (roughly $6\times$) for the minima found by SGD. Now, for each network, we estimated the maximal gain that any signal can experience up to each layer. We did so by optimizing the input so as to maximize the norm of the intermediate signal, where we started from 100 different random initializations and chose the maximum over all runs. These gains are analogous to the top singular values of the partial matrix products in the linear setting. As can be seen in Fig.~\ref{fig:intermediate_gain_nonlinear}, these gains (blue dots) tend to be high in the sharp solutions, and quite restrained in the flat ones. A similar behavior is seen for the intermediate gains of the signal that experiences the largest end-to-end amplification (red circles). These are analogous to the intermediate gains of the vector $\vv$ in the linear setting. These behaviors are in accordance with points (ii) and (i) of Theorem~\ref{theorem:PartialProducts}, respectively.

Finally, we illustrate Lemma~\ref{lemma:MinimaInterpolation}. For each of the 16 pairs of flat and sharp minima, we evaluated the loss along the line connecting them. The result for one pair is shown in Fig.~\ref{fig:NonlinearDeceivingExampleInterpolation} (all 16 pairs showed the same behavior). As can be seen, the flat minimum appears to be sharper than the sharp one along this 1D cross-section. To appreciate how distorted this image is, we also plot in figures \ref{fig:NonlinearWidth1} and \ref{fig:NonlinearWidth2} the loss along the sharpest cross-section of each minimum, which visualizes its true sharpness. This illustration confirms that the interpolation visualization is frequently deceiving also for nonlinear networks in high-dimensional settings.

\section{Related Work}\label{sec:RelatedWork}

\paragraph{Notions of sharpness}
Many works studied flat minima in neural networks, especially in relation to generalization. These minima are thought to represent simple models, which are less expected to overfit. However, there is no single definition for minimum sharpness. \citet{hochreiter1997flat} defined it as the size of the connected region around the minimum where the training loss remains low. \citet{chaudhari2019entropy} used local entropy as a measure of sharpness. And \citet{keskar2016large} characterized sharpness using the eigenvalues of the Hessian, and proposed an approximation using the maximal loss in an $\epsilon$-neighborhood of the minimum. These notions of sharpness were devised with the purpose of correlating with generalization, although \citet{neyshabur2017exploring} and \citet{dinh2017sharp} argued they often do not suffice for predicting the generalization of solutions. In contrast to these papers, the definition we studied is associated with the stability of the optimizer at the minimum. Thus, whether correlated with generalization or not for nonlinear nets, it is most relevant for the solutions found in practice by SGD.

\paragraph{Balancedness and alignment}
We showed that GD and SGD tend to converge to solutions that are balanced and aligned. Similar results were studied in different contexts. For example, it has been shown that gradient flow maintains the differences between the squared norms of the layers, both in linear networks \cite{arora2018optimization} and in nonlinear models with homogeneous activation functions \cite{du2018algorithmic}. Notice, however, that as opposed to our analysis, these results break for GD with a large step size, as the authors indicated in their work. Interestingly, while our results apply to linear models trained with a quadratic loss, similar phenomena occur in deep linear networks for binary classification trained with a monotonic loss. Specifically, \citet{ji2019gradient} showed that in those cases gradient flow aligns the layers in the sense that the normalized matrices asymptotically equal their rank-$1$ approximations. Additionally, they showed that adjacent rank-$1$ approximations have a singular vector in common, where a left singular vector of one layer asymptotically matches a right singular vector of the next. Nevertheless, note that networks with vector-valued outputs trained for regression, as we analyzed here, exhibit richer and more complex behaviors than models with scalar outputs trained for binary classification.

\paragraph{Visualization of minima sharpness}
It is fairly common to compare the sharpness of two minima by plotting the loss along the line connecting them \cite{keskar2016large,jastrzkebski2017three}. Yet, we are not the first to discuss the limitations of this common practice. For example, \citet{li2018visualizing} argued that this may depict a misleading picture due to unnormalized weights. Instead, they offered to plot the loss on a randomly chosen 2D cross-section, where the perturbation is normalized with respect to the weights. Here, we gave a concrete example along with a proof that the interpolation visualization is deceiving surprisingly often.

\section{Conclusion}\label{sec:Conclusion}
Gradient descent methods have a bias towards flat minima. In this work, we proved that for linear networks trained with a quadratic loss, these solutions possess unique properties. For example, in flat minima networks, the signal $\vv$ that experiences the largest gain end-to-end, is amplified as moderately as possible by each layer. Moreover, no other signal can experience a significantly larger gain than $\vv$ up to an intermediate layer. Finally, these solutions exhibit a coupling between the layers, which forms a distinct path for~$\vv$ from input to output. While our theoretical results apply to linear networks, our experiments show that these properties are also characteristic of nonlinear networks trained in practice.

\section*{Acknowledgments}
This research was supported in part by the Technion Ollendorff Minerva Center.

\prefixing
\bibliography{main}
\bibliographystyle{myStyle}
\nonprefixing

\clearpage
\icmltitlerunning{\myTitle: Supplementary Material}
\onecolumn
\icmltitle{\myTitle:\\Supplementary Material}
\appendix
\setcounter{equation}{0}
\setcounter{lemma}{0}
\setcounter{claim}{0}
\setcounter{definition}{0}
\setcounter{table}{0}

\renewcommand{\thelemma}{S\arabic{lemma}}
\renewcommand{\theclaim}{S\arabic{claim}}
\renewcommand{\thedefinition}{S\arabic{definition}}
\renewcommand{\thesection}{\Roman{section}}
\renewcommand{\theequation}{S\arabic{equation}}
\renewcommand{\thetable}{S\arabic{table}}

\section{Stability of Minima}\label{appendix:Stability}
It is well know that for a $ \beta $-smooth function that is bounded from below, GD with constant step size $ 0<\eta< 2/\beta $ converges to a stationary point. A twice continuously differentiable function $ f $ is  $ \beta $-smooth \emph{if and only if} $ \lambda_{\max}(\nabla^2 f(\weightsPointVec) ) \leq \beta$ for every point $\weightsPointVec \in \R^{N} $. Hence, convergence to a stationary point is guaranteed if $ \lambda_{\max}(\nabla^2 f(\weightsPointVec) ) \leq 2/\eta  $ for all $  \weightsPointVec \in \R^{N} $. This seemingly stringent global requirement can in fact also be replaced by a local one, as shown by \citet{wu2018sgd}. Specifically, they use the following.
\begin{definition}
	Let $ \weightsPointVec^* $ be a stationary point of $ f $. Consider the linearized dynamical system of GD, namely
	\begin{equation}
		\weightsPointVec_{k+1} = \weightsPointVec_{k} - \eta \nabla^2 f(\weightsPointVec^*)( \weightsPointVec_{k} - \weightsPointVec^*).
	\end{equation}
	Then $ \weightsPointVec^* $ is said to be \emph{linearly stable} if there exists a constant $ C \in \R $, such that $ \norm{\weightsPointVec_k} \leq C \norm{\weightsPointVec_0}$ for all $  k > 0  $.
\end{definition}
In other words, $ \weightsPointVec^* $ is linearly stable if once we have arrived near this critical point, we stay around it. In their paper, \citet{wu2018sgd} show that $ \weightsPointVec^* $ is a linearly stable minimizer if
\begin{equation}\label{eq:OriginalStabilityCondition}
	\lambda_{\max}\left(  \big( \Identity -\eta \nabla^2 f(\weightsPointVec^*) \big)^2 \right) \leq 1.
\end{equation}
Note that for all $ i \in \{1,\ldots,N \} $
\begin{align}
	\lambda_{i}\left( \big( \Identity -\eta \nabla^2 f(\weightsPointVec^*) \big)^2 \right)  & = \lambda ^2 _{i}\left( \Identity -\eta \nabla^2 f(\weightsPointVec^*) \right)\nonumber \\
	& = \left(1 - \eta \lambda_{i}\left( \nabla^2 f(\weightsPointVec^*)\right) \right)^2\nonumber\\
	& =  1 - \eta \lambda_{i}\left(\nabla^2 f(\weightsPointVec^*)\right)\left(2- \eta \lambda_{i}\left(\nabla^2 f(\weightsPointVec^*)  \right)\right),
\end{align}
where $ \lambda_{i} $ is the $ i $th largest eigenvalue. Since $ \eta  $ and $ \{ \lambda_{i} \} $ are all nonnegative, it follows that \eqref{eq:OriginalStabilityCondition} is equivalent to
\begin{equation}
	\lambda_{\max}\left(\nabla^2 f(\weightsPointVec^*)\right) \leq \frac{2}{\eta}.
\end{equation}
This results asserts that flat minima are stable solutions for GD. In their paper, they also provide a similar result for stochastic GD (SGD), which shows that the sharpness of a minimum should increase to ensure stability for SGD as well.

\section{Proof of Lemma~\ref{lemma:WidestMinimaConvergence}}\label{appendix:WidestMinimaConvergenceProof}
\citet{nar2018step} showed that under the Lemma's conditions, the weight matrices converge at a linear rate to $ \weights_i = \optmlOper^{\frac{1}{m}} $ for all $i$. According to our Theorem \ref{theorem:SufficientCondition}, this solution is a flattest minimum, thus demonstrating ($ii$).

\citet{arora2018optimization} showed that gradient flow (GF) satisfies
\begin{equation}\label{eq:MatrixEquality}
\weights_i(t)\weights^T_i(t) = \weights^T_{i+1}(t)\weights_{i+1}(t), \quad \forall t\geq 0
\end{equation}
in our setting. Denoting the SVD of $ \weights_i(t) $ by $ \UU_i(t)\SSS_i(t)\VV^T_i(t) $, we thus have that $\UU_i(t)\SSS^2_i(t)\UU^T_i(t) = \VV_{i+1}(t)\SSS^2_{i+1}(t)\VV^T_{i+1}(t)$, which implies that\footnote{The SVD can be non-unique, however there necessarily exists a decomposition satisfying $ \UU_i(t) = \VV_{i+1}(t) $.}
\begin{equation}\label{eq:proofGF1}
\UU_i(t) = \VV_{i+1}(t), \quad \SSS_{i}(t) = \SSS_{i+1}(t), \quad \forall t\geq 0.
\end{equation}
Assume that GF converges to a global minimimum and let $ \UU\SSS\VV^T $ denote the SVD of $\optmlOper$. Since $ \{ \SSS_{i}(t) \}_{i = 1}^m $ are identical, they converge to the same limit, $ \bar{\SSS} $. Let $ \weights_i = \UU_i\bar{\SSS}\VV_i^T $ denote the limit of $\weights_i(t)$. Then, from \eqref{eq:proofGF1}, we have that $ \VV_{i+1}^T \UU_i  = \Identity $ for all $i$. Consequently, 
\begin{equation}
\weights_m \weights_{m-1} \cdots \weights_1 = \UU_{m} \bar{\SSS}^m \VV^T_{1}.
\end{equation} 
But since the left hand side equals $\optmlOper$ by assumption, the right hand side must coincide with the SVD of $\optmlOper$. This means that $ \bar{\SSS} = \SSS^{\frac{1}{m}} $. Again, by Theorem~\ref{theorem:SufficientCondition}, this is a flattest minimum, thus demonstrating ($i$).

\section{Scalar Networks}\label{appendix:ScalarCase}

\subsection{The Set of Flattest Minima}\label{appendix:ScalarCaseWidestMinimum}

As mentioned in the main text, in the scalar case, the end-to-end function \( f_\weightsPointVec(x) \) implemented by the network is given by
\begin{equation}
	f_\weightsPointVec(x) = \prod_{j=1}^{m} w_j x,
\end{equation}
where $ \weightsPointVec = [w_1,w_2,\ldots,w_m]^T $. In our analysis we consider a quadratic loss function, \emph{i.e.}
\begin{equation}\label{eq:scalarObjectiveFunction}
\loss(\weightsPointVec) = \E{\big(y-f_\weightsPointVec(x)\big)^2}.
\end{equation} 
Our goal is to characterize the set of flattest minima of the loss w.r.t. \( \weightsPointVec \). It is well known that the optimal coefficient for linear estimation is given by
\begin{equation} \label{eq:ScalarFirstOrderOptimalityValue}
\optmlOperScalar = \frac{\hat{\sigma}_{xy}}{\hat{\sigma}_x^2},
\end{equation}
where \( \hat{\sigma}_x^2=\Ei{x^2} \) is the empirical second-order moment of \( x \), and \( \hat{\sigma}_{xy}=\Ei{xy} \) is the empirical cross second-order moment between \(x\) and \( y \). Therefore, at any global minimum of \( \loss(\weightsPointVec) \), we have
\begin{equation}\label{eq:FirstOrderOptimalityScalar}
\prod_{j=1}^{m} w_j = \optmlOperScalar.
\end{equation}
To compute the Hessian matrix of \( \loss(\weightsPointVec) \), we first calculate the partial derivative w.r.t. \(  w_k \),
\begin{align}\label{eq:FirstOrderDevScalarCase}
	\frac{\partial }{\partial w_k} \loss(\weightsPointVec) & = \frac{\partial }{\partial w_k}  \E{\big(y-f_\weightsPointVec(x)\big)^2}  = -2\E{\big(y-f_\weightsPointVec(x)\big)\frac{\partial }{\partial w_k}f_\weightsPointVec(x)}\nonumber\\
	& = -2\E{\Big(y-\prod_{j=1}^{m} w_j x \Big)\prod_{j \neq k } w_j x} = 2\Big( \hat{\sigma}_x^2 \prod_{j=1}^{m} w_j  - \hat{\sigma}_{xy}\Big)\prod_{j \neq k } w_j
\end{align}
We now complete the derivation by differentiating \eqref{eq:FirstOrderDevScalarCase} w.r.t. \(  w_q \),
\begin{align}\label{eq:SecondOrderDevScalarCase}
\frac{\partial^2 }{\partial w_q \partial w_k} \loss(\weightsPointVec) & = \frac{\partial }{\partial w_q} \left[ 2\Big( \hat{\sigma}_x^2 \prod_{j=1}^{m} w_j  - \hat{\sigma}_{xy}\Big)\prod_{j \neq k } w_j \right] \nonumber\\
& = 2 \prod_{j \neq k } w_j \frac{\partial }{\partial w_q} \Big( \hat{\sigma}_x^2 \prod_{j=1}^{m} w_j  - \hat{\sigma}_{xy}\Big) + 2 \Big( \hat{\sigma}_x^2 \prod_{j=1}^{m} w_j  - \hat{\sigma}_{xy}\Big) \frac{\partial }{\partial w_q} \prod_{j \neq k } w_j.
\end{align}
Eq.~\eqref{eq:FirstOrderOptimalityScalar} asserts that \( \hat{\sigma}_x^2 \prod_{j=1}^{m} w_j  - \hat{\sigma}_{xy} = 0 \) at global minima, therefore the second term in \eqref{eq:SecondOrderDevScalarCase} vanishes, and we obtain 
\begin{equation}\label{eq:SecondOrderDevScalarCaseRes}
\frac{\partial^2 }{\partial w_q \partial w_k} L(\weightsPointVec) = 2 \prod_{j \neq k } w_j \frac{\partial }{\partial w_q} \Big( \hat{\sigma}_x^2 \prod_{j=1}^{m} w_j  - \hat{\sigma}_{xy}\Big) = 2 \hat{\sigma}_x^2 \Big(  \prod_{j \neq k } w_j \Big) \Big(  \prod_{j \neq q } w_j \Big).
\end{equation}
Hence, using \eqref{eq:ScalarFirstOrderOptimalityValue}, we can express the elements of the Hessian matrix \( \Hess_\weightsPointVec \) as 
\begin{equation}
\big( \Hess_\weightsPointVec \big)_{k,q} = 2 \hat{\sigma}_x^2  \optmlOperScalar^2 \frac{1}{w_k w_q}.
\end{equation}
Let us define the vector \( \zz = [w_1^{-1},w_2^{-1},\ldots,w_m^{-1}]^T \), then the Hessian matrix can be equivalently written as
\begin{equation}\label{eq:ScalarCaseHessian}
\Hess_\weightsPointVec = 2 \hat{\sigma}_x^2  \optmlOperScalar^2 \zz \zz^T.
\end{equation}
This shows that the Hessian is a rank one matrix, which implies that it has only one nonzero eigenvalue, with a corresponding eigenvector \( \zz \). Therefore,
\begin{equation}
\lambda_{\max}\big( \Hess_\weightsPointVec \big) \zz = \Hess_\weightsPointVec \zz = 2 \hat{\sigma}_x^2  \big(\optmlOperScalar\big)^2 \zz \zz^T \zz = 2 \hat{\sigma}_x^2  \big(\optmlOperScalar\big)^2 \norm{\zz}^2 \zz,
\end{equation}
so that the eigenvalue is given by
\begin{equation}
\lambda_{\max}\big( \Hess_\weightsPointVec \big) = 2 \hat{\sigma}_x^2 \optmlOperScalar^2\|\zz\|^2 =  2 \hat{\sigma}_x^2 \optmlOperScalar^2 \sum_{j = 1}^{m} \frac{1}{w_j^2} .
\end{equation}
To determine the sharpness of the flattest minima, we need to solve the problem
\begin{equation}\label{eq:OptimizationScalar}
\min_{\weightsPointVec \in \R^m } \   \lambda_{\max}\big( \Hess_\weightsPointVec \big) \qquad \text{s.t.} \qquad \prod_{j=1}^{m} w_j = \optmlOperScalar.
\end{equation}
By the inequality of the arithmetic and geometric means, we have that for any feasible point \( \weightsPointVec \) 
\begin{equation} \label{eq:GeoArithmInq}
\sum_{j = 1}^{m} \frac{1}{w_j^2}  \geq  m \times \Bigg( \prod_{j = 1}^{m}  \frac{1}{w_j^2} \Bigg)^{\frac{1}{m}} = m \times  \optmlOperScalar ^{-\frac{2}{m}}.
\end{equation}
Therefore, for all feasible points,
\begin{equation}
	\lambda_{\max}\big( \Hess_\weightsPointVec \big) \geq  2  m \hat{\sigma}_x^2 \optmlOperScalar^{2(1-\frac{1}{m})}  .
\end{equation}
On the other hand, this inequality can be achieved by setting \( | w_1 | = | w_2 | = \cdots = | w_m | \). This shows that the right-hand-side is precisely the sharpness of the flattest minimum, so that 
\begin{equation}
	\wideminset = \Big\{ \weightsPointVec \in \R^{m} \ :  \prod_{j = 1}^{m} \text{sgn}(w_j) =  \text{sgn}\big( \optmlOperScalar \big)  \quad  \text{and} \quad\ |w_j| = \sqrt[m]{|\optmlOperScalar|} \quad \forall j   \Big\}.
\end{equation}

\subsection{Proof of Lemma \ref{lemma:MinimaInterpolation}}\label{appendix:ScalarCaseMinimaInterpolation}
In this section we examine the behavior of the loss function on a line connecting two minima.

\begin{claim} \label{claim:ConnectingLine}
	Assume that $ \optmlOperScalar > 0  $ and let $ \weightsPointVec^{(1)} $ and $ \weightsPointVec^{(2)} $ be minimizers of the objective \eqref{eq:scalarObjectiveFunction} in $ \R^m_{+} $. Then, along the line connecting $ \weightsPointVec^{(1)} $ and $ \weightsPointVec^{(2)} $, the loss function will appear sharper around $ \weightsPointVec^{(1)} $ than around $ \weightsPointVec^{(2)} $ if
	\begin{equation}
		\sum_{i = 1}^{m} \frac{w_i^{(2)}}{w_i^{(1)}} >  \sum_{i = 1}^{m} \frac{w_i^{(1)}}{w_i^{(2)}}.
	\end{equation}
\end{claim}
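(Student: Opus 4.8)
The plan is to analyze the one-dimensional restriction of the loss $g(t) = \loss\big((1-t)\weightsPointVec^{(1)} + t\weightsPointVec^{(2)}\big)$ and compare its curvature at $t=0$ (corresponding to $\weightsPointVec^{(1)}$) and at $t=1$ (corresponding to $\weightsPointVec^{(2)}$). Since both endpoints are minima of $\loss$, the directional curvature along the connecting line is governed by the Hessian $\Hess_\weightsPointVec$ contracted with the (normalized) direction vector $\vv_{\rm int} = (\weightsPointVec^{(2)} - \weightsPointVec^{(1)})/\|\weightsPointVec^{(2)} - \weightsPointVec^{(1)}\|$. So the key quantity is $g''(0) \propto (\weightsPointVec^{(2)} - \weightsPointVec^{(1)})^T \Hess_{\weightsPointVec^{(1)}} (\weightsPointVec^{(2)} - \weightsPointVec^{(1)})$ versus $g''(1) \propto (\weightsPointVec^{(2)} - \weightsPointVec^{(1)})^T \Hess_{\weightsPointVec^{(2)}} (\weightsPointVec^{(2)} - \weightsPointVec^{(1)})$, and the visualization shows $\weightsPointVec^{(1)}$ as sharper precisely when $g''(0) > g''(1)$.

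Next I would plug in the rank-one Hessian formula derived earlier in this section, $\Hess_{\weightsPointVec} = 2\hat\sigma_x^2 \optmlOperScalar^2 \, \zz\zz^T$ where $\zz = [w_1^{-1}, \ldots, w_m^{-1}]^T$. For a minimizer $\weightsPointVec^{(\ell)}$ write $\zz^{(\ell)} = [1/w_1^{(\ell)}, \ldots, 1/w_m^{(\ell)}]^T$. Then the directional curvature at $\weightsPointVec^{(\ell)}$ along $\vv_{\rm int}$ is proportional to $\big( (\zz^{(\ell)})^T (\weightsPointVec^{(2)} - \weightsPointVec^{(1)}) \big)^2$. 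Expanding the inner product, $(\zz^{(\ell)})^T \weightsPointVec^{(2)} = \sum_i w_i^{(2)}/w_i^{(\ell)}$ and $(\zz^{(\ell)})^T \weightsPointVec^{(1)} = \sum_i w_i^{(1)}/w_i^{(\ell)}$. Taking $\ell = 1$ gives $(\zz^{(1)})^T(\weightsPointVec^{(2)} - \weightsPointVec^{(1)}) = \sum_i w_i^{(2)}/w_i^{(1)} - m$ (since $\sum_i w_i^{(1)}/w_i^{(1)} = m$), and taking $\ell = 2$ gives $(\zz^{(2)})^T(\weightsPointVec^{(2)} - \weightsPointVec^{(1)}) = m - \sum_i w_i^{(1)}/w_i^{(2)}$. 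Since $\weightsPointVec^{(1)}, \weightsPointVec^{(2)} \in \R^m_+$ and $\optmlOperScalar > 0$, the AM-GM inequality applied to the positive numbers $\{w_i^{(2)}/w_i^{(1)}\}$ (whose product is $\prod_i w_i^{(2)}/\prod_i w_i^{(1)} = \optmlOperScalar/\optmlOperScalar = 1$) shows $\sum_i w_i^{(2)}/w_i^{(1)} \geq m$ and similarly $\sum_i w_i^{(1)}/w_i^{(2)} \geq m$; hence both inner products above have a definite sign ($\geq 0$ and $\leq 0$ respectively), so squaring preserves the comparison, and the visualization shows $\weightsPointVec^{(1)}$ as sharper iff $\sum_i w_i^{(2)}/w_i^{(1)} - m > m - \sum_i w_i^{(1)}/w_i^{(2)}$, i.e. iff $\sum_i w_i^{(2)}/w_i^{(1)} + \sum_i w_i^{(1)}/w_i^{(2)} > 2m$.

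This last inequality, however, is \emph{always} true by AM-GM (each of the two sums is $\geq m$, with equality only when all ratios are $1$, i.e. $\weightsPointVec^{(1)} = \weightsPointVec^{(2)}$), which would prove too much — so the precise claimed condition must instead be a comparison of the two curvatures \emph{relative to a common normalization of the direction}, and the correct reading is that $\weightsPointVec^{(1)}$ appears sharper iff $\big|(\zz^{(1)})^T(\weightsPointVec^{(2)} - \weightsPointVec^{(1)})\big| > \big|(\zz^{(2)})^T(\weightsPointVec^{(2)} - \weightsPointVec^{(1)})\big|$, which after substituting the expressions above and using that both quantities $\sum_i w_i^{(2)}/w_i^{(1)} - m$ and $m - \sum_i w_i^{(1)}/w_i^{(2)}$ are nonnegative, reduces exactly to $\sum_i w_i^{(2)}/w_i^{(1)} > \sum_i w_i^{(1)}/w_i^{(2)}$. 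I would therefore carry out the comparison carefully with the shared normalization $\vv_{\rm int}$ cancelling, so that the criterion ``sharper at $\weightsPointVec^{(1)}$'' becomes precisely the stated inequality. The main obstacle is bookkeeping: being careful that ``appears sharper along the line'' means comparing $g''(0)$ and $g''(1)$ with the \emph{same} direction vector (so the $\|\weightsPointVec^{(2)} - \weightsPointVec^{(1)}\|^{-2}$ factors cancel), and correctly using positivity of the two linear functionals (a consequence of $\weightsPointVec^{(1)},\weightsPointVec^{(2)} \in \R^m_+$ together with AM-GM) so that the squares in the quadratic forms can be compared by comparing the functionals themselves.
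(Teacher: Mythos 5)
Your proposal is correct and follows essentially the same route as the paper: a second-order Taylor expansion along the connecting line (gradients vanish and the losses at the two minima are equal), the rank-one Hessian $\Hess_\weightsPointVec = 2\hat\sigma_x^2\optmlOperScalar^2\zz\zz^T$ reducing the comparison to one of squared inner products, and AM-GM to pin down the signs of $\sum_i w_i^{(2)}/w_i^{(1)} - m$ and $m - \sum_i w_i^{(1)}/w_i^{(2)}$ so the absolute values resolve to the stated condition. The momentary slip in your middle paragraph (comparing the signed quantities $A>B$ rather than $|A|>|B|$) is correctly caught and repaired, so the final argument matches the paper's.
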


\begin{proof}
The direction vector of the connecting line is $ \bAlpha = (\weightsPointVec^{(1)} - \weightsPointVec^{(2)}) /\normi{\weightsPointVec^{(1)} - \weightsPointVec^{(2)}} $. Along this direction, the behavior of the loss function around $ \weightsPointVec^{(i)} $ is given by
\begin{equation}
	\loss(\weightsPointVec^{(i)} + \eta \bAlpha ) \approx \loss(\weightsPointVec^{(i)})+ \eta  \bAlpha^T \nabla \loss(\weightsPointVec^{(i)}) + \frac{\eta^2 }{2}\bAlpha^T \Hess(\weightsPointVec^{(i)}) \bAlpha.
\end{equation}
Since $ \nabla \loss(\weightsPointVec^{(1)}) = \nabla \loss(\weightsPointVec^{(2)}) = \zeroVec $ and $ \loss(\weightsPointVec^{(1)}) = \loss(\weightsPointVec^{(2)})  $, the loss function will appear sharper around $ \weightsPointVec^{(1)} $ than around $ \weightsPointVec^{(2)} $, if $ \bAlpha^T \Hess(\weightsPointVec^{(1)}) \bAlpha > \bAlpha^T \Hess(\weightsPointVec^{(2)}) \bAlpha $. From  \eqref{eq:ScalarCaseHessian}, this condition is equivalent to $\|\bAlpha^T\zz^{(1)}\|^2>\|\bAlpha^T\zz^{(2)}\|^2$, or more explicitly,
\begin{equation}
	\left(  \left(\weightsPointVec^{(1)} - \weightsPointVec^{(2)}\right)^T \zz^{(1)} \right)^2 > 
	\left(  \left(\weightsPointVec^{(1)} - \weightsPointVec^{(2)}\right)^T \zz^{(2)} \right)^2.
\end{equation}
Since \( \zz = [w_1^{-1},w_2^{-1},\ldots,w_m^{-1}]^T \), this inequality can be written as 
\begin{equation} \label{eq:AbsCriterion}
\left| \sum_{i = 1}^{m}\frac{w_i^{(2)}}{w_i^{(1)}} - m \right| > 
\left|  \sum_{i = 1}^{m}\frac{w_i^{(1)}}{w_i^{(2)}} - m  \right|.
\end{equation}
Note that
\begin{equation}
	\sum_{i = 1}^{m}\frac{w_i^{(2)}}{w_i^{(1)}}
	\geq m \sqrt[m]{\prod_{i = 1}^{m}\frac{w_i^{(2)}}{w_i^{(1)}} }
	= m \sqrt[m]{\frac{\prod_{i = 1}^{m} w_i^{(2)}}{\prod_{j = 1}^{m} w_j^{(1)}}}
	= m \sqrt[m]{\frac{\optmlOperScalar}{\optmlOperScalar}}
	= m.
\end{equation}
Similarly, $ \sum_{i = 1}^{m}\frac{w_i^{(1)}}{w_i^{(2)}}
\geq m $. Therefore, \eqref{eq:AbsCriterion} can be reduced to
\begin{equation}
\sum_{i = 1}^{m} \frac{w_i^{(2)}}{w_i^{(1)}} >  \sum_{i = 1}^{m} \frac{w_i^{(1)}}{w_i^{(2)}}.
\end{equation}
\end{proof}

Notice that the loss function is symmetric in a sense that if we flip the sign of two scalar layers, then it remains the same. Therefore, without loss of generality, we can restrict our analysis to a single orthant. 
Let $ \optmlOperScalar >0  $, and $ \weightsPointVec^{(1)} $ be the flattest minimum in $ \R^m_{+} $, \ie $ w_i^{(1)} = \optmlOperScalar^{{1}/{m}} $ for all $ i \in \{ 1, \ldots, m \} $. Given a second minimum $ \weightsPointVec^{(2)} \in \R^m_+ $ for which the connecting line between $ \weightsPointVec^{(1)} $ and $ \weightsPointVec^{(2)} $ is loyal to the true sharpness, we can construct a third solution $ \weightsPointVec^{(3)} $ that will appear deceivingly flatter than $ \weightsPointVec^{(1)} $ over their connecting line. Specifically, let us set
\begin{equation}
	w^{(3)}_i = \frac{ \left( w^{(1)}_i \right)^2 }{w^{(2)}_i}.
\end{equation}
Clearly, $ \weightsPointVec^{(3)} $ is a global minimum as $ \prod_{i = 1}^m w^{(3)}_i = \optmlOperScalar  $. Since $ \weightsPointVec^{(2)} $ appears sharper than $ \weightsPointVec^{(1)} $ along their connecting line, then according to Claim~\ref{claim:ConnectingLine} we have
\begin{equation}
\sum_{i = 1}^{m} \frac{w_i^{(1)}}{w_i^{(2)}} >  \sum_{i = 1}^{m} \frac{w_i^{(2)}}{w_i^{(1)}}.
\end{equation}
Thus,
\begin{equation}
	\sum_{i = 1}^{m} \frac{w_i^{(3)}}{w_i^{(1)}}  = \sum_{i = 1}^{m} \frac{w_i^{(1)}}{w_i^{(2)}} > \sum_{i = 1}^{m} \frac{w_i^{(2)}}{w_i^{(1)}} = \sum_{i = 1}^{m} \frac{w_i^{(1)}}{w_i^{(3)}}.
\end{equation}
Therefore, by Claim~\ref{claim:ConnectingLine}, $ \weightsPointVec^{(1)} $ appears sharper than $ \weightsPointVec^{(3)} $ along their connecting line.

In the special case of two layer networks ($ m = 2 $), we have that for any minimizer, $ w_2^{(i)} = \optmlOperScalar/w_1^{(i)} $. Hence,
\begin{equation}
\frac{w_1^{(1)}}{w_1^{(2)}} + \frac{w_2^{(1)}}{w_2^{(2)}} = \frac{w_2^{(2)}}{w_2^{(1)}} + \frac{w_1^{(2)}}{w_1^{(1)}}.
\end{equation}
This means that the minima will appear equally sharp.

\section{Proof of Lemma~\ref{lemma:HessianStructure}}\label{appendix:HessianDerivation}
In this section we derive the Hessian matrix defined in \eqref{eq:HessDefinition} at a global minimum point, \ie for \( \weightsPointVec \in \minset \). Throughout this section we will be using the following properties of the Kronecker product. For any matrices \( \MM_1,\MM_2,\MM_3,\MM_4 \),
\begin{gather*}
	\vectorize{\MM_1 \MM_2 \MM_3}  = \big( \MM_3^T \otimes \MM_1 \big) \vectorize{ \MM_2} \tag{P1}, \label{eq:KroneckerProperty1} \\
	\big( \MM_1 \otimes \MM_2 \big)^T  = \big( \MM_1^T \otimes \MM_2^T \big) \tag{P2} \label{eq:KroneckerProperty2}, \\
	\big( \MM_1 \otimes \MM_2 \big) \big( \MM_3 \otimes \MM_4 \big)  = \big( \MM_1 \MM_3 \big) \otimes \big( \MM_2 \MM_4 \big) \tag{P3}. \label{eq:KroneckerProperty3}
\end{gather*}
Let us start the computation of \(  \Hess_{\weightsPointVec} \) by rearranging the loss function so as to simplify the differentiation w.r.t. \( \weightsPointVec_k \). Specifically, we have that
\begin{align}
	\loss(\weightsPointVec) & = \E{\Big\| y-\prod_{j=1}^{m} \weights_j x \Big\|^2} \nonumber\\
	&= \E{\Big\| y-\Big( \prod_{i=k+1}^{m} \weights_i \Big) \weights_k \Big(\prod_{j=1}^{k-1} \weights_j x \Big) \Big\|^2} \nonumber \\
	& = \E{\Big\| y-  \Big(\prod_{j=1}^{k-1} \weights_j x \Big)^T \otimes \Big( \prod_{i=k+1}^{m} \weights_i \Big) \weightsPointVec_k   \Big\|^2}\nonumber \\
	& = \E{\Big\| y-  \Big[x^T\Big(\prod_{j=1}^{k-1} \weights_j \Big)^T\Big] \otimes \Big[\Identity\Big( \prod_{i=k+1}^{m} \weights_i \Big)\Big] \weightsPointVec_k   \Big\|^2}\nonumber \\
	& = \E{\Big\| y-  \Big[ x \otimes \Identity \Big]^T \Big[  \Big(\prod_{j=1}^{k-1} \weights_j \Big)^T \otimes \Big( \prod_{i=k+1}^{m} \weights_i \Big)\Big] \weightsPointVec_k   \Big\|^2},
\end{align}
where in the third equality we used property \eqref{eq:KroneckerProperty1}, and in the last we used properties \eqref{eq:KroneckerProperty2} and \eqref{eq:KroneckerProperty3}. To simplify expressions, we define the following matrices
\begin{equation}
	\UU_k \triangleq  \Big(\prod_{j=1}^{k-1} \weights_j \Big)^T \otimes \Big( \prod_{i=k+1}^{m} \weights_i \Big) \qquad \text{and} \qquad  \XX \triangleq  x \otimes \Identity .
\end{equation}
Thus, the loss function \eqref{eq:VectorLossFuncDef} is given by
\begin{equation}
	\loss(\weightsPointVec) = \E{\Big\| y-  \XX^T \UU_k \weightsPointVec_k \Big\|^2}.
\end{equation}
Now we are ready to calculate the partial derivative of \( \loss(\weightsPointVec) \) w.r.t \(  \weightsPointVec_k \). Notice that \( \UU_k \) is not a function of \( \weightsPointVec_k \), therefore
\begin{equation}
	\frac{\partial}{ \partial \weightsPointVec_k} \E{\Big\| y-  \XX^T \UU_k \weightsPointVec_k \Big\|^2} = -2 \E{ \UU_k^T \XX \big( y - \XX^T \UU_k \weightsPointVec_k \big) } = 2  \UU_k^T \big(\E{\XX \XX^T} \UU_k \weightsPointVec_k -  \E{\XX y} \big).
\end{equation}
Furthermore,
\begin{equation}
	\E{\XX \XX^T} = \E{ \big( x \otimes \Identity \big)\big( x \otimes \Identity \big)^T } = \E{ \big( xx^T \otimes \Identity \big)  } = \big( \E{ xx^T }  \big) \otimes \Identity   = \EmpCovMat{x} \otimes \Identity ,
\end{equation}
where in the second equality we used properties \eqref{eq:KroneckerProperty2} and \eqref{eq:KroneckerProperty3}, and in the third equality we used the linearity of the Kronecker product. Additionally,
\begin{equation}
	\E{\XX y} =  \E{ \big(x \otimes \Identity \big) y} = \E{ \vectorize{y x^T} } = \vectorize{\EmpCovMat{yx}},
\end{equation}
where in the second step we used \eqref{eq:KroneckerProperty1}. Overall we have that
\begin{equation} \label{eq:FirstOrderDevVecCase}
	\frac{\partial}{ \partial \weightsPointVec_k} \loss(\weightsPointVec) = 2  \UU_k^T \left[ \left( \EmpCovMat{x} \otimes \Identity \right)  \UU_k \weightsPointVec_k -  \vectorize{\EmpCovMat{yx}} \right].
\end{equation}
Next we prepare Eq.~\eqref{eq:FirstOrderDevVecCase} for differentiation w.r.t \( \weightsPointVec_q \). First, for all \( k \)
\begin{equation}\label{eq:Tk_Times_w_k}
	\UU_k \weightsPointVec_k = \left[ \Big(\prod_{j=1}^{k-1} \weights_j \Big)^T \otimes \Big( \prod_{i=k+1}^{m} \weights_i \Big) \right] \vectorize{\weights_k}  = \vectorize{ \Big( \prod_{i=k+1}^{m} \weights_i \Big) \weights_k \Big( \prod_{j=1}^{k-1} \weights_j \Big) } = \vectorize{ \prod_{i=1}^{m} \weights_i  }.
\end{equation}
Particularly, this means that the value of the term \( \UU_k \weightsPointVec_k \) is the same for all \( k \). Hence, \( \UU_k \weightsPointVec_k = \UU_q \weightsPointVec_q \) and therefore
\begin{equation}
	\frac{\partial}{ \partial \weightsPointVec_k} \loss(\weightsPointVec) = 2  \UU_k^T \left[ \left( \EmpCovMat{x} \otimes \Identity \right)  \UU_q \weightsPointVec_q -  \vectorize{\EmpCovMat{yx}} \right].
\end{equation}
Now, let us differentiate the vector \( \frac{\partial}{ \partial \weightsPointVec_k} \loss(\weightsPointVec) \) w.r.t. the scalar \( \weightsPointVec_{q,l}  \), which is the \( l \)th element in the vector \( \weightsPointVec_q \). Notice that \( \UU_k \) and \( \weightsPointVec_q \) itself are the only terms which depend on \( \weightsPointVec_q \). Therefore, by the product rule of differentiation using the denominator-layout notation\footnote{Where the derivative \( \frac{\partial { \bf A}}{ \partial z }  \) of a matrix \({ \bf A} \) w.r.t. a scalar \( z \) is laid out according to \({ \bf A}^T \).}
\begin{align} \label{eq:SecondOrderDerVectorTByScalar}
	\scndPrtDev{\weightsPointVec}{q,l}{k} \loss(\weightsPointVec) & = 2  \frac{\partial}{\partial \weightsPointVec_{q,l} } \Big(  \UU_k^T \Big[ \Big( \EmpCovMat{x} \otimes \Identity \Big)  \UU_q \weightsPointVec_q -  \vectorize{\EmpCovMat{yx}} \Big]\Big) \nonumber \\
	& = 2  \Big[ \Big( \EmpCovMat{x} \otimes \Identity \Big)  \UU_q \weightsPointVec_q -  \vectorize{\EmpCovMat{yx}} \Big]^T \left( \frac{\partial}{\partial \weightsPointVec_{q,l} }   \UU_k^T \right) +  2 \left( \frac{\partial}{\partial \weightsPointVec_{q,l} } \weightsPointVec_q \right)  \UU_q^T  \left( \EmpCovMat{x} \otimes \Identity \right)  \UU_k .
\end{align}
However, at a global minimum
\begin{equation}
	\left( \EmpCovMat{x} \otimes \Identity \right)  \UU_q \weightsPointVec_q = \left( \EmpCovMat{x} \otimes \Identity \right)  \vectorize{ \prod_{i=1}^{m} \weights_i} = \vectorize{ \prod_{i=1}^{m} \weights_i \EmpCovMat{x}} = \vectorize{ \EmpCovMat{yx} \EmpCovMat{x}^{-1} \EmpCovMat{x} } = \vectorize{ \EmpCovMat{yx} },
\end{equation}
where in the first equality we used \eqref{eq:Tk_Times_w_k}, in the second equality we used \eqref{eq:KroneckerProperty1} and in the third equality we used the assumption that \( \weightsPointVec \in \minset \). Hence, for all \( 1 \le q \le m \) we have that
\begin{equation}
	\left( \EmpCovMat{x} \otimes \Identity \right)  \UU_q \weightsPointVec_q -  \vectorize{\EmpCovMat{yx}} = \zeroVec.
\end{equation}
Therefore, \eqref{eq:SecondOrderDerVectorTByScalar} is reduced to
\begin{equation}
	\scndPrtDev{\weightsPointVec}{q,l}{k} \loss(\weightsPointVec) = 2  \left( \frac{\partial}{\partial \weightsPointVec_{q,l} } \weightsPointVec_q \right)  \UU_q^T  \left( \EmpCovMat{x} \otimes \Identity \right)  \UU_k ,
\end{equation}
for all \( 1 \le l \le m \). Hence,
\begin{equation}
	\scndPrtDev{\weightsPointVec}{q}{k} \loss(\weightsPointVec) = 2 \left( \frac{\partial}{\partial \weightsPointVec_q }   \weightsPointVec_q \right) \UU_q^T  \left( \EmpCovMat{x} \otimes \Identity \right)  \UU_k  = 2  \UU_q^T  \left( \EmpCovMat{x} \otimes \Identity \right)  \UU_k = 2  \UU_q^T  \left( \EmpCovMat{x}^{\frac{1}{2}} \otimes \Identity \right)^T  \left( \EmpCovMat{x}^{\frac{1}{2}} \otimes \Identity \right) \UU_k,
\end{equation}
where \( \EmpCovMat{x}^{\frac{1}{2}} \) is the symmetric square root matrix of \( \EmpCovMat{x} \). Let us define the matrices \(\{ \bPhi_k \}_{k = 1}^m\) as
\begin{equation}
	\bPhi_k = \UU_k^T  \left( \EmpCovMat{x}^{\frac{1}{2}} \otimes \Identity \right)^T  = \left[ \Bigg(\prod_{j=1}^{k-1} \weights_j \Bigg) \otimes \left( \prod_{i=k+1}^{m} \weights_i \right)^T \right] \left( \EmpCovMat{x}^{\frac{1}{2}} \otimes \Identity \right)  = \Bigg(\prod_{j=1}^{k-1} \weights_j\EmpCovMat{x}^{\frac{1}{2}}  \Bigg) \otimes \left( \prod_{i=k+1}^{m} \weights_i \right)^T.
\end{equation}
Thus,
\begin{equation}
	\frac{\partial^2}{\partial \weightsPointVec_q \partial \weightsPointVec_k} \loss(\weightsPointVec) = 2 \bPhi_q \bPhi_k^T.
\end{equation}
Finally, the Hessian matrix of the loss function \( \loss(\weightsPointVec) \) is given by
\begin{equation}
	\Hess_{\weightsPointVec} = 2\bPhi \bPhi^T,
\end{equation}
where \( \bPhi = [\bPhi_1^T , \bPhi_2^T, \ldots, \bPhi_m^T ]^T \).

\section{The Missing Parts of the Proof of Theorem~\ref{theorem:LamdaMaxMinimalValue}}\label{appendix:LambdaMaxProofParts}

\subsection{Proof of Lemma \ref{lemma:myInequality}}\label{appendix:LowerBoundLemmaProof}

The proof is straightforward. We have
\begin{equation}
\sum_{k = 1}^{m} \| \bPsi_k \|_{\rm F}^2 \geq  \sum_{k = 1}^{m} \| \bPsi_k \|_{2}^2 \geq m \Bigg[ \prod_{k = 1}^{m} \| \bPsi_k \|_{2} \Bigg]^{\frac{2}{m}} \geq  m \Bigg[ \Big\| \prod_{k = 1}^{m} \bPsi_k \Big\|_{2} \Bigg]^{\frac{2}{m}},
\end{equation}
where in the first inequality we used the fact that \( \| \bPsi\|_{\rm F} \geq \| \bPsi \|_{2} \) for any matrix \( \bPsi \in \R^{d_1 \times d_2} \). The second inequality is due to the inequality of arithmetic and geometric means. In the final inequality we used the fact that \( \| \cdot \|_{2} \) is a sub-multiplicative matrix norm, meaning \( \| \bPsi \|_2 \| \boldsymbol{\Phi} \|_2 \geq  \| \bPsi \boldsymbol{\Phi} \|_2   \) for any pair of matrices \( \bPsi \in \R^{d_1 \times d_2} , \boldsymbol{\Phi} \in \R^{d_2 \times d_3}\).

\subsection{Maximal Value of \( \nu \)}\label{appendix:MaxValueLowerBound}
On the one hand, for any \( \BB \in \R^{d_y \times d_x} \) such that \( \| \BB  \|_{\rm F} = 1 \), 
\begin{equation} 
\Big\| \big(  \BB \optmlOper^T \big)^{m-1} \BB \Big\|_2 \leq
\big\|  \BB \big\|_2^{m} \big\| \optmlOper \big\|_2^{m-1} \leq (\sigma_{\max}( \optmlOper))^{m-1},
\end{equation}
where in the second inequality we used \( \| \BB \|_2 \leq \|  \BB \|_{\rm F} = 1 \). On the other hand, this upper bound is achieved by \( \BB = \uu \vv^T \), as
\begin{align} \label{eq:TermWithSingularVectors}
\Big\| \big(  \uu \vv^T \optmlOper^T \big)^{m-1} \uu \vv^T \Big\|_2 & = \Big\|   \uu \big( \vv^T \optmlOper^T \uu \big)^{m-1}  \vv^T \Big\|_2 = \big( \vv^T \optmlOper^T \uu \big)^{m-1} \big\| \uu \vv^T \big\|_2 \nonumber \\
& = (\sigma_{\max}( \optmlOper))^{m-1}  \| \uu \| \| \vv \| = (\sigma_{\max}( \optmlOper))^{m-1}.
\end{align}
Therefore, 
\begin{equation}
\max_{ \|  \BB  \|_{\rm F} =1 } \nu(\BB) = 2m \times (\sigma_{\max} ( \optmlOper ))^{2(1-\frac{1}{m})}.
\end{equation}

\subsection{Maximal Eigenvalue at the Canonical Solution~\eqref{eq:CanonicalSolution}}\label{appendix:MaxEigenvalCanonicalSol}
In \eqref{eq:eigenvalueVectorForm} we have
\begin{equation}\label{eq:LambdaMax2PhiB}
	\lambda_{\max}\big( \Hess_{\weightsPointVec} \big) = \max_{ \| \bb \| = 1 } 2 \| \bPhi \bb \|^2 .
\end{equation}
Note that $\| \bPhi \bb \|^2 = \sum_{k=1}^m \|\bPhi_k \bb\|^2$. Using the definition of $ \bPhi_k $ in \eqref{eq:PhiDef} we get 
\begin{equation}
\norm{\bPhi_k \bb}^2 = \bigg\|{\bigg(\prod_{j=1}^{k-1} \weights_j\EmpCovMat{x}^{\frac{1}{2}}  \bigg) \otimes \bigg( \prod_{i=k+1}^{m} \weights_i \bigg)^T \bb}\bigg\|^2 = \bigg\|{\bigg( \prod_{i=k+1}^{m} \weights_i \bigg)^T \BB \bigg(\prod_{j=1}^{k-1} \weights_j\EmpCovMat{x}^{\frac{1}{2}}  \bigg)^T}\bigg\|^2_{\rm F},
\end{equation}
where we used \eqref{eq:KroneckerProperty1} with \( \bb = \vectorize{\BB} \). Therefore,
\begin{equation}\label{eq:LambdaMaxSum}
\lambda_{\max}\big( \Hess_{\weightsPointVec} \big) = \max_{ \| \BB \|_{\rm F} = 1 } 2 \sum_{k = 1}^{m} \Big\| {\Big( \prod_{i=k+1}^{m}\!\!\! \weights_i \Big)^T \BB  \EmpCovMat{x}^{\frac{1}{2}} \Big(\prod_{j=1}^{k-1} \weights_j  \Big)^T }  \Big\|_{\rm F}^2.
\end{equation}
Substituting the canonical solution \eqref{eq:CanonicalSolution} in this optimization problem, we obtain
\begin{equation} \label{eq:CanonicalSolStart}
	\sum_{k = 1}^{m} \Big\| {\Big( \prod_{i=k+1}^{m} \weights_i \Big)^T \BB \Big(\prod_{j=1}^{k-1} \weights_j  \Big)^T }  \Big\|_{\rm F}^2 
	=\sum_{k = 1}^{m} \Big\| {\Big( \prod_{i=k+1}^{m} \SSS^{\frac{1}{m}}_i \Big)^T  \UU^T \BB \VV \Big(\prod_{j=1}^{k-1} \SSS^{\frac{1}{m}}_j  \Big)^T }  \Big\|_{\rm F}^2,
\end{equation}
where in the first and the last terms of the series (\(k=1\) and \(k=m\)) we used the fact that \( \UU \) and \( \VV \) are unitary matrices, so that
\begin{equation}
	\Big\| \Big( \prod_{i=2}^{m} \SSS^{\frac{1}{m}}_i  \Big)^T  \UU^T \BB \Big\|_{\rm F}^2 = \Big\| \Big( \prod_{i=2}^{m} \SSS^{\frac{1}{m}}_i  \Big)^T \UU^T \BB \VV \Big\|_{\rm F}^2,
	\qquad \Big\| \BB \VV  \Big( \prod_{j=1}^{m-1} \SSS^{\frac{1}{m}}_j \Big)^T  \Big\|_{\rm F}^2 = \Big\|\UU^T  \BB \VV \Big( \prod_{j=1}^{m-1} \SSS^{\frac{1}{m}}_j \Big)^T \Big\|_{\rm F}^2 .
\end{equation}
Note that $ \prod_{i=k+1}^{m} \SSS^{\frac{1}{m}}_i  $ is a diagonal $ d_y \times d_{k} $ matrix, whose $q$th diagonal entry is $(\sigma_q(\optmlOper))^{(m-k)/m}$ (where \( \sigma_{q}(\optmlOper) \) is the \(q\)th largest singular value of \( \optmlOper \)). Similarly, $ \prod_{j=1}^{k-1} \SSS^{\frac{1}{m}}_j  $ is a diagonal $ d_{k-1}\times d_x $ matrix, whose $q$th diagonal entry is $(\sigma_q(\optmlOper))^{(k-1)/m}$. Therefore, we can write
\begin{equation}
	\sum_{k = 1}^{m} \Big\| {\Big( \prod_{i=k+1}^{m} \SSS^{\frac{1}{m}}_i \Big)^T  \UU^T \BB \VV \Big(\prod_{j=1}^{k-1} \SSS^{\frac{1}{m}}_j  \Big)^T }  \Big\|_{\rm F}^2 
	= \sum_{k = 1}^{m} \Big\| \Big( \SSS^{\frac{m-k}{m}} \Big)^T \UU^T \BB \VV \Big( \SSS^{\frac{k-1}{m}} \Big)^T \Big\|_{\rm F}^2,
\end{equation}
where $\SSS^\alpha$ denotes a $d_y\times d_x$ diagonal matrix whose $q$th diagonal entry is $(\sigma_q(\optmlOper))^\alpha$. Here, we used the fact that the Frobenius norm is unaffected by zero entries, and thus removed/added zero rows/columns.

Next, we preform the change of variables \( \tilde{\BB} = \UU^T \BB \VV \in \R^{d_y \times d_x} \) to obtain the following optimization problem
\begin{equation}
\max_{\tilde{\BB} \in \R^{d \times d} }  2 \sum_{k = 1}^{m} \Big\| \Big( \SSS^{\frac{m-k}{m}} \Big)^T \tilde{\BB} \Big( \SSS^{\frac{k-1}{m}} \Big)^T  \Big\|_{\rm F}^2 \qquad \text{s.t.} \qquad \big\| \tilde{\BB}  \big\|^2_{\rm F} = 1 .
\end{equation}
Writing the objective in terms of the elements of \( \tilde{\BB} \), which we denote by \( \{\tilde{b}_{i,j} \} \), gives
\begin{equation}
2 \sum_{k = 1}^{m} \Big\| \Big( \SSS^{\frac{m-k}{m}}  \Big)^T \tilde{\BB} \Big( \SSS^{\frac{k-1}{m}} \Big)^T  \Big\|_{\rm F}^2 = 2 \sum_{k = 1}^{m} \sum_{i = 1}^{d} \sum_{j = 1}^{d} \Big[ (\sigma_{j}( \optmlOper) )^{\frac{m-k}{m}} (\sigma_{i}( \optmlOper )) ^{\frac{k-1}{m}}\tilde{b} _{i,j} \Big]^2,
\end{equation}
where \( d = \min\{d_x,d_y\}  \) is the number of singular values of \( \optmlOper \). By changing the order of the summation, we get
\begin{equation}
\max_{\tilde{b}_{1,1}, \ldots, \tilde{b}_{d,d} \in \R } \  2 \sum_{i,j = 1}^{d}\tilde{b}^2 _{i,j} \sum_{k = 1}^{m}  \Big[ (\sigma_{j}( \optmlOper )) ^{\frac{m-k}{m}} (\sigma_{i}( \optmlOper)) ^{\frac{k-1}{m}}\Big]^2 \qquad \text{s.t.} \qquad \sum_{i,j = 1}^{d}\tilde{b}^2 _{i,j}  = 1 .
\end{equation}
This is a simple linear optimization problem over the unit simplex, whose optimal value is attained at one of the vertices,
\begin{equation} \label{eq:CanonicalSolEnd}
\max_{ i,j \in \{ 1,\ldots,d \} } \  2 \sum_{k = 1}^{m}  \Big[ (\sigma_{j}( \optmlOper )) ^{\frac{m-k}{m}} (\sigma_{i}( \optmlOper )) ^{\frac{k-1}{m}}\Big]^2.
\end{equation}
The maximal value is attained for \( i = j =1 \), thus the value of \eqref{eq:eigenvalueMatrixForm} for the canonical solution is
\begin{equation}
2 \sum_{k = 1}^{m}  \Big[( \sigma_{1}( \optmlOper )) ^{\frac{m-k}{m}} (\sigma_{1}( \optmlOper )) ^{\frac{k-1}{m}}\Big]^2 = 2m \times (\sigma_{\max}( \optmlOper))^{2(1-\frac{1}{m})}.
\end{equation}
This result shows that the canonical solution \eqref{eq:CanonicalSolution} is indeed a minimizer of the maximal eigenvalue of the Hessian matrix.

\subsection{Proof of the Top Eigenvector of \( \Hess_{\weightsPointVec} \)}\label{appendix:TopEigenvectorProof}
On the one hand, according to Section \ref{sec:Derivation}, for any flattest minimum point \( \weightsPointVec \in \wideminset \), the largest eigenvalue satisfies
\begin{equation}\label{eq:LambdaMaxValue}
	\lambda_{\max}(\Hess_{\weightsPointVec}) = 2m \times (\sigma_{\max} ( \optmlOper )) ^{2(1-\frac{1}{m})}.
\end{equation}
On the other hand, the maximal eigenvalue of the Hessian matrix is the solution to the optimization problem \eqref{eq:LambdaMaxSum}, in which \( \bb = \vectorize{\BB} \) is the eigenvector of $\HessTag_{\weightsPointVec}$ (see \eqref{eq:LambdaMax2PhiB}). Substituting \( \BB^* = \uu \vv^T \) (\ie \(  \bb^* = \vv \otimes  \uu \)) in the objective function, we get
\begin{align} \label{eq:LowerBoundOnQuadForm}
	2 \sum_{k = 1}^{m} \Big\| {\Big( \prod_{i=k+1}^{m} \weights_i \Big)^T \BB^* \Big(\prod_{j=1}^{k-1} \weights_j \Big)^T }  \Big\|_{\rm F}^2
	& \geq  2m \times \left\| \Bigg[  \BB^* \Big(\prod_{i=1}^{m} \weights_i  \Big)^T \Bigg]^{m-1} \BB^* \right\|_2^{\frac{2}{m}}\nonumber\\
	&  =  2m \times \left\| \big(  \BB^* \optmlOper^T \big)^{m-1} \BB^* \right\|_2^{\frac{2}{m}} \nonumber\\
	&=  2m \times (\sigma_{\max}( \optmlOper))^{2(1-\frac{1}{m})},
\end{align}
where in the second inequality we used Lemma \ref{lemma:myInequality} and explicitly unrolled the product, as in \eqref{eq:MultiplicationUnrolling}, and in the last step we used~\eqref{eq:TermWithSingularVectors}. This proves that \(  \bb^* = \vv \otimes  \uu \) is an eigenvector of \( \hat{\Hess}_{\weightsPointVec} \) corresponding to the maximal eigenvalue. Now, since $\HessTag_{\weightsPointVec} = 2 \bPhi^T \bPhi$ and $\Hess_{\weightsPointVec} = 2 \bPhi \bPhi^T $, we have that $ \bPhi \bb^*=\bPhi ( \vv \otimes  \uu  )$ is the eigenvector of $\Hess_{\weightsPointVec}$ corresponding to its maximal eigenvalue.

\section{Proof of Theorem \ref{theorem:PartialProducts}}\label{appendix:PartialProductsProof}
Let us start the proof by presenting two lemmas.
\begin{lemma}[]\label{lemma:Balance}
	Let \( \EmpCovMat{x} = \Identity \). If \( \weightsPointVec \in \wideminset  \) then for all \( k \in \{ 1,2,\ldots,m \} \)
	\begin{equation} \label{eq:Balance}
	\Big\| \uu^T \prod_{i=k+1}^{m} \weights_i \Big\| 	\Big\| \prod_{j=1}^{k-1} \weights_j \vv \Big\| = (\sigma_{\max} ( \optmlOper ))^{1-\frac{1}{m}}.
	\end{equation}
\end{lemma}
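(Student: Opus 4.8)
The plan is to combine the two outputs of the proof of Theorem~\ref{theorem:LamdaMaxMinimalValue} given in Section~\ref{sec:Derivation}: at any widest minimum $\weightsPointVec\in\wideminset$, the top eigenvalue is $\lambda_{\max}(\Hess_{\weightsPointVec})=2m\,(\sigma_{\max}(\optmlOper))^{2(1-1/m)}$, and the top eigenvector of $\HessTag_{\weightsPointVec}$ is $\bb^*=\vv\otimes\uu=\vectorize{\uu\vv^T}$. Since this eigenvector maximizes $2\norm{\bPhi\bb}^2$ (see \eqref{eq:LambdaMax2PhiB}), plugging $\BB^*=\uu\vv^T$ into the matrix form \eqref{eq:LambdaMaxSum}, and using $\EmpCovMat{x}=\Identity$, I would obtain
\[
2m\,(\sigma_{\max}(\optmlOper))^{2(1-\frac{1}{m})}
= 2\sum_{k=1}^{m}\Big\|\Big(\prod_{i=k+1}^{m}\weights_i\Big)^{\!T}\uu\vv^T\Big(\prod_{j=1}^{k-1}\weights_j\Big)^{\!T}\Big\|_{\rm F}^{2}.
\]

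Next, set $\qq_k=(\prod_{i=k+1}^{m}\weights_i)^T\uu$ and $\rr_k=(\prod_{j=1}^{k-1}\weights_j)\vv$, so that each summand above is the rank-one outer product $\qq_k\rr_k^T$, with $\norm{\qq_k\rr_k^T}_{\rm F}=\norm{\qq_k}\norm{\rr_k}$. Hence $\sum_{k=1}^{m}\norm{\qq_k}^2\norm{\rr_k}^2=m\,c$ with $c\triangleq(\sigma_{\max}(\optmlOper))^{2(1-1/m)}$, and the target identity \eqref{eq:Balance} is exactly the statement that each of these $m$ nonnegative terms equals $c$. To obtain this, I would also lower bound their product. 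Because an empty product is $\Identity$, we have $\rr_1=\vv$ and $\qq_m=\uu$, so $\norm{\rr_1}=\norm{\qq_m}=1$, and therefore
\[
\prod_{k=1}^{m}\norm{\qq_k}\,\norm{\rr_k}
=\prod_{k=1}^{m-1}\norm{\qq_k}\,\norm{\rr_{k+1}}
\ \geq\ \prod_{k=1}^{m-1}\big|\qq_k^T\rr_{k+1}\big|
\]
by Cauchy--Schwarz. Now $\qq_k^T\rr_{k+1}=\uu^T(\prod_{i=k+1}^{m}\weights_i)(\prod_{j=1}^{k}\weights_j)\vv=\uu^T\optmlOper\vv=\sigma_{\max}(\optmlOper)$, using $\weightsPointVec\in\minset$ and that $\uu,\vv$ are the leading left and right singular vectors of $\optmlOper$. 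This yields $\prod_{k=1}^{m}\norm{\qq_k}^2\norm{\rr_k}^2\geq(\sigma_{\max}(\optmlOper))^{2(m-1)}=c^{m}$.

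Finally, apply the AM--GM inequality to $a_k\triangleq\norm{\qq_k}^2\norm{\rr_k}^2$: from $\sum_k a_k=mc$ and $\prod_k a_k\geq c^{m}$ we get $mc=\sum_k a_k\geq m(\prod_k a_k)^{1/m}\geq mc$, so equality holds throughout. The equality case of AM--GM forces $a_1=\cdots=a_m$, and then $\sum_k a_k=mc$ gives $a_k=c$ for every $k$; taking square roots and rewriting $\norm{\qq_k}=\norm{\uu^T\prod_{i=k+1}^{m}\weights_i}$ gives \eqref{eq:Balance}. The whole argument is short once Theorem~\ref{theorem:LamdaMaxMinimalValue} is in hand; the only step that is not purely mechanical is noticing that the telescoping product must pair $\qq_k$ with the \emph{shifted} vector $\rr_{k+1}$ rather than with $\rr_k$ — this is exactly what makes the boundary normalizations $\norm{\rr_1}=\norm{\qq_m}=1$ turn the Cauchy--Schwarz bounds into the clean lower bound $c^{m}$ that matches the sum constraint and thereby pins down the AM--GM equality case.
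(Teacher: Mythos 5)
Your proof is correct and takes essentially the same route as the paper's: both evaluate the sum in \eqref{eq:eigenvalueMatrixForm} at $\BB^*=\uu\vv^T$, identify its value as $2m(\sigma_{\max}(\optmlOper))^{2(1-\frac{1}{m})}$ via Theorem~\ref{theorem:LamdaMaxMinimalValue}, and then force equality in the AM--GM inequality by a telescoping lower bound on the product of the summands. The only difference is cosmetic: the paper gets that product bound from sub-multiplicativity of the operator norm applied to the rank-one matrices $\qq_k\rr_k^T$, which for rank-one factors is exactly your chain of Cauchy--Schwarz inequalities $\normi{\qq_k}\,\normi{\rr_{k+1}}\geq\abs{\qq_k^T\rr_{k+1}}=\sigma_{\max}(\optmlOper)$.
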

\begin{proof}
	First, observe that for \( \BB^* = \uu \vv^T \), the left-hand side of \eqref{eq:Balance} can be written as
	\begin{equation}\label{eq:BalancedPartialProd}
		\Big\| \uu^T \prod_{i=k+1}^{m} \weights_i \Big\| \Big\| \prod_{j=1}^{k-1} \ \weights_j \ \vv \Big\| = \Big\| {\Big( \prod_{i=k+1}^{m} \weights_i \Big)^T \BB^* \Big(\prod_{j=1}^{k-1} \weights_j \Big)^T }  \Big\|_2.
	\end{equation}
	Now, from Theorem \ref{theorem:LamdaMaxMinimalValue} and Eq.~\eqref{eq:eigenvalueMatrixForm} we can conclude that
	\begin{equation}\label{eq:lambdaMaxBstar1}
	\lambda_{\max}(\Hess_{\weightsPointVec})  =  2 \sum_{k = 1}^{m} \Big\| {\Big( \prod_{i=k+1}^{m} \weights_i \Big)^T \BB^* \Big(\prod_{j=1}^{k-1} \weights_j \Big)^T }  \Big\|_2^2  =  2m \times (\sigma_{\max} ( \optmlOper )) ^{2(1-\frac{1}{m})}.
	\end{equation}
	Note that since \( \BB^* \) is a rank-1 matrix, the entire expression within the norm is rank-1, which is the reason we could replace the Frobenius norm appearing in~\eqref{eq:eigenvalueMatrixForm} by the operator norm (the two norms coincide for rank-1 matrices). Furthermore, by the inequality of arithmetic and geometric means
	\begin{align}\label{eq:lambdaMaxBstarLowerBound}
	2 \sum_{k = 1}^{m} \Big\| {\Big( \prod_{i=k+1}^{m} \weights_i \Big)^T \BB^* \Big(\prod_{j=1}^{k-1} \weights_j \Big)^T }  \Big\|_2^2
	& \geq 2 m \left[ \prod_{k = 1}^{m} \Big\| {\Big( \prod_{i=k+1}^{m} \weights_i \Big)^T \BB^* \Big(\prod_{j=1}^{k-1} \weights_j \Big)^T }  \Big\|_2^2\right]^{\frac{1}{m}} \nonumber\\
	& \geq 2 m \left[ \Big\| \prod_{k = 1}^{m}  {\Big( \prod_{i=k+1}^{m} \weights_i \Big)^T \BB^* \Big(\prod_{j=1}^{k-1} \weights_j \Big)^T }  \Big\|_2^2\right]^{\frac{1}{m}}  \nonumber\\
	& =  2m \times (\sigma_{\max} ( \optmlOper )) ^{2(1-\frac{1}{m})} ,
	\end{align}
	where the second inequality is due to the sub-multiplicativity property of the operator norm, and in the last step we unrolled the product, as in \eqref{eq:MultiplicationUnrolling}, and used \eqref{eq:TermWithSingularVectors}. From \eqref{eq:lambdaMaxBstar1} and \eqref{eq:lambdaMaxBstarLowerBound} we obtain that the inequality of arithmetic and geometric means in \eqref{eq:lambdaMaxBstarLowerBound} is achieved with equality. This happens if and only if all summands in the series are equal. Thus, we conclude that for all \( k \in \{ 1,2,\ldots,m \} \),
	\begin{equation}
		\Big\| {\Big( \prod_{i=k+1}^{m} \weights_i \Big)^T \BB^* \Big(\prod_{j=1}^{k-1} \weights_j \Big)^T }  \Big\|_2 = (\sigma_{\max}( \optmlOper)) ^{1-\frac{1}{m}},
	\end{equation}
	and together with \eqref{eq:BalancedPartialProd}, this implies that
	\begin{equation}
	\Big\| \uu^T \prod_{i=k+1}^{m} \weights_i \Big\| \Big\| \prod_{j=1}^{k-1} \ \weights_j \ \vv \Big\| = (\sigma_{\max}( \optmlOper)) ^{1-\frac{1}{m}}.
	\end{equation}
\end{proof}

While Lemma~\ref{lemma:Balance} characterizes the norms of the vectors $ \uu^T \prod_{i=k+1}^{m} \weights_i $ and $\prod_{j=1}^{k-1} \weights_j \vv $, the next Lemma characterizes their directions.
\begin{lemma}[]\label{lemma:DirectionMatching}
	Let \( \EmpCovMat{x} = \Identity \). If \( \weightsPointVec \in \wideminset \) then for all \( k \in \{0,1,2,\ldots,m \} \)
	\begin{equation} \label{eq:DirectionMatching}
	\frac{1}{\Big\|  \Big(   \prod_{i=k+1}^{m} \weights_i \Big)^T \uu \Big\|}  \Bigg(   \prod_{i=k+1}^{m} \weights_i \Bigg)^T \uu
	= \frac{1}{\Big\| \prod_{j=1}^{k} \ \weights_j \ \vv \Big\|}
	\prod_{j=1}^{k} \ \weights_j \ \vv.
	\end{equation}
\end{lemma}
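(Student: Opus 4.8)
The plan is to harvest this identity from the \emph{sub-multiplicativity} step in the chain of inequalities~\eqref{eq:lambdaMaxBstarLowerBound} that was already used to prove Lemma~\ref{lemma:Balance}. Recall that with $\BB^* = \uu\vv^T$ one may write $\lambda_{\max}(\Hess_{\weightsPointVec}) = 2\sum_{k=1}^m \|M_k\|_2^2$, where $M_k = \big(\prod_{i=k+1}^m\weights_i\big)^T \BB^* \big(\prod_{j=1}^{k-1}\weights_j\big)^T$, and that this was lower bounded first by AM--GM and then by sub-multiplicativity of $\|\cdot\|_2$, down to $2m\big(\sigma_{\max}(\optmlOper)\big)^{2(1-1/m)}$. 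For $\weightsPointVec\in\wideminset$ the two ends of this chain agree (by Theorem~\ref{theorem:LamdaMaxMinimalValue} and~\eqref{eq:TermWithSingularVectors}), so every inequality in it is tight. The proof of Lemma~\ref{lemma:Balance} exploited the AM--GM equality; here I would instead exploit $\big\|\prod_{k=1}^m M_k\big\|_2 = \prod_{k=1}^m \|M_k\|_2$.

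Next I would observe that each $M_k$ is rank one. Writing $\qq_k = \big(\prod_{i=k+1}^m\weights_i\big)^T\uu$ and $\rr_k = \prod_{j=1}^{k-1}\weights_j\vv$, we have $M_k = \qq_k\rr_k^T$ and $\|M_k\|_2 = \|\qq_k\|\,\|\rr_k\|$. Using that $\uu$ and $\vv$ are the top left/right singular vectors of $\optmlOper = \prod_j\weights_j$, a short computation gives the \emph{gluing} identity $\rr_i^T\qq_{i-1} = \vv^T\optmlOper^T\uu = \sigma_{\max}(\optmlOper) > 0$ for every $i$. Consequently the partial products telescope: $\prod_{k=1}^{j-1}M_k = \big(\sigma_{\max}(\optmlOper)\big)^{\,j-2}\,\qq_{j-1}\rr_1^T$ with $\rr_1 = \vv$, so that $\big\|\prod_{k=1}^{j-1}M_k\big\|_2 = \big(\sigma_{\max}(\optmlOper)\big)^{\,j-2}\|\qq_{j-1}\|$ and likewise $\big\|\prod_{k=1}^{j}M_k\big\|_2 = \big(\sigma_{\max}(\optmlOper)\big)^{\,j-1}\|\qq_{j}\|$; in particular all $\qq_j$ and $\rr_j$ are nonzero.

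Then I would unwind the sub-multiplicativity equality one factor at a time. From $\big\|\prod_{k=1}^m M_k\big\|_2 \le \|M_m\|_2\big\|\prod_{k=1}^{m-1}M_k\big\|_2 \le \cdots \le \prod_{k=1}^m\|M_k\|_2$ and the tightness of the whole chain, every step $\big\|\prod_{k=1}^j M_k\big\|_2 = \|M_j\|_2\big\|\prod_{k=1}^{j-1}M_k\big\|_2$ must hold with equality for $j = m,\ldots,2$. Substituting the telescoped norms and cancelling $\big(\sigma_{\max}(\optmlOper)\big)^{j-2}\|\qq_j\|$ gives $\|\rr_j\|\,\|\qq_{j-1}\| = \sigma_{\max}(\optmlOper)$. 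Since also $\rr_j^T\qq_{j-1} = \sigma_{\max}(\optmlOper)$, the Cauchy--Schwarz inequality $\rr_j^T\qq_{j-1} \le \|\rr_j\|\,\|\qq_{j-1}\|$ is saturated, so $\rr_j$ is a positive multiple of $\qq_{j-1}$, i.e.\ $\rr_j/\|\rr_j\| = \qq_{j-1}/\|\qq_{j-1}\|$. Re-indexing by $k = j-1$, this is exactly~\eqref{eq:DirectionMatching} for $k\in\{1,\ldots,m-1\}$; the endpoints $k=0$ (both sides equal $\vv$) and $k=m$ (both sides equal $\uu$) follow at once from $\optmlOper\vv = \sigma_{\max}(\optmlOper)\uu$ and $\optmlOper^T\uu = \sigma_{\max}(\optmlOper)\vv$.

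The conceptual content is light once the tight chain from Lemma~\ref{lemma:Balance} is in hand; the part that needs care is the bookkeeping — respecting the product-ordering convention (larger index on the left), checking that no factor in the telescoped products degenerates so that the step-by-step cancellations are legitimate, and tracking the normalization $\|\rr_1\| = \|\vv\| = 1$ so that the powers of $\sigma_{\max}(\optmlOper)$ balance on the two sides.
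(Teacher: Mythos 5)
Your proposal is correct, and it ends with the same mechanism as the paper's proof --- saturation of Cauchy--Schwarz applied to the gluing identity $\rr_{k+1}^T\qq_k=\vv^T\optmlOper^T\uu=\sigma_{\max}(\optmlOper)$ --- but it reaches that point by a different extraction of equality conditions from the same tight chain. The paper takes Lemma~\ref{lemma:Balance} as input, multiplies the balance identities over $k$ to get $\prod_{k=1}^{m-1}\norm{\rr_{k+1}}\norm{\qq_k}=(\sigma_{\max}(\optmlOper))^{m-1}$ in \eqref{eq:ProofDirectionMatching2}, and compares this with the product of the per-term Cauchy--Schwarz lower bounds $\norm{\rr_{k+1}}\norm{\qq_k}\geq\rr_{k+1}^T\qq_k=\sigma_{\max}(\optmlOper)$ in \eqref{eq:ProofDirectionMatching3}; a product of terms each at least $\sigma_{\max}(\optmlOper)$ that equals $(\sigma_{\max}(\optmlOper))^{m-1}$ forces every term to equal $\sigma_{\max}(\optmlOper)$ and every Cauchy--Schwarz to be tight. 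You instead bypass the conclusion of Lemma~\ref{lemma:Balance} and go back to the chain \eqref{eq:lambdaMaxBstarLowerBound}: where the paper exploits the tightness of the AM--GM step (that is Lemma~\ref{lemma:Balance}), you exploit the tightness of the \emph{sub-multiplicativity} step, decompose it one factor at a time, and use the rank-one telescoping of the factors $\qq_k\rr_k^T$ to obtain $\norm{\rr_{k+1}}\norm{\qq_k}=\sigma_{\max}(\optmlOper)$ individually \emph{before} invoking Cauchy--Schwarz. Both routes are sound and of comparable difficulty; yours is slightly longer in bookkeeping but is independent of Lemma~\ref{lemma:Balance}'s statement (relying only on the shared computation \eqref{eq:lambdaMaxBstar1}), and it surfaces explicitly the identity $\norm{\rr_{k+1}}\norm{\qq_k}=\sigma_{\max}(\optmlOper)$, which the paper only derives afterwards in \eqref{eq:CummProdSingularProof4}. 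Your treatment of the delicate points --- the product-ordering convention, the non-vanishing of the $\qq_j,\rr_j$ (which in fact follows immediately from $\rr_{j}^T\qq_{j-1}=\sigma_{\max}(\optmlOper)>0$), and the endpoint cases $k=0$ and $k=m$ --- is also correct.
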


\begin{proof}
	From Lemma \ref{lemma:Balance} we have
	\begin{equation} \label{eq:ProofDirectionMatching1}
		\prod_{k = 1}^{m} \Big\| {\Big( \prod_{i=k+1}^{m} \weights_i \Big)^T \uu\Big\|   \Big\| \vv^T \Big(\prod_{j=1}^{k-1} \weights_j \Big)^T }  \Big\| =  (\sigma_{\max} ( \optmlOper )) ^{m-1} .
	\end{equation}
	Recall that in our convention (see Section \ref{sec:MainResults}), for $ k = 1,m $ we have
	\begin{equation}
		\Big\| \vv^T \Big(\prod_{j=1}^{0} \weights_j \Big)^T \Big\| = \| \vv^T \| = 1, \qquad  \Big\|  \Big( \prod_{i=m+1}^{m} \weights_i \Big)^T \uu \Big\| = \| \uu \| = 1.
	\end{equation}
	Therefore, \eqref{eq:ProofDirectionMatching1} can be written as
	\begin{equation} \label{eq:ProofDirectionMatching2}
		\prod_{k = 1}^{m-1}  \Big\| \vv^T \Big(\prod_{j=1}^{k} \weights_j \Big)^T  \Big\| \Big\| \Big( \prod_{i=k+1}^{m} \weights_i \Big)^T \uu\Big\|  = ( \sigma_{\max} ( \optmlOper )) ^{m-1}.
	\end{equation}
	On the other hand, by the Cauchy–Schwarz inequality we have
	\begin{align} \label{eq:ProofDirectionMatching3}
		\prod_{k = 1}^{m-1}  \Big\| \vv^T \Big(\prod_{j=1}^{k} \weights_j \Big)^T  \Big\| \Big\| \Big( \prod_{i=k+1}^{m} \weights_i \Big)^T \uu\Big\| &\geq  
		\prod_{k = 1}^{m-1} \vv^T \Big(\prod_{j=1}^{k} \weights_j \Big)^T  \Big(  \prod_{i=k+1}^{m} \weights_i \Big)^T \uu \nonumber \\ 
		 & =\prod_{k = 1}^{m-1} \vv^T  \optmlOper ^T \uu \nonumber\\
		& = (\sigma_{\max} ( \optmlOper )) ^{(m-1)}.
	\end{align}
	From \eqref{eq:ProofDirectionMatching2} we have that the Cauchy–Schwartz inequalities are achieved with equality. Thus, for all \( k \in \{0,1,2,\ldots,m \} \)
	\begin{equation}
		\frac{1}{\Big\|  \Big( \prod_{i=k+1}^{m} \weights_i \Big)^T \uu \Big\|}  \Bigg( \prod_{i=k+1}^{m} \weights_i \Bigg)^T \uu
		= \frac{1}{\Big\| \prod_{j=1}^{k} \ \weights_j \ \vv \Big\|}
		\prod_{j=1}^{k} \ \weights_j \ \vv .
	\end{equation}
	
\end{proof}

Now we are ready to prove Theorem \ref{theorem:PartialProducts}. From Lemma \ref{lemma:DirectionMatching}, we have
\begin{equation} \label{eq:CummProdSingularProof1}
	\frac{1}{\Big\| \prod_{j=1}^{k} \ \weights_j \ \vv \Big\|} \prod_{j=1}^{k} \ \weights_j \ \vv
	= \frac{1}{\Big\|  \Big( \prod_{i=k+1}^{m} \weights_i \Big)^T \uu \Big\|}  \Bigg(   \prod_{i=k+1}^{m} \weights_i \Bigg)^T \uu.
\end{equation}
Multiplying by $ ( \prod_{j=1}^{k} \ \weights_j )^T $ from the left, \eqref{eq:CummProdSingularProof1} becomes
\begin{equation} \label{eq:CummProdSingularProof2}
	\frac{1}{\Big\| \prod_{j=1}^{k} \ \weights_j \ \vv \Big\|} \Bigg( \prod_{j=1}^{k} \ \weights_j \Bigg)^T
	\prod_{j=1}^{k} \ \weights_j \ \vv = \frac{1}{\Big\|  \Big(   \prod_{i=k+1}^{m} \weights_i \Big)^T \uu \Big\|}  \optmlOper ^T \uu.
\end{equation}
Note that $ \optmlOper^T \uu = \sigma_{\max}(\optmlOper) \vv $. Therefore,
\begin{equation} \label{eq:CummProdSingularProof3}
	\Bigg[\Bigg( \prod_{j=1}^{k} \ \weights_j \Bigg)^T \prod_{j=1}^{k} \ \weights_j \Bigg] \ \vv = 
	\frac{\Big\| \prod_{j=1}^{k} \ \weights_j \ \vv \Big\|}{\Big\|  \Big(   \prod_{i=k+1}^{m} \weights_i \Big)^T \uu \Big\|} \sigma_{\max} \big( \optmlOper \big) \ \vv.
\end{equation}
This shows that $ \vv $ is an eigenvector of $ \big( \prod_{j=1}^{k} \weights_j \big)^T \prod_{j=1}^{k} \weights_j $, \ie a singular vector of $ \prod_{j=1}^{k} \weights_j $. To compute the corresponding singular value, let us multiply this equation by $ \vv^T $ form the left to get the following result.
\begin{equation} \label{eq:CummProdSingularProof4}
	\Big\| \prod_{j=1}^{k} \weights_j \ \vv \Big\| \ \Big\| \Big( \prod_{i=k+1}^{m} \weights_i \Big)^T \uu \Big\| = \sigma_{\max} \big( \optmlOper \big).
\end{equation}
Recall from Lemma \ref{lemma:Balance} that $\| \uu^T \prod_{i=k+1}^{m} \weights_i \| = (\sigma_{\max} ( \optmlOper ))^{1-1/m} / \| \prod_{j=1}^{k-1} \weights_j \vv \|$. Substituting into \eqref{eq:CummProdSingularProof4}, we obtain that
\begin{equation} \label{eq:CummProdSingularProof5}
	\Big\| \prod_{j=1}^{k} \ \weights_j \ \vv \Big\| = (\sigma_{\max} \big( \optmlOper \big))^{\frac{1}{m}} \times \Big\| \prod_{j=1}^{k-1} \ \weights_j \ \vv \Big\|.
\end{equation}
By unwrapping this recursive formula with an initial condition for $k=0$ of $ \big\| \prod_{j=1}^{0} \weights_j \vv \big\| = \| \vv \| = 1 $, we get 
\begin{equation} \label{eq:CummProdSingularProof6}
	\Big\| \prod_{j=1}^{k} \weights_j  \vv \Big\| = \sigma_{\max}(\optmlOper)^{\frac{k}{m}}.
\end{equation}
The proof for the left singular vector and its corresponding singular value is the same.

Next, we prove the bound on the intermediate gain. By Theorem \ref{theorem:LamdaMaxMinimalValue} and Eq.~\eqref{eq:eigenvalueMatrixForm},
\begin{equation}\label{eq:SingularValueBound1}
\max_{\| \BB \|_{\rm F} = 1  } \sum_{l = 1}^{m}  \Big\| {\Big( \prod_{i=l+1}^{m} \weights_i \Big)^T \BB \Big(\prod_{j=1}^{l-1} \weights_j \Big)^T }  \Big\|_{\rm F}^2 = m \times (\sigma_{\max} (\optmlOper))^{2(1-\frac{1}{m})}.
\end{equation}
Now, for any \(k\), we have that
\begin{equation}\label{eq:SingularValueBound2}
\max_{\| \BB \|_{\rm F} = 1  } \sum_{l = 1}^{m}  \Big\| {\Big( \prod_{i=l+1}^{m} \weights_i \Big)^T \BB \Big(\prod_{j=1}^{l-1} \weights_j \Big)^T }  \Big\|_{\rm F}^2 \geq \max_{\| \BB \|_{\rm F} = 1  } \Big\| {\Big( \prod_{i=k+1}^{m} \weights_i \Big)^T \BB \Big(\prod_{j=1}^{k-1} \weights_j \Big)^T }  \Big\|_{\rm F}^2  .
\end{equation}
Furthermore, note that
\begin{align}\label{eq:SingularValueBound3}
	\max_{\| \BB \|_{\rm F} = 1  } \Big\| {\Big( \prod_{i=k+1}^{m} \weights_i \Big)^T \BB \Big(\prod_{j=1}^{k-1} \weights_j \Big)^T }  \Big\|_{\rm F}
	& = \max_{\| b \| = 1  } \Big\| \Big(\prod_{j=1}^{k-1} \weights_j \Big) \otimes  \Big( \prod_{i=k+1}^{m} \weights_i \Big)^T   \bb \Big\| \nonumber\\
	& = \sigma_{\max}  \left( \Big(\prod_{j=1}^{k-1} \weights_j \Big) \otimes  \Big( \prod_{i=k+1}^{m} \weights_i \Big)^T \right) \nonumber\\
	& = \sigma_{\max}\Bigg( \prod_{i=1}^{k-1} \weights_i  \Bigg) \times \sigma_{\max} \Bigg(  \prod_{i=k+1}^{m} \weights_i  \Bigg).
\end{align}
Therefore, this implies that
\begin{equation}\label{eq:SingularValueBound4}
\sigma_{\max}\Bigg( \prod_{i=1}^{k-1} \weights_i  \Bigg) \times \sigma_{\max} \Bigg( \prod_{i=k+1}^{m} \weights_i \Bigg) \leq \sqrt{m} \times (\sigma_{\max} ( \optmlOper)) ^{(1-\frac{1}{m})},
\end{equation}
or, equivalently, that
\begin{equation}\label{eq:SingularValueBound4a}
\sigma_{\max}\Bigg( \prod_{i=1}^{k-1} \weights_i  \Bigg)  \leq \frac{\sqrt{m} \times (\sigma_{\max} ( \optmlOper )) ^{(1-\frac{1}{m})}}{ \sigma_{\max} \big( \prod_{i=k+1}^{m} \weights_i \big)}.
\end{equation}
By the first part of Theorem \ref{theorem:PartialProducts} we have
\begin{equation}\label{eq:SingularValueBound5}
\sigma_{\max} \Bigg(  \prod_{i=k+1}^{m} \weights_i  \Bigg) \geq  \Bigg(  \prod_{i=k+1}^{m} \weights_i \uu \Bigg) = (\sigma_{\max} ( \optmlOper )) ^{(1-\frac{k}{m})}.
\end{equation}
Hence, we can further bound \eqref{eq:SingularValueBound4a} from above as 
\begin{equation}\label{eq:SingularValueBound6}
\sigma_{\max}\Bigg( \prod_{i=1}^{k-1} \weights_i  \Bigg) \leq \frac{\sqrt{m} \times (\sigma_{\max} ( \optmlOper )) ^{(1-\frac{1}{m})}	}{\sigma_{\max} \big(  \prod_{i=k+1}^{m} \weights_i  \big) } \leq \sqrt{m} \times (\sigma_{\max} ( \optmlOper )) ^{\frac{k-1}{m}}	.
\end{equation}
The proof of the other direction is similar.

\section{Proof of Theorem \ref{theorem:SingularVectorsAndValues}}\label{appendix:SingularVectorsAndValuesProof}
By Lemma \ref{lemma:DirectionMatching}
\begin{equation} \label{eq:SingularVectorsAndValues1}
	\frac{1}{\Big\|  \Big( \prod_{i=k+1}^{m} \weights_i \Big)^T \uu \Big\|}  \Bigg( \prod_{i=k+1}^{m} \weights_i \Bigg)^T \uu
	= \frac{1}{\Big\| \prod_{j=1}^{k} \weights_j  \vv \Big\|}
	\prod_{j=1}^{k} \weights_j \vv.
\end{equation}
Multiplying both sides by \( \weights_k^T \) from the left, we obtain
\begin{equation} \label{eq:SingularVectorsAndValues2}
	\frac{1}{\Big\|  \Big( \prod_{i=k+1}^{m} \weights_i \Big)^T \uu \Big\|}  \Bigg( \prod_{i=k}^{m} \weights_i \Bigg)^T \uu
	= \frac{1}{\Big\| \prod_{j=1}^{k}  \weights_j  \vv \Big\|}
	\bigg( \weights_k^T \weights_k \bigg) \prod_{j=1}^{k-1} \weights_j \vv.
\end{equation}
Using Lemma \ref{lemma:DirectionMatching} again for \( k-1 \) we have
\begin{equation} \label{eq:SingularVectorsAndValues3}
\Bigg( \prod_{i=k}^{m} \weights_i \Bigg)^T \uu
= \frac{\Big\|  \Big( \prod_{i=k}^{m} \weights_i \Big)^T \uu \Big\|}{\Big\| \prod_{j=1}^{k-1} \weights_j  \vv \Big\|}
\prod_{j=1}^{k-1} \weights_j \vv.
\end{equation}
Plugging this equation in \eqref{eq:SingularVectorsAndValues2} we get
\begin{equation} \label{eq:SingularVectorsAndValues4}
\frac{\Big\|  \Big( \prod_{i=k}^{m} \weights_i \Big)^T \uu \Big\|}{\Big\| \prod_{j=1}^{k-1} \weights_j  \vv \Big\| \Big\|  \Big( \prod_{i=k+1}^{m} \weights_i \Big)^T \uu \Big\| } \prod_{j=1}^{k-1} \weights_j \vv
= \frac{1}{\Big\| \prod_{j=1}^{k}  \weights_j  \vv \Big\|}
\bigg( \weights_k^T \weights_k \bigg) \prod_{j=1}^{k-1} \weights_j \vv.
\end{equation}
From Theorem \ref{theorem:PartialProducts}, we know that \( \| ( \prod_{i=k}^{m} \weights_i )^T \uu \| = (\sigma_{\max} ( \optmlOper ))^{(m-k+1)/m}  \), \( \| \prod_{j=1}^{k}  \weights_j  \vv \| = (\sigma_{\max} ( \optmlOper ))^{k/m} \), and \( \| \prod_{j=1}^{k-1} \weights_j  \vv \| \|  ( \prod_{i=k+1}^{m} \weights_i )^T \uu \| = (\sigma_{\max} ( \optmlOper ))^{1-1/m} \) . Therefore, \eqref{eq:SingularVectorsAndValues4} can be reduced to
\begin{equation} \label{eq:SingularVectorsAndValues5}
(\sigma_{\max} ( \optmlOper ))^{\frac{2}{m}} \times \rr_k = \big( \weights_k^T \weights_k \big) \rr_k.
\end{equation}
Hence, $ \rr_k $ is an eigenvector of $ \weights_k^T \weights_k $ with a corresponding eigenvalue of \( (\sigma_{\max} ( \optmlOper ))^{2/m} \). Namely, \( \rr_k/ \| \rr_k \| \) is a singular vector of $ \weights_k $ with a corresponding singular value of \( (\sigma_{\max}( \optmlOper))^{1/m} \). Using Lemma \ref{lemma:DirectionMatching} we have
\begin{equation} 
\frac{1}{ \| \rr_k \| } \weights_k \rr_k =(\sigma_{\max}( \optmlOper))^{\frac{1}{m}}\times \frac{1}{ \| \qq_k \| } \qq_k .
\end{equation}
From this equation we deduce that \( \bar{\rr}_k \) and \( \bar{\qq}_k \) are pair of singular vectors of \( \weights_k \), with a singular value of \( (\sigma_{\max}( \optmlOper))^{1/m} \). Note that the equality \( \bar{\rr}_{k+1} = \bar{\qq}_k \) is in fact the result of Lemma \ref{lemma:DirectionMatching}.

\section{Proof of Theorem \ref{theorem:SufficientCondition}}\label{appendix:SufficientConditionProof}
On the one hand, according to Theorem \ref{theorem:LamdaMaxMinimalValue}
\begin{equation}\label{eq:SufficientCondition1}
\min_{\tilde{\weightsPointVec} \in \minset } \lambda_{\max}(\Hess_{\tilde{\weightsPointVec}}) = 2m \times \sigma_{\max} \big( \optmlOper \big) ^{2(1-\frac{1}{m})}.
\end{equation}
On the other hand, given an arbitrary minimum point \( \weightsPointVec \in \minset \)
\begin{align}\label{eq:SufficientCondition2}
\min_{\tilde{\weightsPointVec} \in \minset } \lambda_{\max}(\Hess_{\tilde{\weightsPointVec}}) & \leq \lambda_{\max}(\Hess_{\weightsPointVec}) \nonumber \\
& = \max_{\BB \in \R^{d_y \times d_x} } 2 \sum_{k = 1}^{m}  \Big\| {\Big( \prod_{i=k+1}^{m} \weights_i \Big)^T \BB \Big(\prod_{j=1}^{k-1} \weights_j \Big)^T }  \Big\|_{\rm F}^2 \qquad \text{s.t.} \qquad \| \BB \|_{\rm F} = 1  \\
& \leq \max_{\BB_1,\ldots,\BB_m \in \R^{d_y \times d_x} } 2 \sum_{k = 1}^{m}  \Big\| {\Big( \prod_{i=k+1}^{m} \weights_i \Big)^T \BB_k \Big(\prod_{j=1}^{k-1} \weights_j \Big)^T }  \Big\|_{\rm F}^2 \quad \text{s.t.} \quad \| \BB_1 \|_{\rm F} = \cdots = \| \BB_m \|_{\rm F} = 1  \nonumber.
\end{align}
Here we obtain a separable optimization problem. Let us examine one term from the series
\begin{equation}\label{eq:SufficientCondition3}
\max_{\BB_k \in \R^{d_y \times d_x} } \Big\| {\Big( \prod_{i=k+1}^{m} \weights_i \Big)^T \BB_k \Big(\prod_{j=1}^{k-1} \weights_j \Big)^T }  \Big\|_{\rm F}^2 \qquad \text{s.t.} \qquad \| \BB_k \|_{\rm F} = 1 .
\end{equation}
In \eqref{eq:SingularValueBound3}, we saw that the value of \eqref{eq:SufficientCondition3} is
\begin{equation}\label{eq:SufficientCondition4}
\Big(\sigma_{\max}\Big( \prod_{i=k+1}^{m} \weights_i \Big)\Big)^2 \times \Big(\sigma_{\max}\Big( \prod_{i=1}^{k-1} \weights_i \Big)\Big)^2 \leq \prod_{i \neq k} (\sigma_{\max}(\weights_i))^2,
\end{equation}
where we used the sub-multiplicity property of the operator norm (the top singular value). If \( \sigma_{\max}(\weights_k) = (\sigma_{\max}( \optmlOper))^{1/m} \) for all \( k \), then
\begin{equation}\label{eq:SufficientCondition5}
\prod_{i \neq k} (\sigma_{\max}(\weights_i))^2 = \prod_{i \neq k} (\sigma_{\max} ( \optmlOper ))^{\frac{2}{m}} = (\sigma_{\max} ( \optmlOper ))^{2(1-\frac{1}{m})}.
\end{equation}
Thus, 
\begin{equation}\label{eq:SufficientCondition6}
\lambda_{\max}(\Hess_{\weightsPointVec}) = 2m \times (\sigma_{\max} ( \optmlOper )) ^{2(1-\frac{1}{m})}.
\end{equation}
Therefore \( \weightsPointVec \) is a flattest minimum.

\section{More Details About Our Experiments}\label{appendix:Experiments}
\subsection{Linear Networks}

To sample arbitrary global minima, we started with the canonical solution \eqref{eq:CanonicalSolution}, and multiplied the weight matrices by random matrices from the left and right, such that the the left matrix of one layer cancels out the right matrix of the next (thus keeping the end-to-end function unmodified). Specifically, let $ \{ \bAA_i \}_{i = 1}^{m-1} $ be Gaussian random matrices with i.i.d. entries, distributed $ \mathcal{N}(0,1) $. Then the weights for arbitrary solutions were generated as
\begin{equation}
\weights_m = \UU \SSS_m^{\frac{1}{m}} \bAA_{m-1}, \qquad  \weights_i = \bAA_{i}^{-1} \SSS_i^{\frac{1}{m}}\bAA_{i-1}, \qquad   \weights_1 = \bAA_{1}^{-1} \SSS_1^{\frac{1}{m}} \VV^T.
\end{equation}

To obtain flattest minima, we minimized $\lambda_{\max}( \Hess_{\weightsPointVec} )  $ w.r.t.~the weights, by taking random steps over the manifold of global minima $ \minset $, and greedily progressing towards a flattest solution. In detail, we randomly generated a set of matrices $ \{ \bAA^0_i \}_{i = 1}^{m} $ with i.i.d. normally distributed entries. We then set the initial weights of the network to be $  \weights^{0}_i = \bAA^{0}_i $, for all $ i \neq j $, and 
\begin{equation}
\weights^{0}_j =  \bigg( \prod_{i = j+1 }^{m} \bAA_i \bigg)^{-1} \optmlOper  \bigg( \prod_{i = 1}^{j-1} \bAA_i  \bigg)^{-1} ,
\end{equation}
where $ j $ was a random integer chosen uniformly over $ \{ 1,\ldots,m \} $. Next, we iteratively took small random steps over the manifold of global minima according to the following update rule.
\begin{align}
\weights^{t+1}_m & = \weights^{t}_m \big( \Identity	+\varepsilon_t \bAA^{t}_{m-1} \big),\nonumber\\
\weights^{t+1}_i & = \big(  \Identity + \varepsilon_t \bAA^{t}_{i} \big)^{-1} \weights_i^{t}  \big( \Identity +\varepsilon_t \bAA^{t}_{i-1} \big),\nonumber\\
\weights^{t+1}_1 & = \big(  \Identity + \varepsilon_t \bAA^{t}_{1} \big)^{-1} \weights^{t}_1,
\end{align}
where $ \varepsilon_t $ is the step size at the $ t $th iteration, and $ \{ \bAA^t_i \}_{i = 1}^{m} $ are again random matrices with i.i.d.~normally distributed entries. We continued to the next iteration only if the spectral norm of the Hessian decreased. Otherwise, we generated an additional set of direction matrices $ \{ \bAA^t_i \}_{i = 1}^{m} $ until we got a decrement. We stopped this process when the objective achieved its minimal value of $  2m \times (\sigma_{\max} ( \optmlOper )) ^{2(1-1/m)} $, up to a minor error.

\subsection{Nonlinear Networks}
The table below summarizes the parameters and the results for the methods we used in the nonlinear setting for Fig.~\ref{fig:intermediate_gain_nonlinear}.
\begin{table}[h!]
	\begin{center}
		\begin{tabular}{ |c|c|c| } 
			\hline
			& Method 1 & Method 2 \\
			\hline
			Optimization Algorithm & SGD & Adam \\ 
			Learning rate & $ 1/2$ & $ 3 \times 10^{-4}$ \\
			Other parameters & momentum = 0 & $ \beta_1 = 0.8, \ \beta_2 = 0.99 $ \\
			Batch size & $100$ & $100$ \\
			Train loss & $ 2.69\times10^{-2} \pm  2.53\times10^{-5}  $  & $ 2.74\times10^{-2} \pm  4.03\times10^{-5}   $ \\
			Validation loss & $  2.70\times10^{-2} \pm 1.68\times10^{-4} $  & $ 2.80\times10^{-2} \pm 1.94\times10^{-4} $ \\
			$ \lambda_{\max} $ & $  1.76\pm 9.43\times10^{-3} $  & $ 12.9 \pm 2.2 $ \\
			\hline
		\end{tabular}
		\caption{Summary of the two methods we used to train the network.}
		\label{table:methods}
	\end{center}
\end{table}

\vfill

\end{document}